\documentclass{article} 
\usepackage{iclr2027_conference,times}

\usepackage{amsmath,amsfonts,bm}

\def\eqref#1{equation~\ref{#1}}

\def\1{\bm{1}}

\DeclareMathAlphabet{\mathsfit}{\encodingdefault}{\sfdefault}{m}{sl}
\SetMathAlphabet{\mathsfit}{bold}{\encodingdefault}{\sfdefault}{bx}{n}

\newcommand{\normlone}{L^1}

\usepackage{hyperref, url, algorithm, amsthm, algpseudocode, bbm, graphicx, subcaption}

\newtheorem{theorem}{Theorem}
\newtheorem{assumption}{Assumption}

\makeatletter
\newcommand{\dgplabel}[2]{%
  \begingroup
  \def\@currentlabel{#2}%
  \label{#1}%
  \endgroup
}
\makeatother

\title{Conformal Prediction and Conditional Coverage for Tabular Foundation Models}

\author{Sungwoo Park\thanks{Equal contribution.}
\quad
Sunghee Park\footnotemark[1]
\quad
Won Chang\thanks{Corresponding author.} \\
Department of Statistics \\
Seoul National University \\
Seoul, Republic of Korea \\
\texttt{\{park.sw,psh2002,wonchang\}@snu.ac.kr}
}

\iclrfinalcopy
\fancypagestyle{preprint}{%
  \fancyhf{}
  \fancyfoot[L]{\footnotesize Preprint.}
  \fancyfoot[C]{\thepage}
  \renewcommand{\headrulewidth}{0pt}
  \renewcommand{\footrulewidth}{0pt}
}
\hypersetup{pdftitle={Conformal Prediction and Conditional Coverage for Tabular Foundation Models}, pdfauthor={Sungwoo Park, Sunghee Park, Won Chang}}
\begin{document}

\maketitle
\thispagestyle{preprint}

\begin{abstract}
Tabular foundation models (TFMs) provide predictive distributions for regression, but their prediction regions can exhibit undercoverage or overcoverage even when point predictions are accurate. We introduce C-USIM (Conditionally-Uniformized Score Integration Method), a lightweight application of highest predictive density split conformal prediction that accommodates multimodal predictions. Given calibration and test outputs, it requires no additional training or model inference. It provides finite-sample marginal validity under our assumptions. We bound conditional--marginal coverage gaps using distribution-estimation error and score discreteness, and examine coverage heterogeneity through percentile rank--score plots. Experiments with TabPFN and TabICL show improved marginal coverage accuracy and lower average conditional and group coverage errors. Under a fixed data budget, allocating more observations to calibration can reduce marginal coverage error despite less accurate point predictions.

\end{abstract}

\section{Introduction}
Tabular foundation models (TFMs) built on prior-data fitted networks (PFNs) approximate posterior predictive distributions (PPD) through pretraining on synthetic data and in-context learning from a labeled context \citep{Muller2024-zz}. Mismatch between pretraining and downstream distributions, together with approximation errors, can overestimate or underestimate these predictions despite accurate point estimates, motivating calibration and coverage assessment across inputs.

We introduce C-USIM (Conditionally-Uniformized Score Integration Method), a split conformal procedure that calibrates highest predictive density (HPD) regions from TFM outputs. HPD regions can comprise disjoint intervals for multimodal predictions. Uncalibrated plug-in HPD can under- or overcover when the predictive distribution is inaccurate. Following HPD-split \citep{izbickicdhpd}, C-USIM sets its density-rank threshold to an empirical quantile of held-out scores. Calibration requires no additional training, model inference, or separate density or score-correction model.

Under exchangeability, calibration guarantees finite-sample marginal coverage of at least the target but can leave systematic undercoverage in some covariate regions and overcoverage in others \citep{vovkconformal}. We analyze conditional--marginal coverage gaps through distribution-estimation error and score discreteness, using percentile rank--score plots to show coverage improvements after calibration and remaining variation across inputs.

Under a fixed label budget, we compare reserving labels for calibration with using all labels as context for uncalibrated plug-in HPD. TabPFN and TabICL experiments show improved marginal coverage accuracy, lower mean absolute conditional coverage error across synthetic mechanisms, and lower aggregate group coverage errors in most real-world comparisons. We further examine the training--calibration ratio within C-USIM and find that allocating more labels to calibration can improve coverage accuracy despite less accurate point predictions.

\begin{figure}[htbp]
\centering
\includegraphics[width=\linewidth]{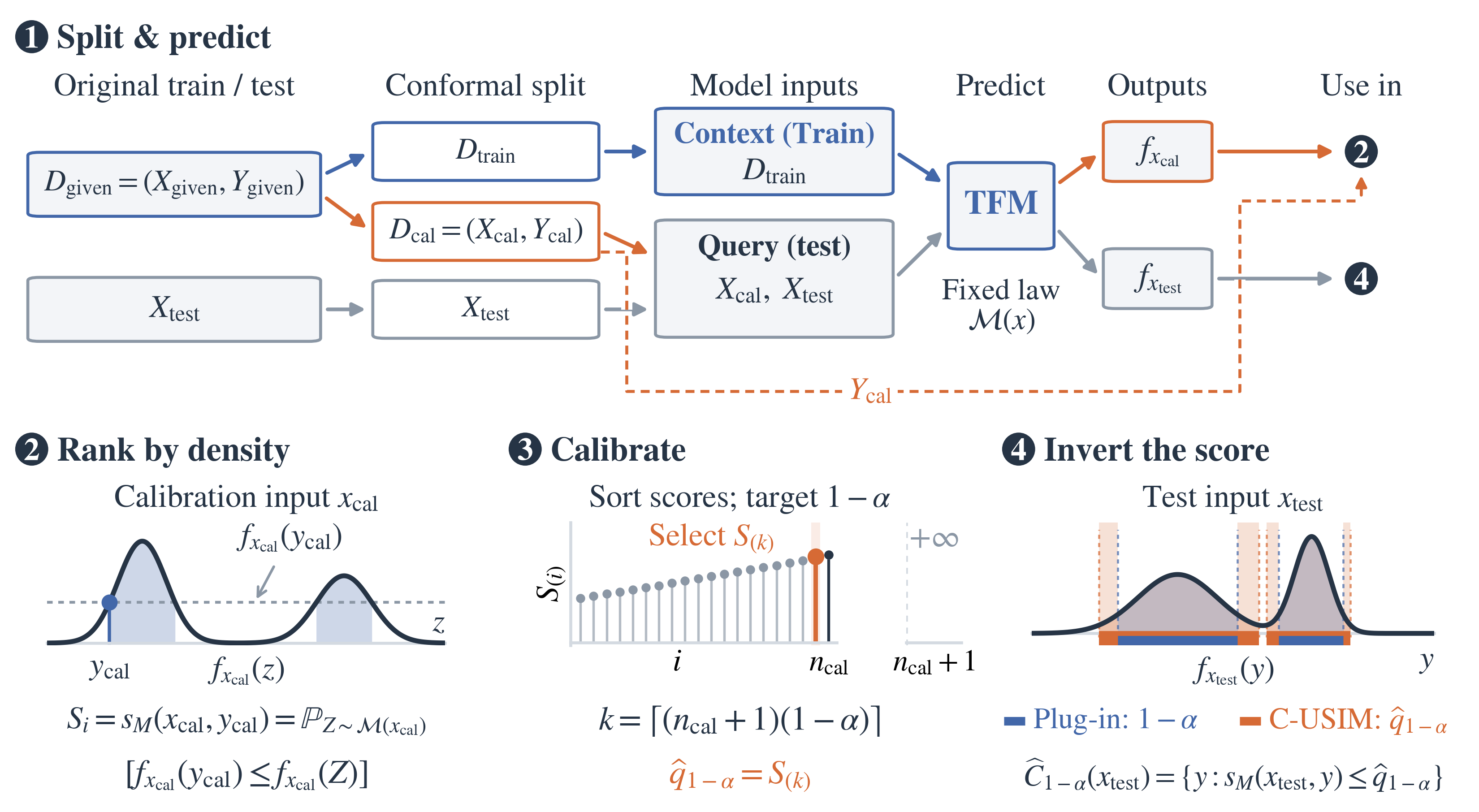}
\caption{C-USIM overview with $n_{\mathrm{cal}}$ calibration scores and target coverage $1-\alpha$. Using a pretrained TFM, the procedure predicts densities, computes density-rank scores, calibrates a threshold, and constructs prediction regions without fitting an additional model. See Section~\ref{sec:method} for details.}
\label{fig:c-usim-overview}
\end{figure}

We make three contributions.
\textbf{First,} we formulate C-USIM by applying HPD-split to TFM bin-probability and quantile outputs, calibrating potentially disjoint prediction regions without additional model fitting or inference.
\textbf{Second,} under the stated assumptions, we bound conditional--marginal coverage gaps using bin-probability estimation error and score discreteness, and relate a shared threshold to coverage at individual inputs through our percentile rank--score representation.
\textbf{Third,} our experiments under a fixed label budget provide evidence of coverage improvements over plug-in HPD and guidance on context--calibration allocation: more calibration observations can improve marginal coverage accuracy despite less accurate point predictions.

\section{Preliminaries}
\subsection{Prior-data Fitted Network}
Given a training dataset $D_{\mathrm {train}} = \{(x_1, y_1), \cdots, (x_{n_{\mathrm {train}}}, y_{n_{\mathrm {train}}})\}$, we estimate the conditional distribution of the response $Y$ at $X = x_{\mathrm{test}}$. In the Bayesian framework for supervised learning, let $\phi$ denote a data-generating mechanism in a space $\Phi$, sampled from the prior $p(\phi)$, with datasets $D$ generated independently conditional on $\phi$. The PPD $p(\cdot ~|~ x_{\mathrm {test}}, D_{\mathrm {train}})$ marginalizes over the posterior of $\phi$:

$$
p(y ~|~x, D)\propto \int_{\Phi} p(y~|~x, \phi)p(D~|~\phi)p(\phi)d\phi
$$

Prior-data Fitted Networks (PFNs) use Transformers pretrained on synthetic datasets to approximate this Bayesian inference, predicting the PPD from a training context. In univariate regression, where $y\in \mathbb{R}$, TabPFN learns probability masses over response bins using cross-entropy loss \citep{Grinsztajn2026-ei}, whereas TabICL estimates conditional quantiles using pinball loss \citep{Qu2026-rv}.

\subsection{Plug-in HPD Regions}
\label{sec:plug-in-hpd}

Let $f_x$ denote an estimated conditional density of $Y$ given $X=x$. For $0<\alpha<1$, the plug-in HPD region is defined by the highest-density-region construction \citep{hyndman1996computing}:

\begin{align*}
    \widehat{C}^{\mathrm{plug}}_{1-\alpha}(x)
    &:= \{y:f_x(y)\ge\lambda_\alpha(x)\}, \\
    \lambda_\alpha(x)
    &:= \sup\left\{t>0:\int_{\{z:f_x(z)\ge t\}}f_x(z)\,dz\ge 1-\alpha\right\}.
\end{align*}

The cutoff $\lambda_\alpha(x)$ retains at least $1-\alpha$ predictive mass at the largest possible density threshold. For piecewise-constant densities, whole positive-density levels are included in decreasing order until their cumulative mass first reaches or exceeds the target, retaining all boundary ties. These potentially disconnected regions require no calibration, and their predictive mass does not guarantee coverage under the true data distribution.

\subsection{Split Conformal Prediction}

Split conformal prediction (CP) guarantees finite-sample marginal coverage under exchangeability \citep{vovkconformal}. After fitting a model on training data, a held-out calibration set $\{(X_i,Y_i)\}_{i=1}^{n_{\mathrm{cal}}} $ and a nonconformity rule $s(x,y)$ yield scores $ \{S_i=s(X_i,Y_i)\}_{i=1}^{n_{\mathrm{cal}}}$. For target coverage $1-\alpha$, define $ \widehat{C}_{1-\alpha}(x) := \left\{ y:s(x,y)\le \widehat{q}_{1-\alpha} \right\} $ where \(\widehat{q}_{1-\alpha}\) is the \(\lceil(n_{\mathrm{cal}}+1)(1-\alpha)\rceil\)th order statistic of \(\{S_i\}_{i=1}^{n_{\mathrm{cal}}}\cup\{\infty\}\). However, nontrivial distribution-free guarantees of exact conditional coverage are generally impossible in finite samples without additional assumptions \citep{barber2021limits}.

For a real-valued random variable $W$ with CDF $F_W(w)$, the probability integral transform (PIT) $U:=F_W(W)$ is uniform when $F_W$ is continuous: $U\sim\mathrm{Unif}(0,1)$. A score is \textit{pivotal} if its conditional law is independent of the covariate. For $S=s(X,Y)$ with conditional CDF $F_{S\mid X=x}$, if $F_{S\mid X=x}$ is continuous for every $x$, the oracle conditional PIT correction $ \tilde{s}(x,y):=F_{S\mid X=x}\!\left(s(x,y)\right) $ satisfies $ \tilde{s}(X,Y)\mid X=x\sim\mathrm{Unif}(0,1). $ Since this conditional distribution is the same across covariates, the corrected score is pivotal. The region $\{y:\tilde{s}(x,y)\le 1-\alpha\}$ therefore has conditional coverage $1-\alpha$ for every $x$ \citep{laplante2026postprocessingconformalpredictionapproach}.

C-USIM computes the model-based score CDF from the TFM predictive distribution without fitting a separate conditional score model. Piecewise-constant predictive densities can also induce discrete scores whose deterministic PIT need not be uniform even under the predictive model. We therefore analyze distribution-estimation error and score discreteness.

\section{Related work}
Conformal prediction provides distribution-free marginal coverage guarantees \citep{gammerman1998learning, saunders1999transduction, vovkconformal}, with extensive work on regression \citep{shafer2008tutorial, lei2014distribution}. HPD-split uses density-based scores to construct adaptive prediction regions for heteroscedastic, skewed, or multimodal conditional distributions \citep{izbickicdhpd}. Its HPD-based score is also studied in the C-HDR formulation alongside a broader family of CDF-based conformity scores \citep{dheur2025unified}.

Conformalized quantile regression (CQR) calibrates residuals around estimated lower and upper conditional quantiles to form adaptive prediction intervals \citep{romano2019conformalizedquantileregression}. Conformalized histogram regression (CHR) calibrates a nested family of shortest contiguous intervals from conditional histograms using region probability mass as the conformity score; its interval construction can affect length and computational cost \citep{sesia2021conformalpredictionusingconditional}. When applied to the same TFM predictive distribution, CQR and CHR return contiguous intervals, which include the intervening low-density region whenever they cover separated modes. C-USIM instead permits disjoint high-density regions, allowing such gaps to be excluded and adapting prediction-set geometry to multimodal predictive densities. This flexibility allows C-USIM to handle broader families of distributions, a capability that is particularly important for TFMs.

JAPAN thresholds densities estimated by normalizing flows, while DSPS combines conditional normalizing flows with density-based ranking and conformal calibration \citep{english2026japan, luo2025density}. PIT-CP-MDN and PIT-CP-CNF fit additional models for conditional score distributions using mixture density networks and conditional normalizing flows, respectively, to seek approximate conditional coverage for different base scores \citep{laplante2026postprocessingconformalpredictionapproach}.

Finally, for tabular models, CP has been applied to various settings using absolute-error scores for regression and adaptive prediction-set scores for classification \citep{leeuwen2025conformal, costa2026high, kenfack2025towards}. We apply HPD-split and its PIT interpretation to pretrained TFM outputs directly, without fitting an additional density or score-correction model. Our contribution combines this lightweight construction with conditional-coverage analysis, percentile rank--score diagnostics, and fixed-budget context--calibration allocation experiments.

\section{C-USIM: Conditionally-Uniformized Score Integration Method}
\label{sec:method}

C-USIM constructs prediction regions by reconstructing densities from TFM outputs, computing density-rank scores, and calibrating a threshold on held-out observations.

\subsection{Setup and notation}
For a random variable $W$, we denote its cumulative distribution function and probability density function, when it exists, by $F_W$ and $f_W$, respectively.

In this paper, we consider $d$-dimensional covariates and a univariate response. We denote the training and calibration datasets by $D_{\mathrm{train}}$ and  $D_{\mathrm{cal}}$, consisting of $n_{\mathrm{train}}$ and $n_{\mathrm{cal}}$ covariate-response pairs in $\mathbb{R}^d \times \mathbb{R}$, respectively, drawn independently from a common distribution. We also denote a PFN model by $\mathcal{M}(\cdot, D_{\mathrm{train}}): \mathbb{R}^d \to \mathcal{P}(\mathbb{R})$, which uses $D_{\mathrm{train}}$ as context and maps each covariate $x$ to an estimator of the conditional distribution of $Y$ given $X=x$, assuming that this conditional distribution exists. Unless otherwise stated, we omit the second argument and write $\mathcal{M}(x)$ for $\mathcal{M}(x, D_{\mathrm{train}})$.

\begin{assumption}[Row-permutation equivariance]
\label{ass:row-per-equiv}
For fixed $D_{\mathrm{train}}$, let $\mathcal{M}(\mathbf{x})$ denote the ordered predictive distributions returned jointly for the calibration and test covariates $\mathbf{x}=(x_1,\ldots,x_m)$. We assume $ \mathcal{M}(\pi\mathbf{x})=\pi\mathcal{M}(\mathbf{x}) $ for every row permutation $\pi$.
\end{assumption}

\subsection{General Setting of C-USIM}
\label{sec:c-usim-construction}
For $x\in \mathbb{R}^d$, our goal is to construct $\widehat{C}_{1-\alpha}(x)$ from density predictions with the following properties:
\begin{itemize}
\item $\mathbb{P}_{(X, Y)} (Y\in \widehat{C}_{1-\alpha}(X)) \ge 1-\alpha$, i.e., marginal coverage achieves the target.
\item Conditional coverage $\mathbb{P}_{Y}(Y\in \widehat{C}_{1-\alpha}(x)|X = x)$ concentrates around $1-\alpha$.
\item The total length of the prediction region $\widehat{C}_{1-\alpha}(X)$ is as small as possible.
\end{itemize}

C-USIM uses the CP framework with the nonconformity score $s_{\mathcal{M}}$ obtained by applying the model-based PIT to the negated predictive density. Specifically, we take $s_0(x,y)=-f_{\mathcal{M}(x)}(y)$ as the base score and evaluate its CDF under $Z\sim\mathcal{M}(x)$ at $s_0(x,y)$. Since $s_0(x,Z)\le s_0(x,y)$ is equivalent to $f_{\mathcal{M}(x)}(Z)\ge f_{\mathcal{M}(x)}(y)$, this gives
\begin{align*}
    s_{\mathcal{M}}(x, y) := \mathbb{P}_{Z\sim \mathcal{M}(x)}(f_{\mathcal{M}(x)} (y) \le f_{\mathcal{M}(x)}(Z))
\end{align*}

\begin{algorithm}[htbp]
\caption{C-USIM prediction-set construction}
\label{alg:c-usim}
\begin{algorithmic}[1]
\Require Pretrained TFM $\mathcal{M}$, labeled data $D_{\mathrm{given}}$,
calibration size $n_{\mathrm{cal}}$, test covariates $\mathbf{x}_{\mathrm{test}}$,
and miscoverage level $\alpha\in(0,1)$.
\Ensure Prediction sets $\widehat{C}_{1-\alpha}(x)$ for each test covariate $x$.
\State Split $D_{\mathrm{given}}=D_{\mathrm{train}}\,\dot\cup\,D_{\mathrm{cal}}$
with $|D_{\mathrm{cal}}|=n_{\mathrm{cal}}$.
\State Use $D_{\mathrm{train}}$ as labeled context for $\mathcal{M}$.
\State Query the calibration and test covariates jointly, as in Assumption~\ref{ass:row-per-equiv}.
\State Construct densities $f_x:=f_{\mathcal{M}(x)}$ from the outputs (Appendix~\ref{app:density-estimation}).
\For{each $(X_i,Y_i)\in D_{\mathrm{cal}}$}
    \State $S_i\gets\mathbb{P}_{Z\sim\mathcal{M}(X_i)}\!\left(f_{X_i}(Y_i)\le f_{X_i}(Z)\right)$.
\EndFor
\State $k\gets\lceil(n_{\mathrm{cal}}+1)(1-\alpha)\rceil$.
\State $\widehat{q}_{1-\alpha}\gets$ the $k$-th smallest value in
$\{S_i\}_{i=1}^{n_{\mathrm{cal}}}\cup\{\infty\}$.
\State \Return $\widehat{C}_{1-\alpha}(x)=\{y:s_{\mathcal{M}}(x,y)\le\widehat{q}_{1-\alpha}\}$
for each $x\in\mathbf{x}_{\mathrm{test}}$.
\end{algorithmic}
\end{algorithm}

Algorithm~\ref{alg:c-usim} gives the complete prediction-set construction. Density-level ordering, cumulative probability masses, and an empirical calibration quantile determine the regions without additional model fitting or inference.

\paragraph{Models used in this work} For all analyses and subsequent experiments, we use TabPFN-v3\footnote{\url{https://github.com/priorlabs/tabpfn}} and TabICL-regressor-v2\footnote{\url{https://github.com/soda-inria/tabicl}} with the default inference settings of their respective implementations. Hereafter, we refer to these models simply as TabPFN and TabICL, respectively. For both models, we assume the permutation-equivariance condition in Assumption~\ref{ass:row-per-equiv} for calibration and test rows, which is natural for their Transformer architectures with suitably masked test attention.

\section{Theoretical Analysis of Conditional Coverage for C-USIM}
In this section and its proofs (Appendix~\ref{proof}), we use a stronger assumption than Assumption~\ref{ass:row-per-equiv}.

\begin{assumption}[Test-set invariance]
\label{ass:preserve-indep}
For fixed $D_{\mathrm{train}}$, consider any two test-covariate datasets $\mathbf{x}_1=(x_{11},\ldots,x_{1m})$ and $\mathbf{x}_2=(x_{21},\ldots,x_{2n})$. For any indices $p$ and $q$ satisfying $x_{1p}=x_{2q}$, we assume that
\begin{align*}
\mathcal{M}(x_{1p}, D_{\mathrm{train}}; \mathbf{x}_1)
=
\mathcal{M}(x_{2q}, D_{\mathrm{train}}; \mathbf{x}_2),
\end{align*}
where $\mathcal{M}(x, D_{\mathrm{train}}; \mathbf{x})$ denotes the response distribution returned for $x$ when prediction is performed on the test dataset $\mathbf{x}$.
\end{assumption}

We assume that the calibration and test observations are i.i.d.; under Assumption~\ref{ass:preserve-indep}, their scores are therefore i.i.d. evaluations of a fixed scoring rule.


\subsection{Conditional Coverage Gap Bound}
\label{sec:conditional-gap}
The conditional coverage gap of conformal prediction at $x$, with respect to the scoring rule $s$, is defined as the equation below \citep{laplante2026postprocessingconformalpredictionapproach}:
\begin{align*}
    \Delta(x) := \sup_{\alpha \in (0, 1)} \left|\mathbb{P}\left(Y\in \widehat{C}_{1-\alpha}(x)~ |~ X =x\right) - \mathbb{P}\left(Y\in \widehat{C}_{1-\alpha}(X)\right)\right|
\end{align*}

We proved that this can be bounded in terms of the discrete $\normlone$ error of the bin probability vectors.

\begin{theorem}[Conditional coverage gap bound for the PIT-transformed score]
\label{thm:pitl1}
Fix $x\in\mathbb{R}^d$, and let
\begin{align*}
    -\infty = \hat{y}_0^{(x)} < \hat{y}_1^{(x)} < \cdots < \hat{y}_{r+1}^{(x)}
    = \infty.
\end{align*}
Suppose that the nonconformity score $s_{\mathcal{M}}(x,\cdot)$ is constant on each interval $[\hat{y}_{i-1}^{(x)},\hat{y}_i^{(x)})$, with constant value $S_i^{(x)}$, and that these values are distinct across intervals. Let
\begin{align*}
    \hat p_i^{(x)} := \mathbb{P}_{Z_x\sim\mathcal{M}(x)} \left(Z_x\in [\hat{y}_{i-1}^{(x)},\hat{y}_i^{(x)}) \right), \quad \tilde p_i^{(x)}
:=
\mathbb{P}
\left(
Y\in
[\hat{y}_{i-1}^{(x)},\hat{y}_i^{(x)})
\mid X=x
\right),
\end{align*}
and define
\begin{align*}
B(x)
:=
\frac{1}{2}
\left\|
\tilde p^{(x)}-\hat p^{(x)}
\right\|_1
+
\max_{1\le i\le r+1}
\hat p_i^{(x)}.
\end{align*}
Here, $\tilde{p}^{(x)} = (\tilde{p}_1^{(x)}, \tilde{p}_2^{(x)}, \dots, \tilde{p}_{r+1}^{(x)}), ~ \hat{p}^{(x)} = (\hat{p}_1^{(x)}, \hat{p}_2^{(x)}, \dots, \hat{p}_{r+1}^{(x)})\in \mathbb R^{r+1}.$

Let $\tilde s$ be the PIT-transformed scoring rule induced by $s_{\mathcal{M}}$, and let $\Delta_{\tilde s}(x)$ denote the conditional coverage gap of conformal prediction constructed with $\tilde s$. Then,
\begin{align*}
    \Delta_{\tilde s}(x)
    \le
    \mathbb{E}_{X}\!\left[B(X)\right] + B(x).
\end{align*}
\end{theorem}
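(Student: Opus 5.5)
The plan is to read $\tilde s$ as the model-based PIT score $s_{\mathcal{M}}$ from Section~\ref{sec:c-usim-construction}: the CDF of the base score $-f_{\mathcal{M}(x)}$ under $Z\sim\mathcal{M}(x)$. I then compare, uniformly in the threshold $q$, the true conditional score CDF with the identity map $q\mapsto q$. Since the conformal threshold is shared across inputs, both the conditional and the marginal coverage become averages of this CDF over the law of $\widehat q_{1-\alpha}$. The gap therefore splits into two pointwise deviation terms, one at $x$ and one averaged over $X$.

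\textbf{Step 1 (structure of the score at fixed $x$).} The score $s_{\mathcal{M}}(x,\cdot)$ is constant on each interval $I_i:=[\hat y_{i-1}^{(x)},\hat y_i^{(x)})$. The value $S_i^{(x)}$ is the $\mathcal{M}(x)$-mass of the intervals whose density is at least that of $I_i$. Because the values are distinct, ordering the intervals by decreasing $S_i^{(x)}$ makes the $S_i^{(x)}$ exactly the partial sums of the $\hat p_j^{(x)}$ in that order. For $q\in\mathbb{R}$, put $A_q:=\{i: S_i^{(x)}\le q\}$, so that $\{y: s_{\mathcal{M}}(x,y)\le q\}=\bigcup_{i\in A_q} I_i$. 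Define
\begin{align*}
G_x(q):=\sum_{i\in A_q}\hat p_i^{(x)}, \qquad H_x(q):=\sum_{i\in A_q}\tilde p_i^{(x)}.
\end{align*}
These are the model and true conditional CDFs of the score at $x$.

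\textbf{Step 2 (two uniform deviation bounds).} First I bound the discretization error. $G_x(q)$ equals the largest partial sum that is at most $q$, or $0$ if there is none; for $q\ge1$, $G_x(q)=1$. Hence
\[
0\le q-G_x(q)\le \max_i \hat p_i^{(x)}
\]
for $q\in[0,1]$, since consecutive partial sums differ by a single $\hat p_i^{(x)}$. Second, I bound the estimation error: $|H_x(q)-G_x(q)|\le \tfrac12\|\tilde p^{(x)}-\hat p^{(x)}\|_1$, because $A_q$ indexes an event and the total variation distance dominates the difference of probabilities of any event. Writing $\bar q:=\min(q,1)$, the two bounds combine to $|H_x(q)-\bar q|\le B(x)$ for all $q\in[0,1]\cup\{\infty\}$. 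For $q=\infty$ both sides are trivial, since $H_x(\infty)=1=\bar q$.

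\textbf{Step 3 (averaging over the threshold).} By Assumption~\ref{ass:preserve-indep} and the i.i.d.\ assumption, $\widehat q:=\widehat q_{1-\alpha}$ is a function of the calibration scores only. It is therefore independent of the test pair, and the same scoring rule is used at every input. Consequently, for each $\alpha$,
\begin{align*}
\mathbb{P}(Y\in\widehat C_{1-\alpha}(x)\mid X=x)=\mathbb{E}[H_x(\widehat q)], \qquad \mathbb{P}(Y\in\widehat C_{1-\alpha}(X))=\mathbb{E}\big[H_X(\widehat q)\big],
\end{align*}
where in the second expression $X$ is independent of $\widehat q$. Inserting $\bar q$ and applying the triangle inequality gives
\[
\big|\mathbb{E}[H_x(\widehat q)]-\mathbb{E}[H_X(\widehat q)]\big| \le \mathbb{E}|H_x(\widehat q)-\bar q|+\mathbb{E}|H_X(\widehat q)-\bar q| \le B(x)+\mathbb{E}_X[B(X)].
\]
This bound does not depend on $\alpha$, so taking the supremum over $\alpha\in(0,1)$ yields the theorem.

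\textbf{Main obstacle.} The delicate step is Step 2. It relies on the identification of each $S_i^{(x)}$ with a partial sum of $\hat p^{(x)}$, which requires care about the tie convention ``$\le$'' in the definition of $s_{\mathcal{M}}$ and about the assumption that distinct intervals carry distinct scores. I also need to make sure the thresholds $\widehat q\in[0,1]\cup\{\infty\}$ are handled uniformly. If the paper's $\tilde s$ is instead meant as a further transform of $s_{\mathcal{M}}$, I would first show that it is a monotone reparametrization inducing the same nested regions, and then reuse the same argument.
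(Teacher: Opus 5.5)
Your argument is correct and is essentially the paper's proof: both compare the conditional and marginal score laws with the uniform CDF, bound the deviation at $x$ by the total-variation term plus the largest atom $\max_i \hat p_i^{(x)}$, and average over $X$; the paper does the last steps through the cited inequality $\Delta_{\tilde s}(x)\le d_{\mathrm{KS}}(F_{\tilde s\mid X=x},F_{\tilde s})$ and convexity of the KS distance, which you re-derive directly by writing both coverages as expectations over the random threshold $\widehat q$. One wording slip: the $S_i^{(x)}$ are partial sums of $\hat p^{(x)}$ when the intervals are ordered by decreasing density, i.e.\ increasing $S_i^{(x)}$, not decreasing $S_i^{(x)}$.
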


This result characterizes a trade-off between model complexity and the performance of CP with a PIT-corrected score. As $r$ increases, the partition becomes finer and can reduce the discretization term $\max_{1\le i\le r+1}\hat p_i^{(x)}$. However, estimating the resulting higher-dimensional probability vector may increase the $L^1$ error term. Conversely, using a smaller $r$ can yield a more stable probability estimate but incurs a larger discretization error.

Since this analysis yields a global bound for $\alpha\in(0, 1)$, it does not characterize the distribution of conditional coverage over $X$. To extend the discussion, in the following section, we use the percentile rank--score plot to visualize and examine the distribution of conditional coverage across inputs empirically.

\subsection{Conditional Coverage Through Percentile Rank--Score Plots}
\label{sec:percentile-score}

\begin{assumption}[Continuous score distribution]
\label{ass:conti-dist}
For the results in this section, we assume that the C-USIM score $s_{\mathcal M}(x,Y^{(x)})$ has a continuous distribution, where $Y^{(x)}\sim Y\mid X=x$ for every covariate $x$.
\end{assumption}

Let $ T(x, y) := s_{\mathcal M}(x, y),  P(x, y) := F_{s_\mathcal M (X, Y) |X = x}(T(x, y))$ denote the score and its conditional percentile rank, respectively. Let $Z$ be a conditionally independent copy of $Y$ given $X$, i.e., $(Z\mid X = x ) \overset{d}{=} (Y\mid X=x)$ and $Z\perp\!\!\!\perp Y\mid X$.

Under Assumption~\ref{ass:conti-dist}, $P(x, y)= \mathbb{P}(s_\mathcal M (x, Z) < s_\mathcal M (x, y) ~|~ X = x) = \mathbb{P}(Z\in \hat{C}_{T(x, y)}(x) | X = x)$. Hence, $P(x,y)$ can be interpreted as the true conditional coverage obtained when $T(x,y)$ is used as the score threshold. Moreover, by PIT, $P(X, Y) ~|~ X =x \sim \operatorname{Unif}(0,1)$ and consequently $P(X, Y)\sim\operatorname{Unif}(0,1)$.

For a fixed score threshold $t$, define
\begin{align*}
P_t(x) := F_{s_{\mathcal M}(X,Y)\mid X = x}(t) = \mathbb P\left(
s_{\mathcal M}(X,Y)\le t \mid X=x
\right) =
\mathbb P\left(
Y\in\widehat C_t(X)
\mid X = x
\right).
\end{align*}
Thus, $P_t(x)$ is the conditional coverage at covariate vector $x$ obtained by applying $t$ as the score threshold. Thus, $P_t (X)$ describes the distribution of conditional coverage associated with $\widehat C_t$.

In C-USIM with target coverage $1-\alpha$, let $\{T_i\}_{i=1}^{n_{\mathrm {cal}} + 1}$ denote the calibration scores, where $T_i = T(X_i, Y_i)$ for $i=1,\ldots,n_{\mathrm {cal}}$ and $T_{n_{\mathrm {cal}}+1}=\infty$. The calibration threshold $\hat{\tau}$ is then set as
\begin{align*}
\hat{\tau}:=T_{(k)},\qquad
k=\left\lceil (n_{cal}+1)(1-\alpha)\right\rceil.
\end{align*}

This is the $k$-th order statistic of $\{T_i\}_{i=1}^{n_{\mathrm {cal}} + 1}$. Then, the distribution of C-USIM's conditional coverage $C$ can be written as $C = P_{\hat{\tau}}(X)$.

Under Assumptions~\ref{ass:preserve-indep} and~\ref{ass:conti-dist}, if $n_{\mathrm {cal}} \rightarrow \infty$, the threshold $\hat{\tau}$ converges in probability to $q_{1-\alpha}$, which is the $(1-\alpha)$ quantile of the marginal score distribution, and $P_{\hat{\tau}}(X)$ converges in distribution to $P_{q_{1-\alpha}}(X)$. The ideal case for C-USIM is as follows.

\begin{theorem} [Conditional coverage distribution in the ideal case] \label{thm:idealcc}
Under Assumptions~\ref{ass:preserve-indep} and~\ref{ass:conti-dist}, suppose that $\alpha > (n_{\mathrm {cal}}+1)^{-1}$ and $T(X, Y) = f(P(X, Y))$ for some strictly increasing function $f$ almost surely. Then C-USIM's conditional coverage $C$ follows $\operatorname{Beta}(k, n_{\mathrm {cal}}+1-k)$.
\end{theorem}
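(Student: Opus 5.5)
The plan is to reduce the problem to the classical fact that order statistics of i.i.d.\ uniforms are Beta distributed. The key observation is that when $T(X,Y)=f(P(X,Y))$ with $f$ strictly increasing, the score threshold and the percentile rank are monotonically linked, so the conditional coverage $P_{\hat\tau}(x)$ should not depend on $x$ at all.

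\textbf{Step 1: coverage depends only on the threshold.} Fix $t$ in the range of $f$. By Assumption~\ref{ass:conti-dist}, $P(X,Y)\mid X=x\sim\operatorname{Unif}(0,1)$ for every $x$. Since $f$ is strictly increasing, almost surely
\begin{align*}
\{T(X,Y)\le t\}=\{P(X,Y)\le f^{-1}(t)\}.
\end{align*}
For thresholds outside the range of $f$, use the generalized inverse. Hence
\begin{align*}
P_t(x)=\mathbb P\bigl(P(X,Y)\le f^{-1}(t)\mid X=x\bigr)=f^{-1}(t)
\end{align*}
for ($P_X$-almost) every $x$. So $P_t(\cdot)$ is constant in $x$, namely $P_t(x)=g(t)$ with $g=f^{-1}$ nondecreasing.

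\textbf{Step 2: the threshold is a uniform order statistic.} Under Assumption~\ref{ass:preserve-indep} and the i.i.d.\ calibration/test assumption, the scores $T_i=T(X_i,Y_i)$ are i.i.d. Moreover $U_i:=P(X_i,Y_i)$ are i.i.d.\ $\operatorname{Unif}(0,1)$ and are independent of the test point, with $T_i=f(U_i)$ almost surely. Strict monotonicity preserves order, so $T_{(j)}=f(U_{(j)})$ for $j\le n_{\mathrm{cal}}$. The condition $\alpha>(n_{\mathrm{cal}}+1)^{-1}$ gives $k=\lceil(n_{\mathrm{cal}}+1)(1-\alpha)\rceil\le n_{\mathrm{cal}}$, so $\hat\tau=T_{(k)}$ is a genuine calibration score and not the appended $\infty$. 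This is exactly where that hypothesis enters.

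\textbf{Step 3: conclude.} Combining the two steps,
\begin{align*}
C=P_{\hat\tau}(X)=g\bigl(f(U_{(k)})\bigr)=U_{(k)}.
\end{align*}
The $k$-th order statistic of $n_{\mathrm{cal}}$ i.i.d.\ uniforms follows $\operatorname{Beta}(k,n_{\mathrm{cal}}+1-k)$. One can verify this by computing $\mathbb P(U_{(k)}\le u)$ as a binomial tail and differentiating, or simply cite it as standard.

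\textbf{Main obstacle.} The delicate step is Step 1, specifically the identity $g(f(u))=u$ when $f$ is not surjective or not continuous. Jumps of $f$ are harmless because we only evaluate at points $f(u)$ in the range of $f$. There we have
\begin{align*}
\{f(U)\le f(u)\}=\{U\le u\},
\end{align*}
so $P_{f(u)}(x)=u$ exactly.

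A second point is that "almost surely" must hold conditionally on $X=x$ for $P_X$-almost every $x$. This ensures the identity holds for the test covariate almost surely, which suffices for a statement in distribution.
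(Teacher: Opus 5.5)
Your proposal is correct and follows essentially the same route as the paper. The paper identifies $f^{-1}$ with the conditional score CDF $F_{T\mid X=x}$, notes that this is therefore the marginal CDF $F_T$, and concludes that $C=F_T(\hat\tau)=F_T(T_{(k)})$ is the $k$-th order statistic of $n_{\mathrm{cal}}$ i.i.d.\ uniforms; this matches your $g(f(U_{(k)}))=U_{(k)}$ argument. Your handling of possible jumps in $f$, and of exactly where $\alpha>(n_{\mathrm{cal}}+1)^{-1}$ is needed, is somewhat more explicit than the paper's.
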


For each test covariate $X_i$, consider the percentile rank--score curve $(P(X_i,y), T(X_i,y))$. For a given threshold $t$, $P_t(X_i)$ is the horizontal coordinate at which this curve intersects the horizontal line $T=t$. More generally, $P_t(X)$ describes the conditional-coverage distribution at threshold $t$. Figure~\ref{fig:TabPFNPLotCDExample} illustrates these percentile rank--score plots (a) and the resulting conditional-coverage distributions (b) in an example setting.

\begin{figure}[htbp]
\centering
\includegraphics[width=.96\linewidth]{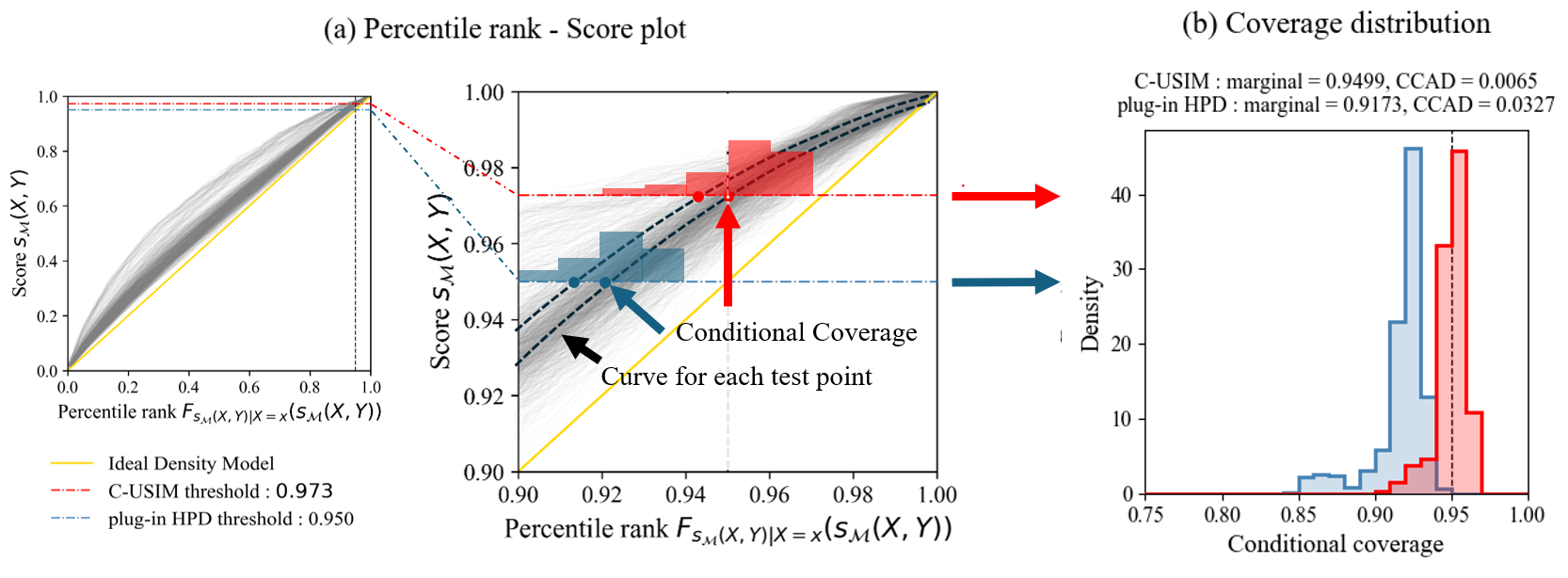}
\caption{ Percentile rank--score plots over test covariates (plot (a)), obtained using TabPFN on the sine with shifted exponential noise function (\ref{dgp:1d-1}). Plot (b) approximates $P_t(X_i)$ at the nominal score threshold $t=1-\alpha$ (blue) and the ideal C-USIM threshold $t=q_{1-\alpha}$ (red), which is estimated by the empirical $(1-\alpha)$-quantile of the pooled Monte Carlo scores. The histogram summarizes interpolated estimates of $P_t(X_1),\dots,P_t(X_{n_{test}})$, approximating the distribution of $P_t(X)$. See Appendices~\ref{app:synthetic-dgps} and~\ref{app:percentile-score} for details. }
\label{fig:TabPFNPLotCDExample}
\end{figure}

The diagonal $T=P$ indicates agreement between the score and its conditional percentile rank given $X=x$. At the nominal threshold, curves above (below) the diagonal indicate undercoverage (or overcoverage). For fixed predictive distributions, the calibration shifts the horizontal line while leaving the curves unchanged. In this example, C-USIM moves this line toward $q_{1-\alpha}\approx 0.973$. If the model satisfies Assumption~\ref{ass:preserve-indep}, this is the point to which its calibrated threshold converges as the calibration sample size increases. Meanwhile, plug-in HPD uses the fixed score threshold $0.95$. Figure~\ref{fig:TabPFNPLotCDExample}(a) shows the intersections, and Figure~\ref{fig:TabPFNPLotCDExample}(b) summarizes their horizontal coordinates ($P_{0.973}(X_i)$ for ideal C-USIM and $P_{0.95}(X_i)$ for plug-in HPD).

This representation also explains the roles of training and calibration data. Increasing the training sample aims to improve the predictive distributions and bring the curves closer to $T=P$, whereas more calibration data can stabilize the estimated threshold. For TabPFN and TabICL, we observed diminishing gains beyond a certain training size. In this regime, reallocating observations from training to calibration improved coverage more than further enlarging the training set.  More illustrations of percentile rank--score plots and conditional coverage distributions using TabPFN and TabICL are attached in Appendix \ref{app:percentile-score}. Section~\ref{sec:split-ratio} examines this trade-off under a fixed label budget.

\section{Experiments}
\label{sec:experiments}
We compare C-USIM with plug-in HPD on synthetic and real-world regression tasks, evaluating coverage accuracy and prediction-set length.

\paragraph{Calibrating method.} Throughout all experiments, we obtain distribution estimates for the calibration and test covariates by providing the row-merged dataset $D_{\mathrm{cal}} \dot\cup D_{\mathrm{test}}$, following Algorithm~\ref{alg:c-usim}. Because TabPFN and TabICL produce outputs in different formats, we use model-specific reconstruction rules to convert them into the finite piecewise-uniform conditional densities described in Appendix~\ref{app:density-estimation}. For the plug-in HPD baseline, we include the entire boundary density level required to attain at least \(95\%\) model mass, following Section~\ref{sec:plug-in-hpd}. In contrast, C-USIM uses the exact score sublevel set determined by its calibration order statistic, following Algorithm~\ref{alg:c-usim}. Appendix~\ref{app:settings} provides the common settings, random seeds, and grouping scheme.

\paragraph{Evaluation measures.} We report marginal coverage and total prediction-set length alongside conditional coverage absolute deviation (CCAD) for synthetic data and CEC-X for real data. For synthetic data with known true conditional densities, CCAD measures mean absolute conditional coverage error across inputs. For real data, where these densities are unknown, CEC-X averages absolute empirical coverage errors across covariate-defined groups, weighted by their test sizes. These measures correspond to the three goals in Section~\ref{sec:c-usim-construction}: marginal validity, conditional coverage close to the target, and small prediction sets. Appendix~\ref{app:evaluation-measures} provides the formal definitions for these.

\paragraph{Fixed-context comparison.} Our main comparison uses a total budget of 1,536 labels: plug-in HPD uses all labels as context, while C-USIM uses 512 for context and 1,024 for calibration. To examine the effect of calibrating the cutoff with the predictive densities held fixed, we also compare C-USIM with plug-in HPD using the same 512-observation context. Only C-USIM uses the additional 1,024 calibration responses in this comparison. We report this fixed-context comparison as a supplementary analysis in Appendix~\ref{app:plugin512}, since our primary comparison concerns context--calibration allocation under a common total label budget.

\begin{figure}[t!]
\centering
\includegraphics[width=\linewidth]{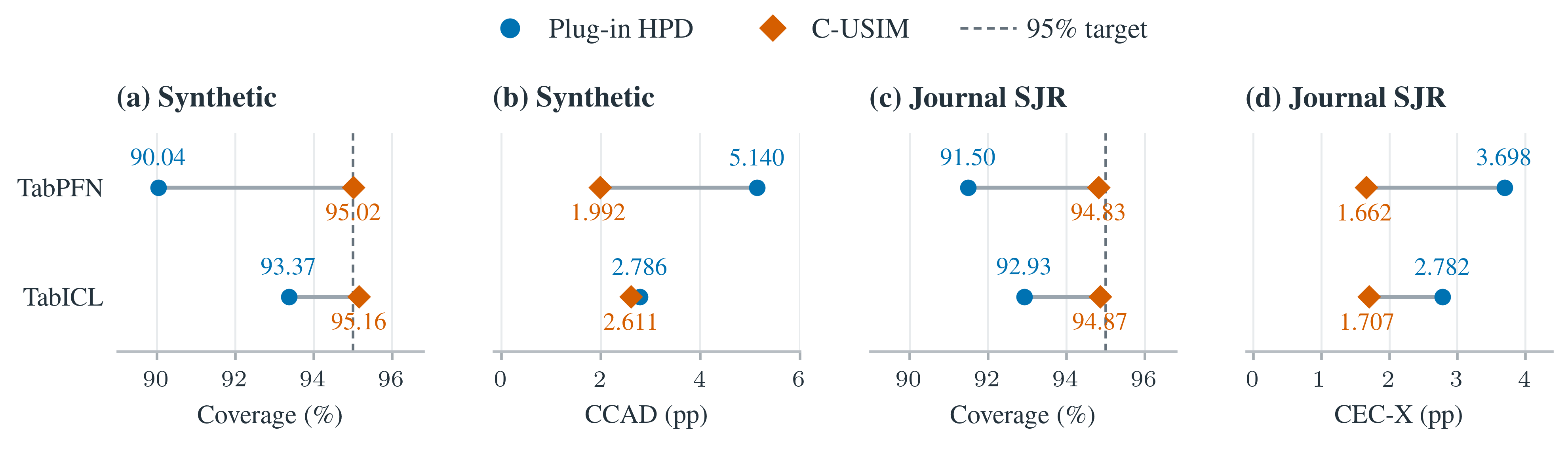}
\caption{Synthetic and Journal SJR overview. (a,b) Synthetic seed means, averaged equally over the selected mechanisms. (c,d) Journal SJR seed means of marginal coverage and CEC-X, respectively; (d) uses a representative covariate grouping. See Table~\ref{tab:experiment-settings} for experiment settings and Appendix~\ref{app:detailed-results} for detailed results.}
\label{fig:experiment-overview}
\end{figure}

\subsection{Function-based synthetic examples}
\label{sec:syn-ex}

We compare plug-in HPD and C-USIM using the settings in Table~\ref{tab:experiment-settings} in Appendix~\ref{app:settings}. We consider three one-dimensional and three multidimensional synthetic examples, described in Appendix~\ref{app:synthetic-dgps}. We selected these examples to assess coverage in nonlinear settings with oscillatory or discontinuous response functions, skewed noise, and covariate-dependent scale mixtures. For each model, we report averages across all six mechanisms.

Figure~\ref{fig:experiment-overview}(a,b) summarizes results from the six selected mechanisms. C-USIM brings mean marginal coverage closer to the target and reduces mean CCAD for both models, with a larger reduction for TabPFN. The lower mean CCAD shows that the gains extend to average conditional coverage accuracy across these mechanisms. Under the same total label budget, these results support reserving labels for calibration when constructing prediction regions from TFM outputs. Appendix~\ref{app:synthetic-results} gives the case-specific seed distributions, numerical results, and set lengths.

\subsection{Real-world regression examples}

We evaluate C-USIM on three real-world datasets: Journal SJR, JP Anime, and Allstate Claims Severity. Journal SJR, from the CARTE benchmark \citep{kim2024carte}, contains publication metadata and journal H-indices from SCImago.\footnote{Dataset source: \url{https://huggingface.co/datasets/inria-soda/carte-benchmark}.} Descriptions and results for the other two datasets are provided in Appendix~\ref{app:additional-realdata}. We use $K$-means to construct multiple covariate groupings for CEC-X evaluation. Detailed settings and grouping procedures are provided in Table~\ref{tab:experiment-settings} and Appendix~\ref{app:settings}.

Figure~\ref{fig:experiment-overview}(c,d) shows that C-USIM brings mean marginal coverage closer to the target and lowers CEC-X for both models. Mean group error also decreases for both models. With the same total label budget as plug-in HPD, C-USIM thus improves both marginal coverage accuracy and average coverage accuracy within groups. Appendix~\ref{app:sjr-results} provides detailed group results and comparisons.

\subsection{Split ratio}
\label{sec:split-ratio}

\begin{figure}[htbp]
\centering
\includegraphics[width=\linewidth]{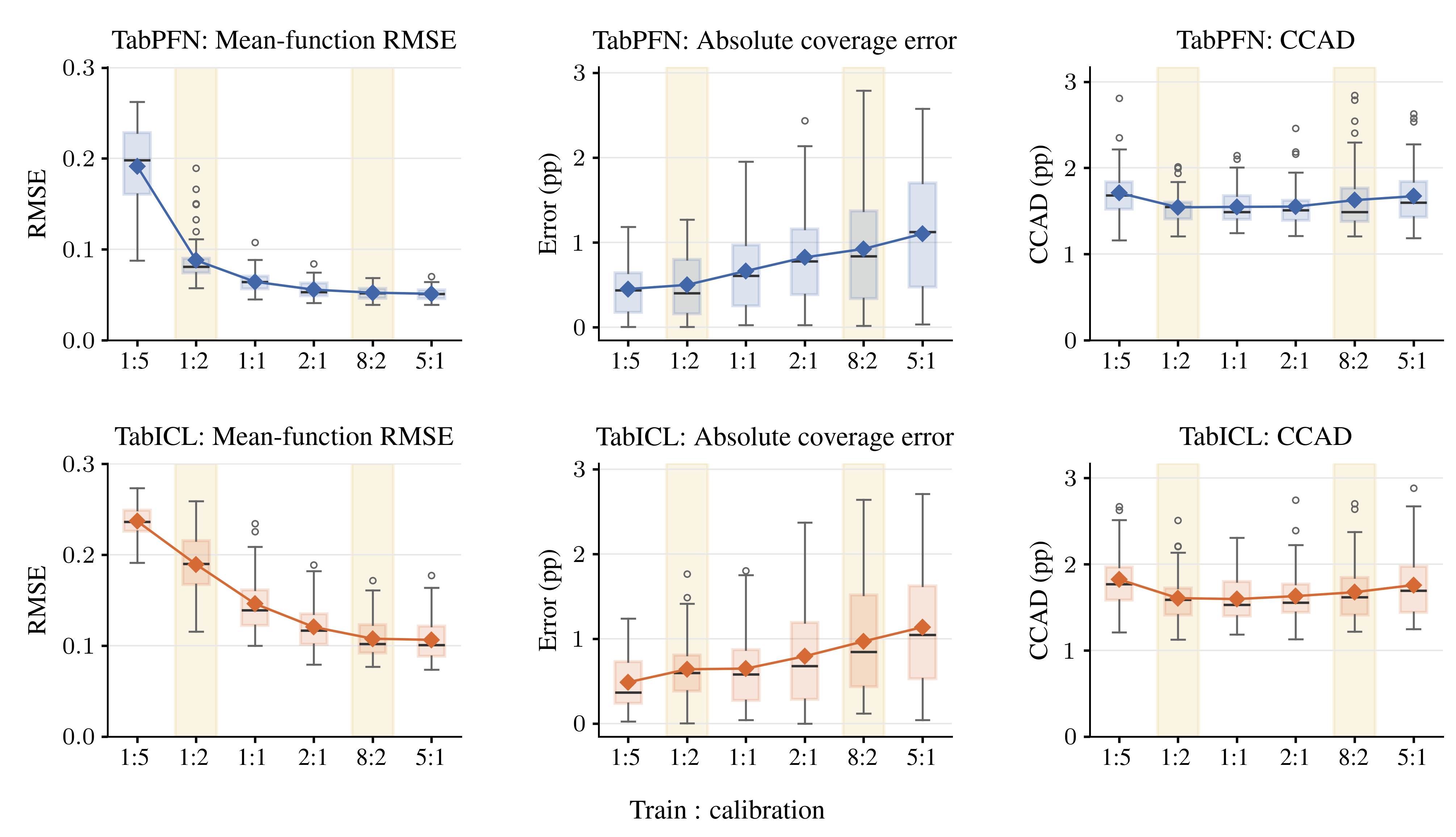}
\caption{Split-ratio sensitivity on \ref{dgp:md-2}; settings follow Table~\ref{tab:experiment-settings}. Rows correspond to TabPFN and TabICL. Columns show mean-function RMSE, absolute marginal coverage error, and CCAD; lower values are better. Standard boxplots summarize variation across seeds; connected diamonds mark the means.}
\label{fig:split-ratio}
\end{figure}

We examine how the training--calibration split affects point-prediction accuracy and coverage stability in C-USIM. An $8{:}2$ training--calibration split is a customary starting point for point prediction, but \citet{das2026optimal} show that the preferred split for prediction-interval length depends on the learning regime. To assess coverage stability, we use the rank--score representation in Section~\ref{sec:percentile-score}: training context shapes the input-specific curves, whereas, for a fixed predictor, more calibration data can stabilize the horizontal line $T=\hat{\tau}$. When additional training yields little improvement in the curves' concentration around $T=P$, allocating more labels to calibration may improve coverage stability.

Figure~\ref{fig:split-ratio} shows how the training--calibration allocation affects coverage stability for C-USIM on \ref{dgp:md-2} from Section~\ref{sec:syn-ex}, using the fixed label budget and settings in Table~\ref{tab:experiment-settings}. The first two columns show clear trade-offs between RMSE and absolute coverage error, with a smaller calibration size leading to better point prediction accuracy while increasing the absolute coverage error. For TabPFN, a $2{:}1$ or $1{:}1$ split seems to strike a good balance between prediction accuracy and coverage stability. For TabICL, there is no such balancing point, showing clear trade-offs between the two goals. CCAD also increases from the $1{:}2$ split through $5{:}1$. The reduction in marginal coverage error holds across all evaluated case--model pairs (Appendix~\ref{app:split-ratio}), supporting larger calibration sets when stable coverage is a priority.

\section{Limitations and future work}
The distribution estimates produced by current TFMs may depend on internal model settings, the calibration and test covariates jointly processed in a prediction call, and the ordering of the training rows; Assumption~\ref{ass:preserve-indep} rules out the dependence on the other jointly processed test covariates. Although these factors do not necessarily violate the exchangeability assumption required for conformal prediction, they make it difficult to characterize the generalized behavior of conditional coverage distributions of C-USIM. 

C-USIM can be extended to other scoring criteria by computing model-based PIT transformations from the TFM predictive distribution. For instance, if the conditional response distribution is symmetric, a PIT-corrected symmetric density score ($s(x, y)=\mathbb{P}_{Z\sim\mathcal{M}(x)} (f_{\mathcal{M}(x)}(y)  + f_{\mathcal{M}(x)}(2\hat y^{\mathrm{med}} - y)  \le f_{\mathcal{M}(x)}(Z)  + f_{\mathcal{M}(x)}(2\hat y^{\mathrm{med}} - Z) )$), where $\hat y ^{\mathrm{med}}$ is the estimated median response, may be considered. Future work could establish other generalized pivotal scores and evaluate their conditional coverage and efficiency on synthetic and real-world data.

\section{Conclusion}
C-USIM provides lightweight HPD-split calibration with marginal validity under the stated exchangeability assumptions. It accommodates multimodal predictions through potentially disjoint regions, using pretrained TFM calibration and test outputs without further training or inference. Our analysis bounds the conditional coverage gap and uses percentile rank--score plots to characterize how calibration thresholds affect coverage across inputs. Experiments with TabPFN and TabICL show that, under a fixed label budget, C-USIM improves marginal coverage accuracy over plug-in HPD, reduces CCAD in most evaluated synthetic cases, and lowers aggregate group coverage errors in most comparisons on real data cases. The split-ratio results further suggest that allocations for conformal prediction may differ from the training-heavy splits motivated by point prediction.

\subsection*{AI use statement}

In this work, we used generative AI tools for implementing methods. We have not used generative AI tools for proposing or refining hypotheses, interpreting results, or constructing mathematical proofs, and qualitative and thematic data analysis is not applicable to this work. Additionally, we used generative AI tools for creating or editing software code, creating or modifying scientific figures, editing the paper to improve readability, and checking mathematical proofs. We have reviewed all AI-assisted work. In particular, we checked the correctness of the AI-generated code. We take responsibility for the final content of this work, including text, claims or artifacts produced with the aid of generative AI.



\subsection*{Reproducibility statement}

Code and detailed configurations will be released on GitHub upon publication. Algorithm~\ref{alg:c-usim} specifies the C-USIM procedure, and Appendix~\ref{proof} provides proofs of the theoretical results under the stated assumptions. Appendix~\ref{app:experiment-details} documents how predictive densities are constructed from TFM outputs, the synthetic data-generating processes, and the evaluation measures. The experimental settings in Appendices~\ref{app:settings} and~\ref{app:additional-realdata} include sample sizes, random seeds, training--calibration allocations, and preprocessing and group construction for the real-world datasets. Appendix~\ref{app:split-ratio} gives further details of the allocation experiments.


\subsubsection*{Acknowledgments}
This work was supported by the National Research Foundation of Korea (NRF) grant funded by the Ministry of Science and ICT (MSIT) (RS-2025-00523567), the Global-LAMP Program of the National Research Foundation of Korea (NRF) grant funded by the Ministry of Education (RS-2023-00301976), and New Faculty Startup Fund from Seoul National University (326-20240027).

\bibliography{iclr2027_conference}
\bibliographystyle{iclr2027_conference}

\newpage

\appendix
\section{Proofs}
\label{proof}
\subsection{Proof of Theorem \ref{thm:pitl1}}
\label{proof:pitl1}
\begin{proof}
The quantity $\Delta_{\tilde s}(x)$ is bounded by the Kolmogorov--Smirnov (KS) distance between the conditional and marginal score distributions \citep{laplante2026postprocessingconformalpredictionapproach}:
\begin{align*}
    \Delta(x)_{\tilde s}\le d_{KS}(F_{\tilde s|X = x}, F_{\tilde s} )
\end{align*}

Let $(t_1^{(x)},\ldots,t_{r+1}^{(x)})$ be a permutation of $(1,\ldots,r+1)$ such that
\begin{align*}
S_{t_i^{(x)}}^{(x)} \leq S_{t_j^{(x)}}^{(x)}
\qquad\text{for } i<j.
\end{align*}

The probability measure of the PIT-corrected score $\tilde{s}$ is:
\begin{align*}
    \mu_{\tilde{s} | X = x} = \sum_{i=1}^{r+1} \tilde{p}_{t_i}^{(x)} \delta _{\sum_{j=1}^i \hat{p}_{t_j}^{(x)}}
\end{align*}

\begin{align*}
    d_{KS}(F_{\tilde{s} | X = x} , \operatorname{Unif}[0, 1]) = \max_{i\in[r+1]}\max \left({\left|\sum_{j=1}^i (\tilde p_{t_j}^{(x)} - \hat p_{t_j}^{(x)})\right|, \left|\sum_{j=1}^{i-1} \tilde p_{t_j}^{(x)} - \sum_{j=1}^{i}\hat p_{t_j}^{(x)}\right|}\right) \\ \le \frac{1}{2} \lVert\tilde p^{(x)} - \hat p^{(x)}\rVert_1 + \max_{i\in [r + 1]} \hat p_i^{(x)} = B(x)
\end{align*}

By the convexity of the KS distance, and since $\mu_{\tilde{s}}$ is the expectation of the conditional distribution $\mu_{\tilde{s} | X = x}$, the inequality below holds.

\begin{align*}
    d_{\mathrm{KS}}(F_{\tilde s},\operatorname{Unif}(0,1))
    &\leq
    \mathbb{E}_X\!\left[
    d_{\mathrm{KS}}(
    F_{\tilde s\mid X},
    \operatorname{Unif}(0,1)
    )
    \right] \\
    &\le \mathbb{E}_X  [B(X)]
\end{align*}
Therefore, by the triangle inequality of the KS distance,
\begin{align*}
    d_{KS}(F_{\tilde s|X = x}, F_{\tilde s}) &\le d_{KS}(F_{\tilde s} , \operatorname{Unif}[0, 1]) + d_{KS}(F_{\tilde s|X=x} , \operatorname{Unif}[0, 1]) \\&\le \mathbb{E}_{X}\!\left[B(X)\right] + B(x).
\end{align*}
\end{proof}

\subsection{Proof of Theorem  \ref{thm:idealcc}}
\label{proof:idealcc}
\begin{proof}
The theorem's assumption on the strictly increasing function $f$ implies that $f^{-1} = F_{s_{\mathcal{M}}(X, Y) | X = x}$ almost surely regardless of $x$. Thus,
    \begin{align*}
        P_t (x) &= \mathbb P\left(
    s_{\mathcal M}(X,Y)\le t \mid X=x
    \right) \\&= \mathbb E[\mathbb P\left(
    s_{\mathcal M}(X,Y)\le t \mid X=x
    \right)] \\&= P\left(
    s_{\mathcal M}(X,Y)\le t\right) \\&= F_{s_{\mathcal M}(X,Y)}(t)
    \end{align*}

Because of the assumption, $F_{s_{\mathcal M}(X,Y)}(s_{\mathcal M}(X,Y))\sim\operatorname{Unif}(0, 1)$, and since a cumulative distribution function preserves order, $C = P_{\hat{\tau}}(X) = F_{s_{\mathcal M}(X,Y)}(\hat\tau)$ has the distribution of the $k$-th order statistic of $n_{\mathrm {cal}}$ independent $\operatorname{Unif}(0, 1)$ random variables, where $k=\left\lceil (n_{\mathrm {cal}}+1)(1-\alpha)\right\rceil \le n_{\mathrm {cal}}$. This order statistic follows $\operatorname{Beta}(k, n_{\mathrm {cal}}+1-k)$.
\end{proof}

\section{Experimental details}
\label{app:experiment-details}

\subsection{Density Reconstruction from TFM Outputs}
\label{app:density-estimation}

The following reconstruction rules convert TFM outputs into piecewise-constant densities without fitting a separate density estimator.

\paragraph{Binning-based Outputs.} For binning-based models, we represent the estimated conditional distribution at $x$ by contiguous predicted intervals $\{[\hat{y}_{i-1}^{(x)}, \hat{y}_i^{(x)})\}_{i=1}^{r+1}$, where $\hat{y}_{0}^{(x)} <\hat{y}_{1}^{(x)} <...    <\hat{y}_{r+1}^{(x)}$ and their corresponding probability weights $\{\hat{p}_i^{(x)}\}_{i=1}^{r+1}$. We construct a piecewise-constant density estimate by dividing each predicted probability weight by the width of its corresponding interval. Specifically,
\begin{align*}
    f_{\mathcal{M}(x)}(y)
    =
    \sum_{i=1}^{r+1}
    \frac{\hat{p}_i^{(x)}}
    {\hat{y}_{i}^{(x)}-\hat{y}_{i-1}^{(x)}}
    \mathbf{1}\!\left\{
    y\in[\hat{y}_{i-1}^{(x)},\hat{y}_{i}^{(x)})
    \right\}.
\end{align*}

For TabPFN, we consider the default setting ($r = 4999$, $\hat{y}_i^{(x)}$ is fixed with respect to $x$)  \citep{Grinsztajn2026-ei}.

\paragraph{Quantile-based Outputs.} For quantile-based models, we represent the estimated conditional distribution at $x$ by the set of quantile level-value pairs $\{(q_i^{(x)}, y_i^{(x)})\}_{i=0}^{r + 1}$ such that $0=\hat q_0^{(x)} < \hat q_1^{(x)}<\hat q_2^{(x)}<...<\hat q_{r+1}^{(x)}=1$, $\hat y_0^{(x)} < \hat y_1^{(x)}<\hat y_2^{(x)}<...<\hat y_{r+1}^{(x)}$. We construct a piecewise-constant density estimate by dividing each percentile gap by the gap between quantiles. Specifically,
\begin{align*}
    f_{\mathcal{M}(x)}(y) = \sum_{i=1}^{r+1}
    \frac{\hat{q}_{i}^{(x)}-\hat{q}_{i-1}^{(x)}}
    {\hat{y}_{i}^{(x)}-\hat{y}_{i-1}^{(x)}}
    \mathbf{1}\!\left\{
    y\in[\hat{y}_{i-1}^{(x)},\hat{y}_{i}^{(x)})
    \right\}.
\end{align*}

For TabICL, we use the default quantile grid $\left(r = 999, \hat q_i ^{(x)} = \dfrac{i}{r+1}\right)$. The model does not predict the boundary quantiles $\hat y_0^{(x)}$, $\hat y_{r+1}^{(x)}$ \citep{Qu2026-rv}. Before constructing the density, we retain strictly increasing raw quantile sequences and repair only those with crossings or duplicate values, using float64 arithmetic. For a raw sequence $v_0,\ldots,v_{998}$, let $R$ be its range, replaced by $\max(1,|v_0|)$ if the sequence is constant, and set $g=\max\{10^{-6}R/998,\,8\operatorname{ulp}(\max_j|v_j|)\}$, where $\operatorname{ulp}$ denotes the spacing to the next larger float64 value. We apply unweighted least-squares isotonic regression to $v_j-jg$ using the pool-adjacent-violators algorithm and then restore the ramp $jg$. The resulting strictly increasing sequence supplies the interior quantiles in the density above. After this correction, we set the boundary quantile values to $\hat y_1^{(x)} -P\cdot (\hat y_2^{(x)} - \hat y_1^{(x)})$ and $\hat y_r^{(x)} +P\cdot (\hat y_r^{(x)} - \hat y_{r-1}^{(x)})$, respectively, with $P$ fixed at 3 throughout our experiments. This gives 1,000 finite intervals, each with probability mass 0.001.

\subsection{Synthetic data-generating processes}
\label{app:synthetic-dgps}

Within each example, context, calibration, and test observations (and hence their covariates) are sampled i.i.d. from the same joint distribution.

\subsubsection{One-dimensional Examples}
The three one-dimensional examples use $X\sim\operatorname{Unif}(0,2\pi)$. Throughout this appendix, $Z\sim\mathcal{N}(0,1)$ is independent of the covariates and any mixture-component draw. All Gaussian scale parameters below are standard deviations.

\paragraph{1D-1: Sine with shifted exponential noise.}
\dgplabel{dgp:1d-1}{1D-1}
For $E\sim\operatorname{Exp}(1)$ independent of $X$, the response is
\[
Y=\sin(10X)+0.18(E-1).
\]
The exponential distribution has rate one, so the noise is centered and has variance $0.18^2$.

\paragraph{1D-2: Periodic Gaussian scale mixture.}
\dgplabel{dgp:1d-2}{1D-2}
Given $X=x$, draw $B\sim\operatorname{Bernoulli}(p(x))$, independently of $Z$, where
\[
p(x)=0.08+0.30(0.5+0.5\sin(5x)).
\]
Then
\[
Y=0.8\sin(12X)+0.2\cos(3X)+\sigma_B Z,
\qquad
\sigma_B=
\begin{cases}
0.48, & B=1,\\
0.07, & B=0.
\end{cases}
\]
Thus, $p(x)$ is the probability of the wider Gaussian component.

\paragraph{1D-3: Step function.}
\dgplabel{dgp:1d-3}{1D-3}
The response is
\[
Y=m(X)+0.13Z,
\qquad
m(x)=
\begin{cases}
0.85, & \sin(10x)\geq 0,\\
-0.85, & \sin(10x)<0.
\end{cases}
\]

\subsubsection{Multi-dimensional examples MD-1 -- MD-3}

The three multi-dimensional examples use i.i.d.\ coordinates $X_j\sim\operatorname{Unif}(0,1)$, $j=1,\ldots,d$. These examples share the mean
\[
m(x)=\sin(2\pi x_1)+0.65(2x_2-1)(2x_3-1).
\]
Given $X=x$, draw $B\sim\operatorname{Bernoulli}(p(x))$ independently of $Z\sim\mathcal{N}(0,1)$ and set $Y=m(X)+\sigma_B(X)Z$. The parameters below specify the probability and standard deviation of each mixture component. These mechanisms were selected during earlier exploratory screening; the displayed evaluation includes these three mechanisms for both models.

\paragraph{MD-1: Input-dependent severity, $d=5$.}
\dgplabel{dgp:md-1}{MD-1}
The mixture probability is $p(x)=0.20$, with $\sigma_0(x)=0.05$ and $\sigma_1(x)=0.30+0.90x_4$. The fourth coordinate changes the scale of the wider component.

\paragraph{MD-2: Radial mixture weight, $d=10$.}
\dgplabel{dgp:md-2}{MD-2}
Define
\[
r(x)=\left\{\frac{1}{7}\sum_{j=4}^{10}(2x_j-1)^2\right\}^{1/2},
\qquad
p(x)=0.03+0.65\operatorname{clip}\!\left(\frac{r(x)-0.25}{0.60},0,1\right),
\]
where $\operatorname{clip}(a,0,1)=\min(1,\max(0,a))$. The component scales are $\sigma_0=0.05$ and $\sigma_1=1.00$.

\paragraph{MD-3: Input-dependent scale mixture, $d=20$.}
\dgplabel{dgp:md-3}{MD-3}
Here $p(x)=0.05+0.50x_4$, $\sigma_0=0.07$, and $\sigma_1=0.90$. The remaining 16 coordinates do not affect the conditional response law.

\subsection{Construction of percentile rank--score plots and conditional coverage distributions}
\label{app:percentile-score}

First, for each fixed $X_i$, we independently draw two sets of Monte Carlo samples:
\begin{align*}
\widetilde{Y}_i^{(1)},\ldots,\widetilde{Y}_i^{(B)}
\overset{\mathrm{i.i.d.}}{\sim} Y\mid X=X_i,
\qquad
Y_i^{(1)},\ldots,Y_i^{(L)}
\overset{\mathrm{i.i.d.}}{\sim} Y\mid X=X_i.
\end{align*}
The first set serves as a reference sample for approximating the conditional percentile rank:
\begin{align*}
\widehat{P}(X_i,y) := \frac{1}{B} \sum_{b=1}^{B}
\mathbf{1}\!\left\{
T(X_i,\widetilde{Y}_i^{(b)}) \leq T(X_i,y)
\right\}.
\end{align*}

For each \(X_i\), we plot an empirical percentile rank--score curve by sorting and connecting the $L$ points
\begin{align*}
\left\{
\widehat{P}(X_i,Y_i^{(\ell)}), 
T(X_i,Y_i^{(\ell)})
\right\}_{\ell=1}^{L}
\end{align*}
We overlay these curves for $i=1,\ldots,m$.  The conditional-coverage distribution is approximated from the horizontal coordinates of the intersections between these interpolated curves and the horizontal line at each score threshold. We approximate $q_{1-\alpha}$ by the empirical $(1-\alpha)$-quantile of the pooled Monte Carlo scores.

Under Assumptions~\ref{ass:preserve-indep} and \ref{ass:conti-dist}, the population threshold satisfies $\mathbb{E}[P_{q_{1-\alpha}} (X)]=\mathbb{P}[s_{\mathcal{M}}(X, Y)\le q_{1-\alpha}] = 1-\alpha$. For a discrete marginal score law, coverage at this threshold is at least $1-\alpha$ and can exceed it because of a jump in the score CDF.

Generally, the predictive distributions of TabPFN and TabICL in our setting are piecewise uniform, so Assumption~\ref{ass:conti-dist} does not hold exactly.

In particular, the identity in Section~\ref{sec:percentile-score},
$$P(x,y) =\mathbb{P}\!\left(s_{\mathcal M}(x,Z) < s_{\mathcal M}(x,y) \mid X=x\right)$$
need not hold rigorously, since the inequality

$$
\mathbb{P}\!\left(
Z \in \widehat{C}_{T(x,y)}(x) \mid X=x
\right)
\geq
\mathbb{P}\!\left(
s_{\mathcal M}(x,Z) < s_{\mathcal M}(x,y) \mid X=x
\right)
$$

is strict whenever that level has positive probability under the true conditional response law; this probability is the difference between the two sides. Discrete scores do not satisfy exact PIT uniformity but both models yield sufficiently fine discretizations that this discrepancy can reasonably be regarded as negligible and as having no material effect on the overall patterns in the plots.

Unless otherwise noted, the top two rows of each figure show TabPFN, and the bottom two rows show TabICL. Rows with $n_{\mathrm{train}}=4096$ alternate with rows with $n_{\mathrm{train}}=1024$, starting from the top. We set $B = L = 10000$. Plot (a) shows percentile rank--score plots, whereas plot (b) shows the interpolated conditional-coverage distributions at ideal C-USIM and plug-in HPD.

\clearpage
\begin{figure}[p]
\centering
\begin{subfigure}[b]{.86\textwidth}
    \centering
    \includegraphics[width=\textwidth]{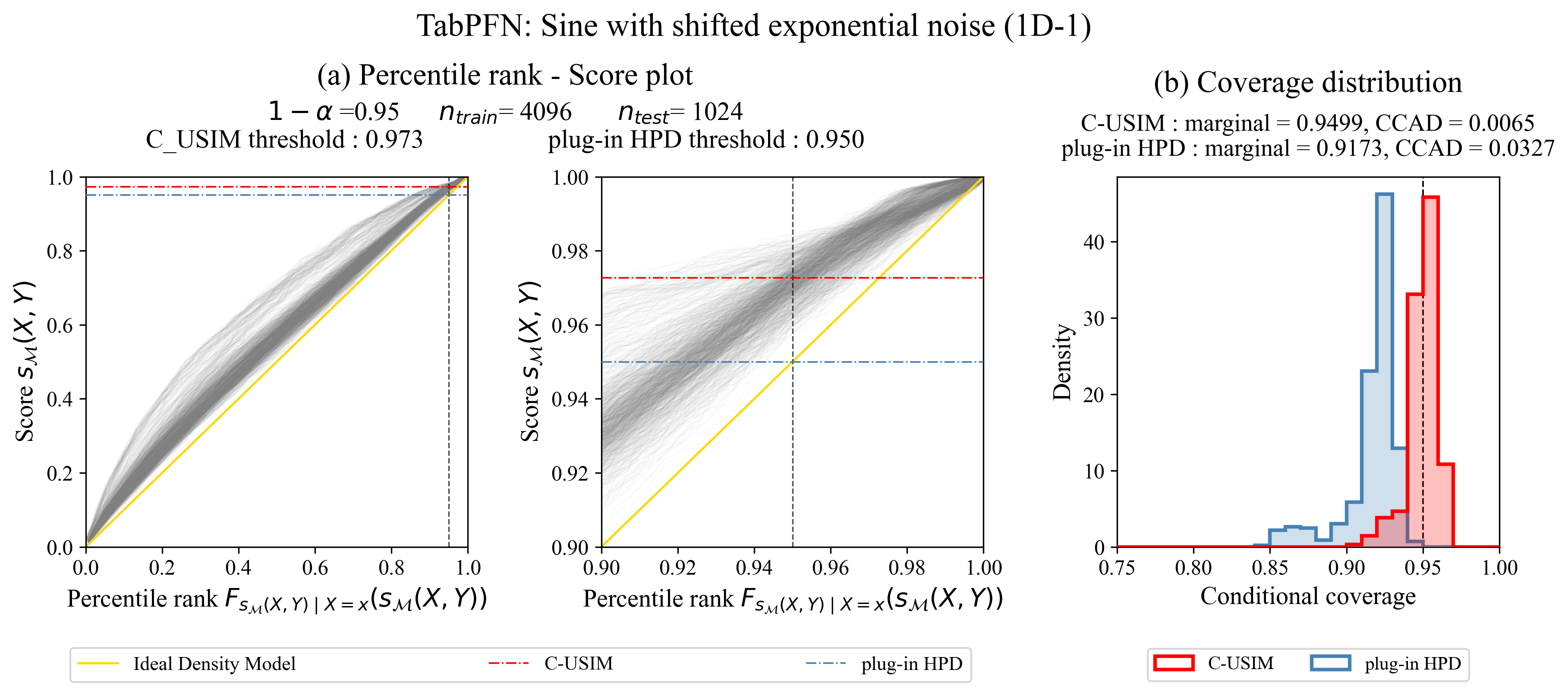}
\end{subfigure}
\hfill
\begin{subfigure}[b]{.86\textwidth}
    \centering
    \includegraphics[width=\textwidth]{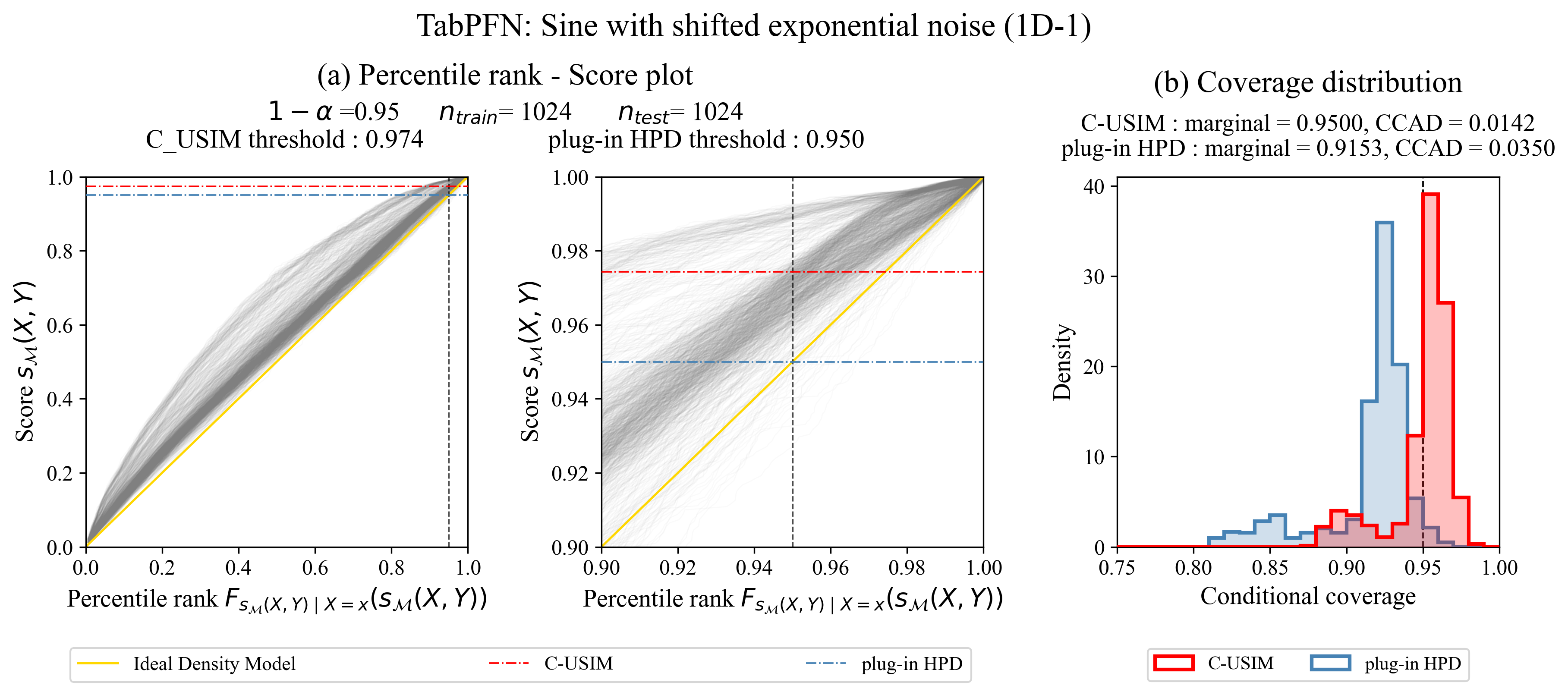}
\end{subfigure}
\hfill
\begin{subfigure}[b]{.86\textwidth}
    \centering
    \includegraphics[width=\textwidth]{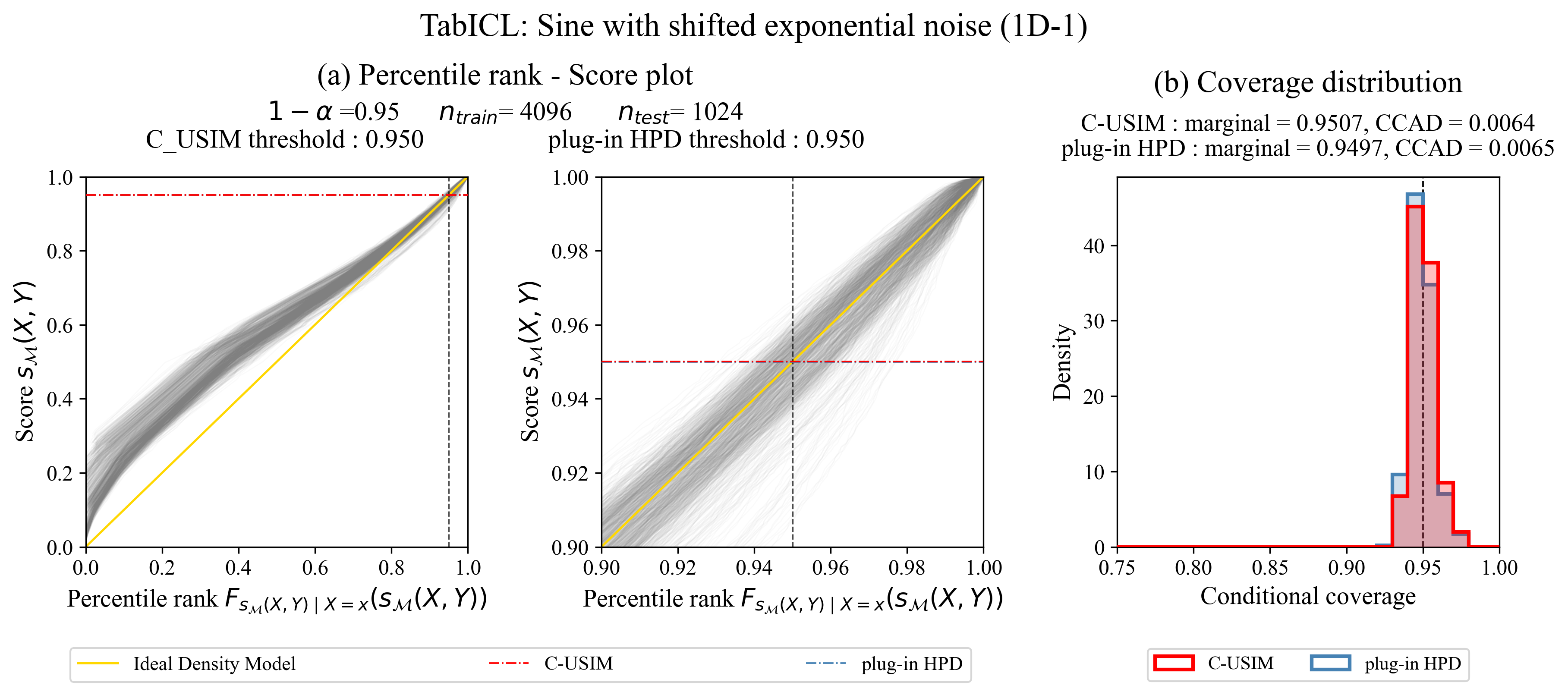}
\end{subfigure}
\hfill
\begin{subfigure}[b]{.86\textwidth}
    \centering
    \includegraphics[width=\textwidth]{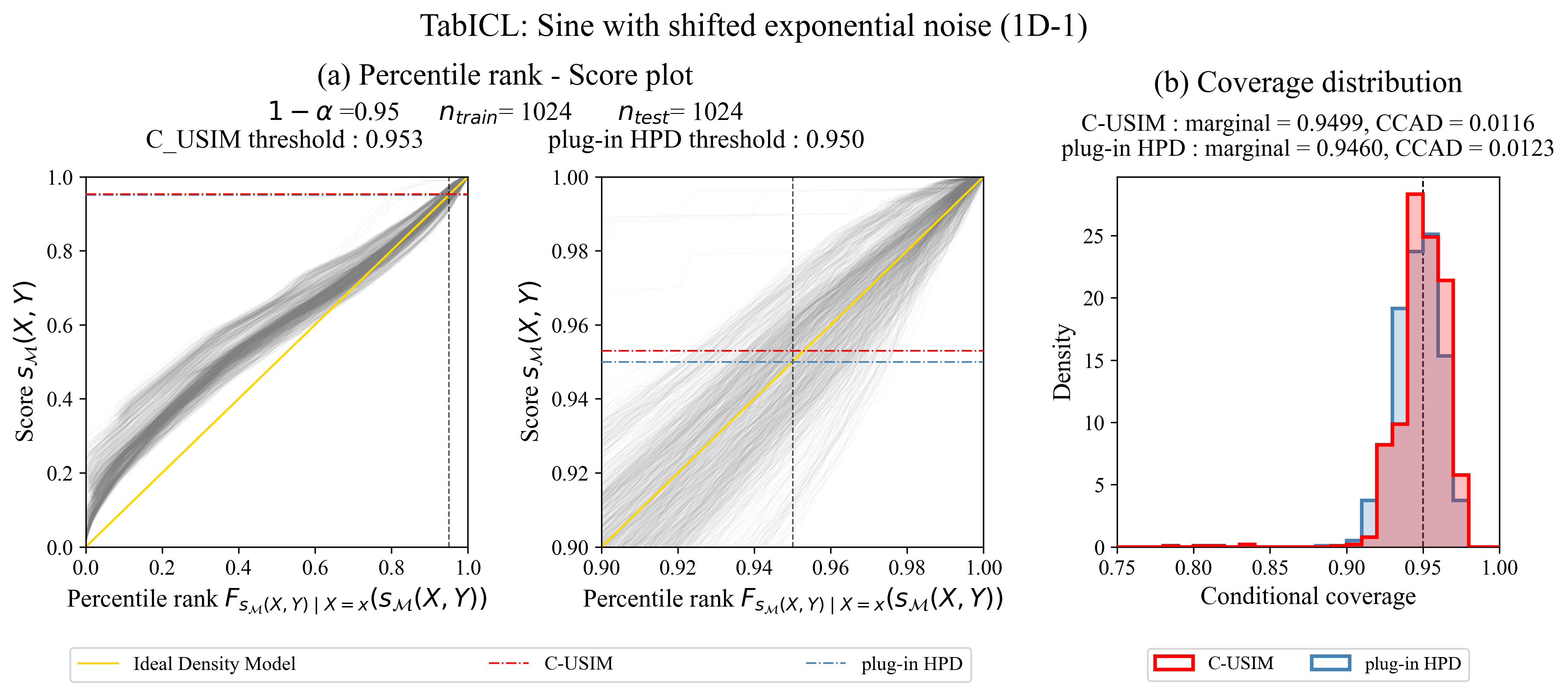}
\end{subfigure}

\caption{Percentile rank--score diagnostics for \ref{dgp:1d-1}, seed 2026: C-USIM ($n_{\mathrm{cal}}\to\infty$) and plug-in HPD.}
\end{figure}

\begin{figure}[p]
\centering
\begin{subfigure}[b]{.86\textwidth}
    \centering
    \includegraphics[width=\textwidth]{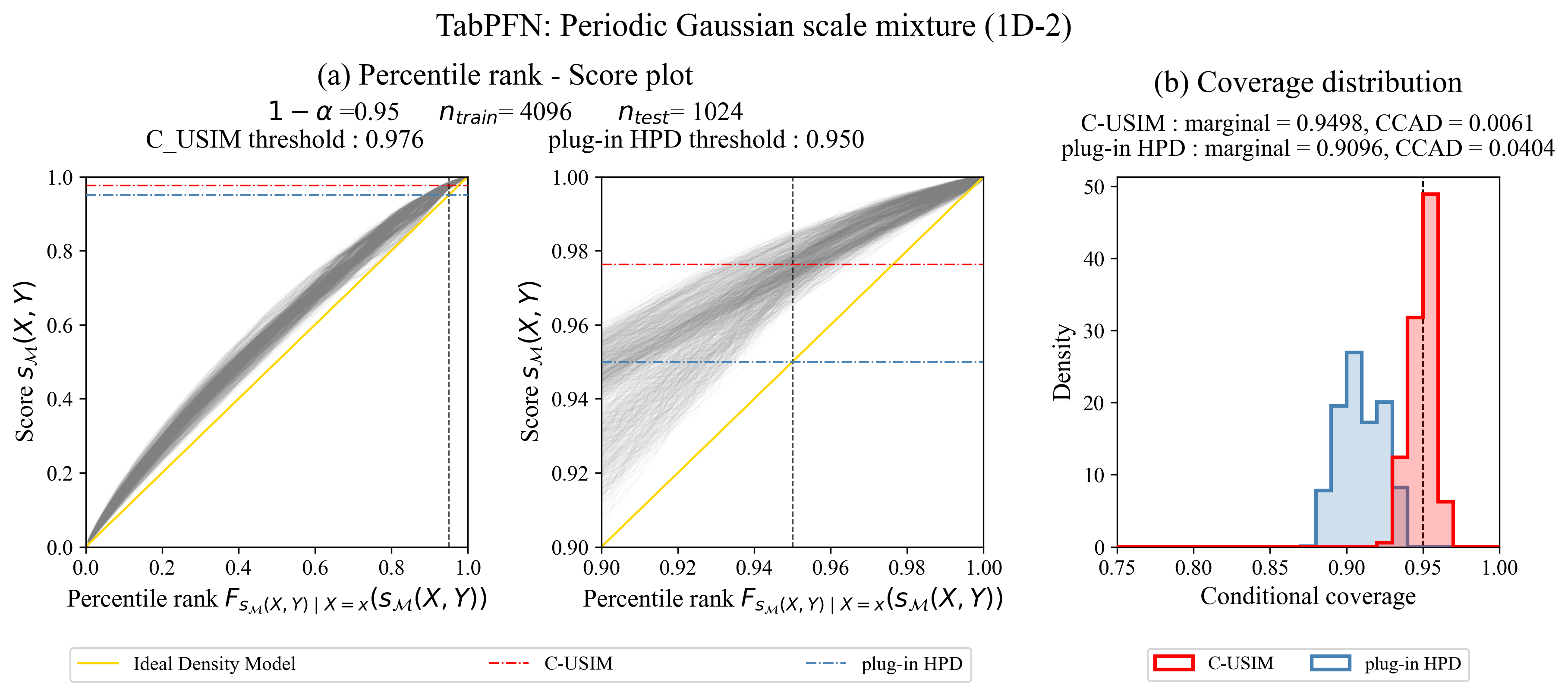}
\end{subfigure}
\hfill
\begin{subfigure}[b]{.86\textwidth}
    \centering
    \includegraphics[width=\textwidth]{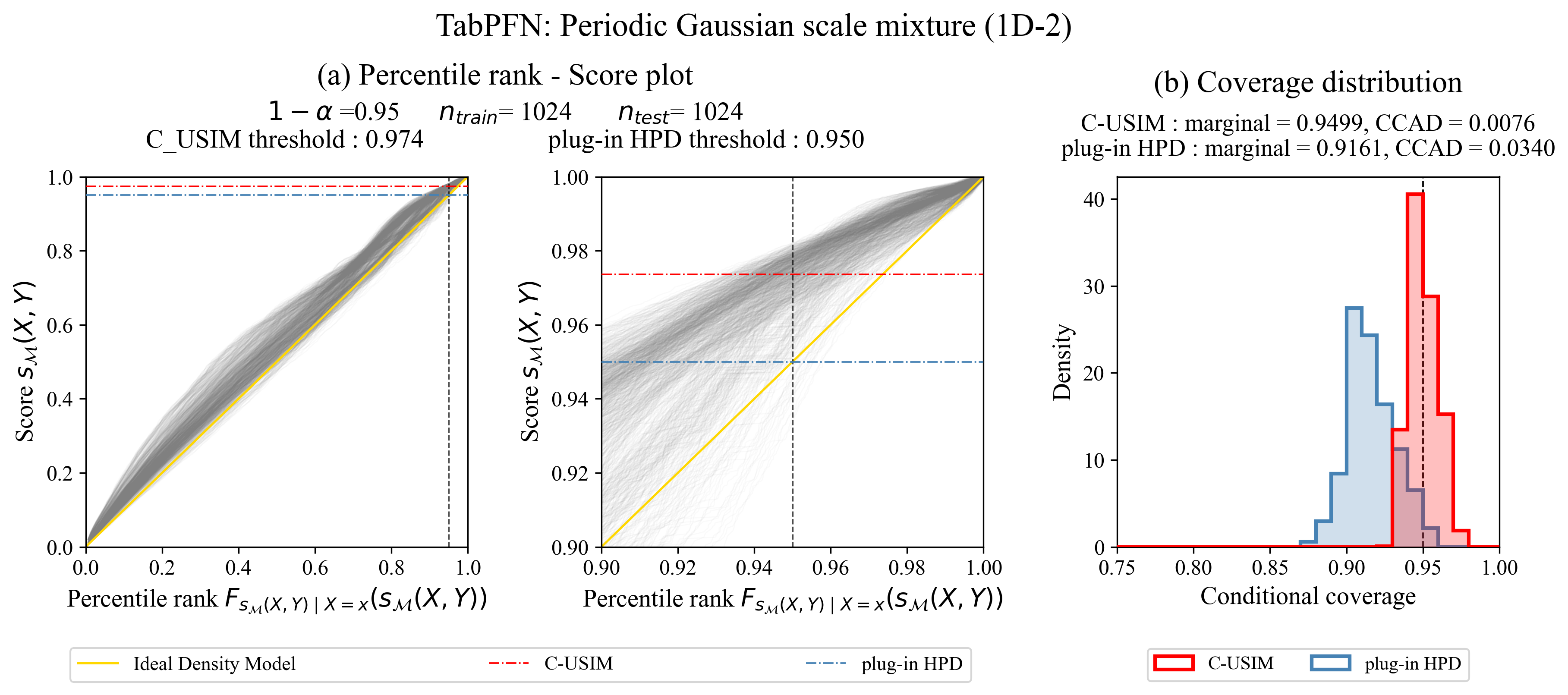}
\end{subfigure}
\hfill
\begin{subfigure}[b]{.86\textwidth}
    \centering
    \includegraphics[width=\textwidth]{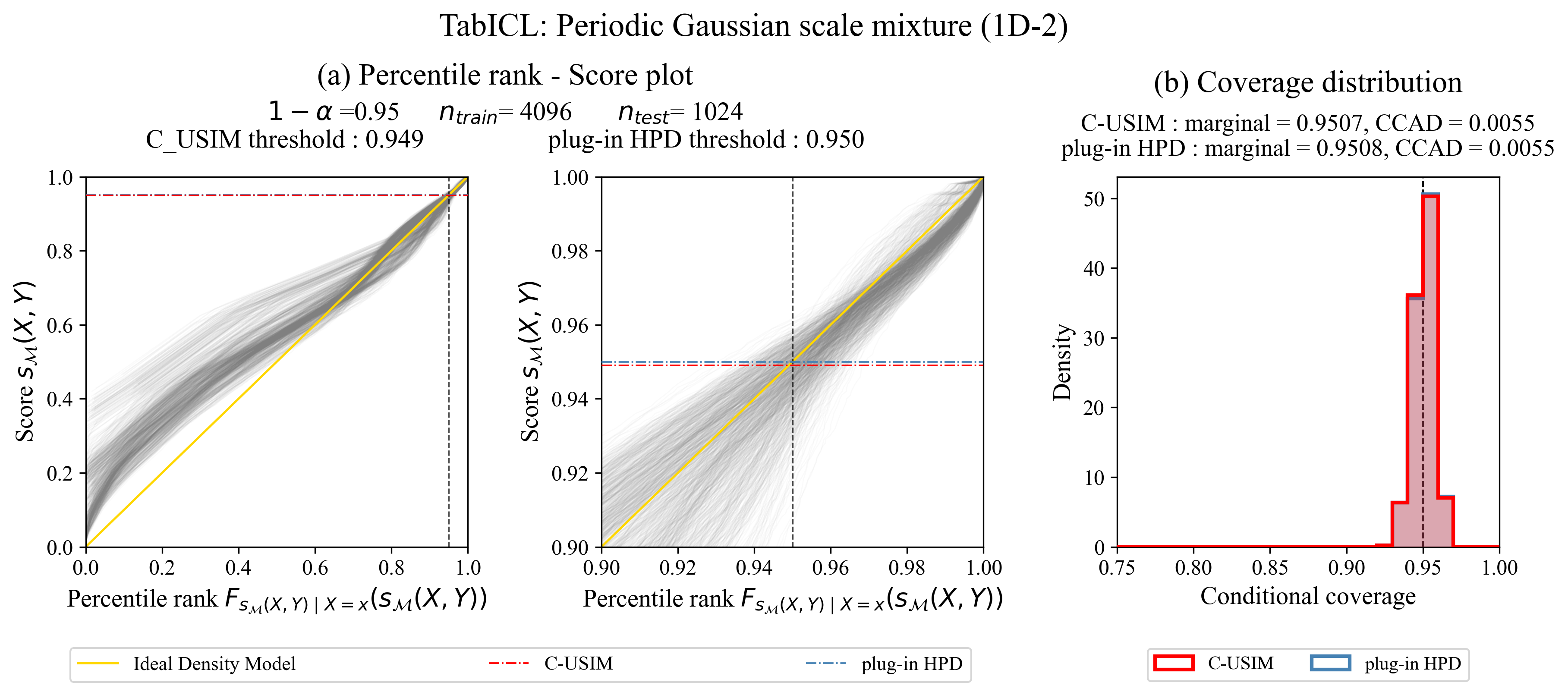}
\end{subfigure}
\hfill
\begin{subfigure}[b]{.86\textwidth}
    \centering
    \includegraphics[width=\textwidth]{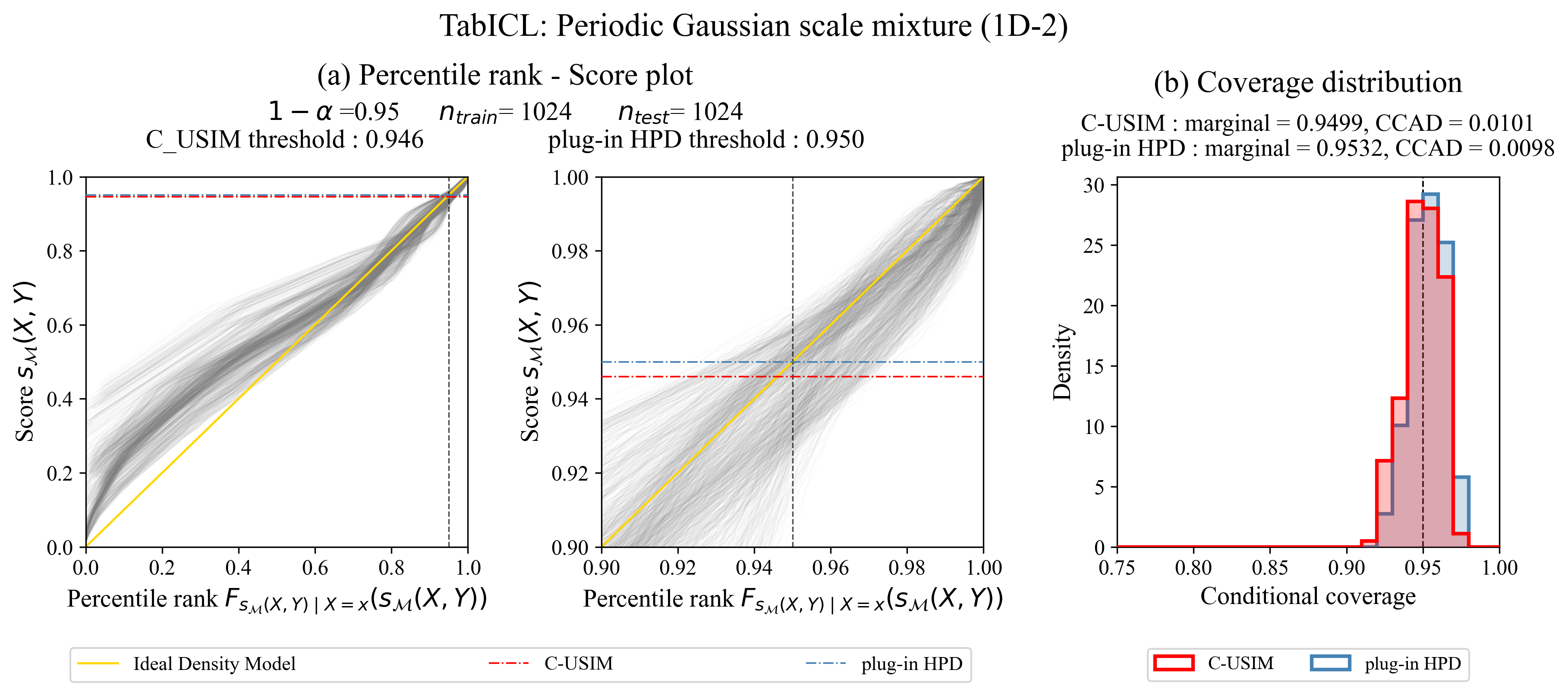}
\end{subfigure}

\caption{Percentile rank--score diagnostics for \ref{dgp:1d-2}, seed 2026: C-USIM ($n_{\mathrm{cal}}\to\infty$) and plug-in HPD.}
\end{figure}

\begin{figure}[p]
\centering
\begin{subfigure}[b]{.86\textwidth}
    \centering
    \includegraphics[width=\textwidth]{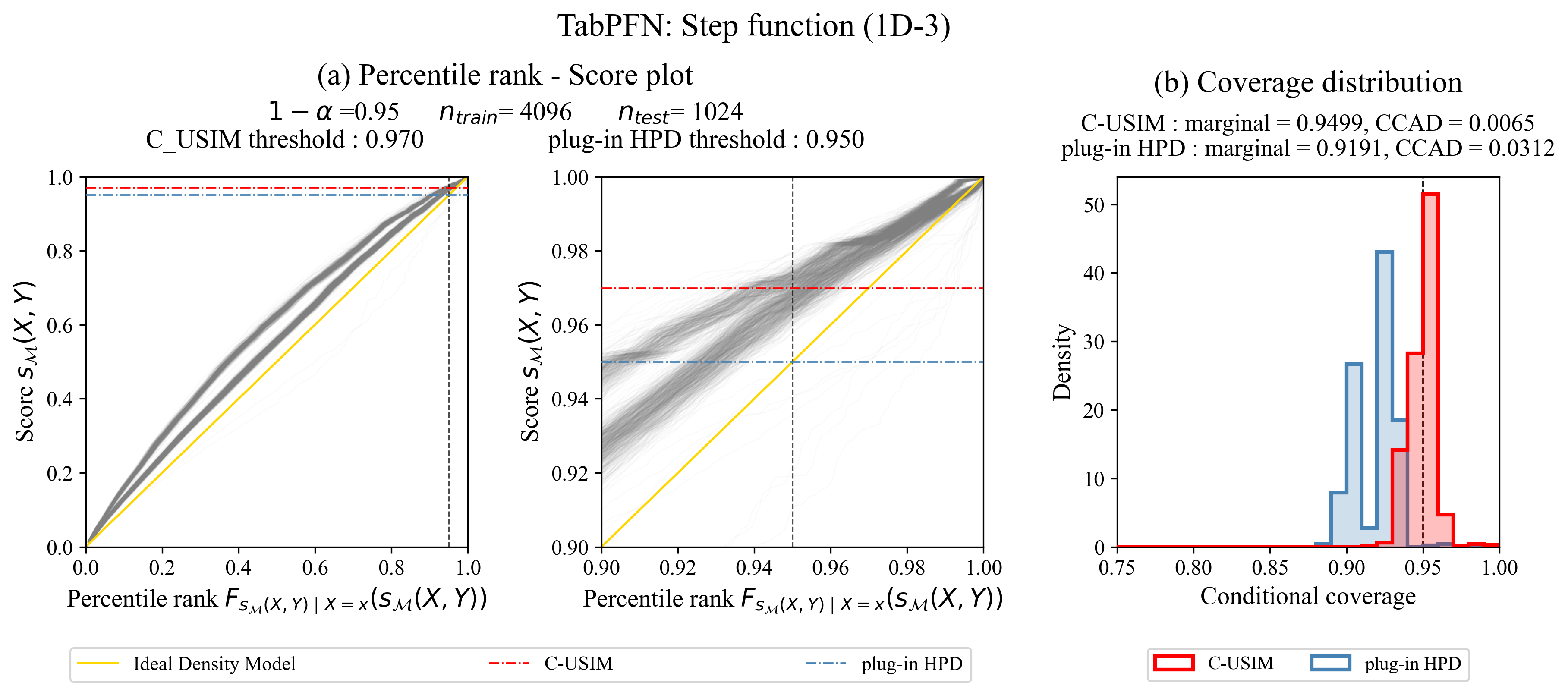}
\end{subfigure}
\hfill
\begin{subfigure}[b]{.86\textwidth}
    \centering
    \includegraphics[width=\textwidth]{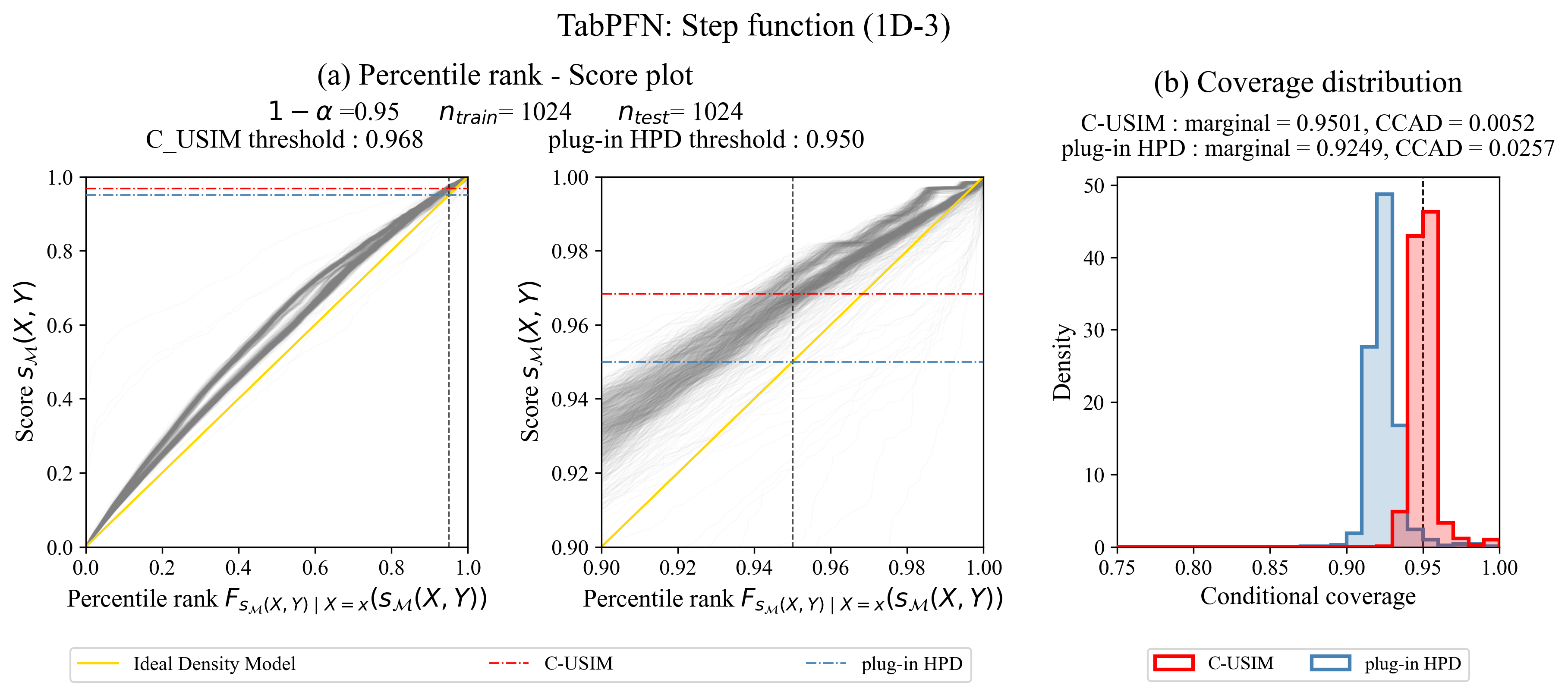}
\end{subfigure}
\hfill
\begin{subfigure}[b]{.86\textwidth}
    \centering
    \includegraphics[width=\textwidth]{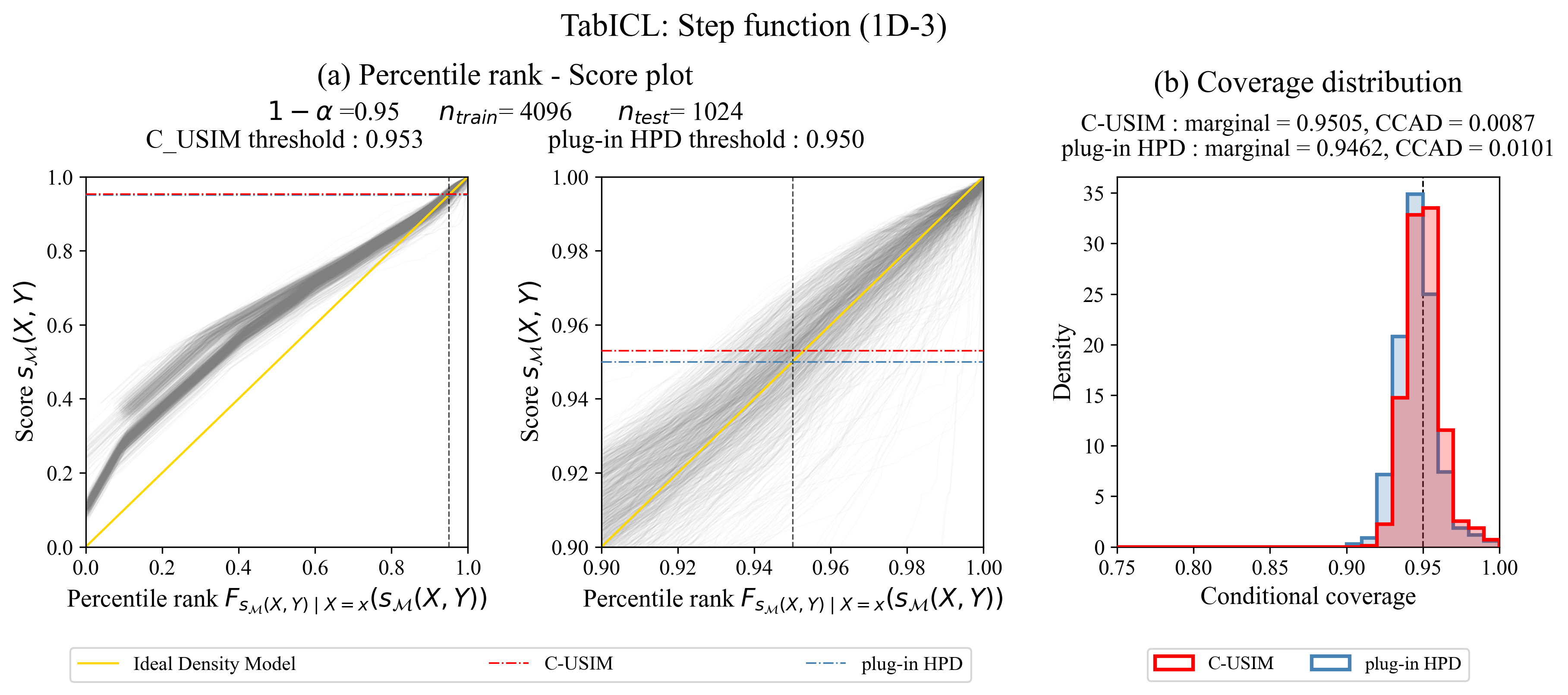}
\end{subfigure}
\hfill
\begin{subfigure}[b]{.86\textwidth}
    \centering
    \includegraphics[width=\textwidth]{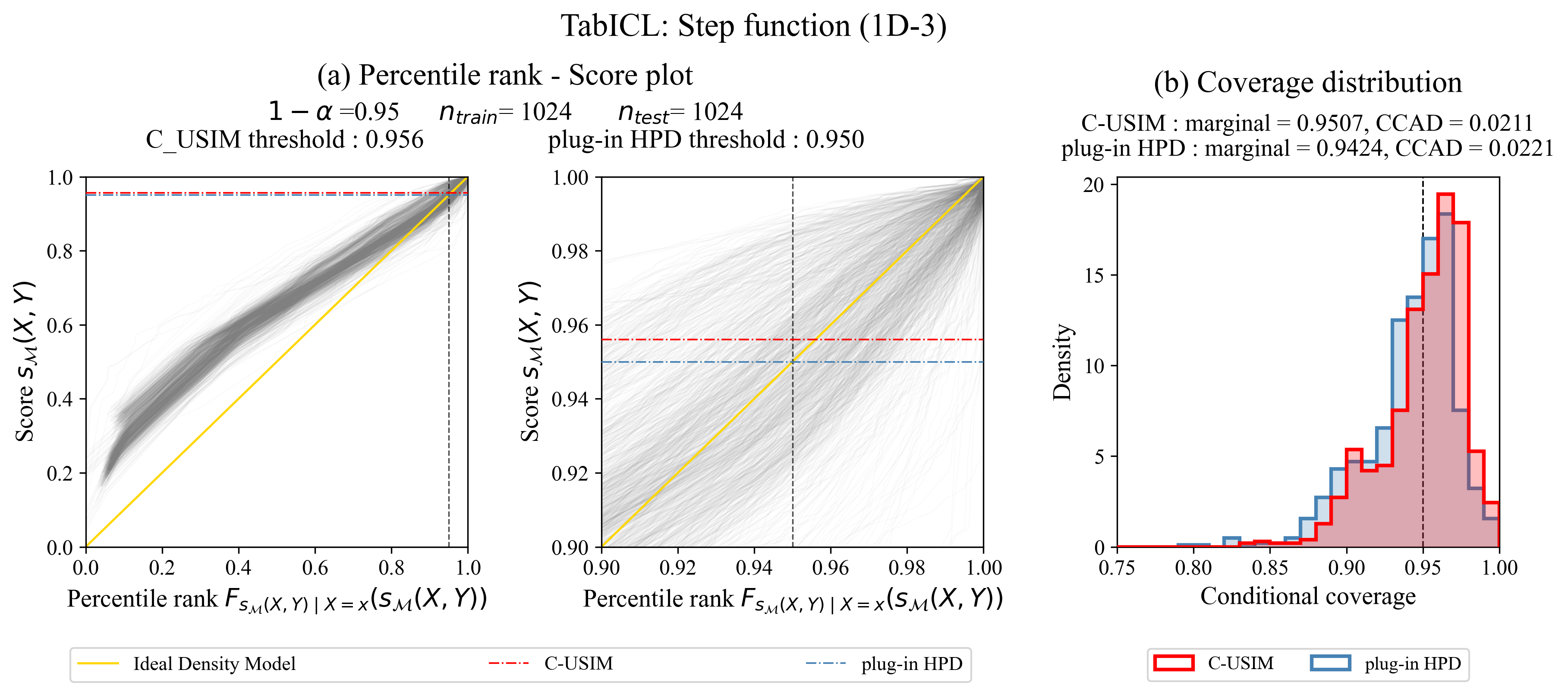}
\end{subfigure}

\caption{Percentile rank--score diagnostics for \ref{dgp:1d-3}, seed 2026: C-USIM ($n_{\mathrm{cal}}\to\infty$) and plug-in HPD.}
\end{figure}

\begin{figure}[p]
\centering
\begin{subfigure}[b]{.86\textwidth}
    \centering
    \includegraphics[width=\textwidth]{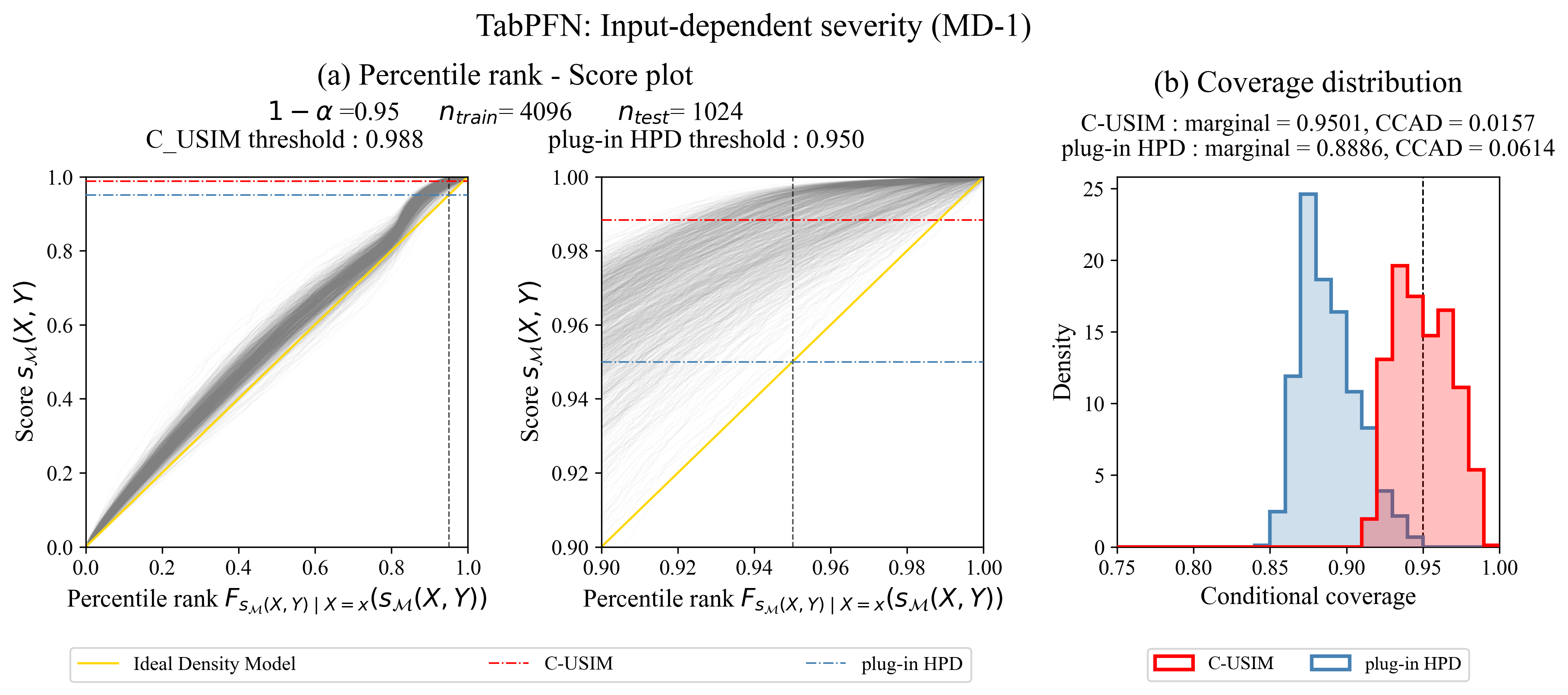}
\end{subfigure}
\hfill
\begin{subfigure}[b]{.86\textwidth}
    \centering
    \includegraphics[width=\textwidth]{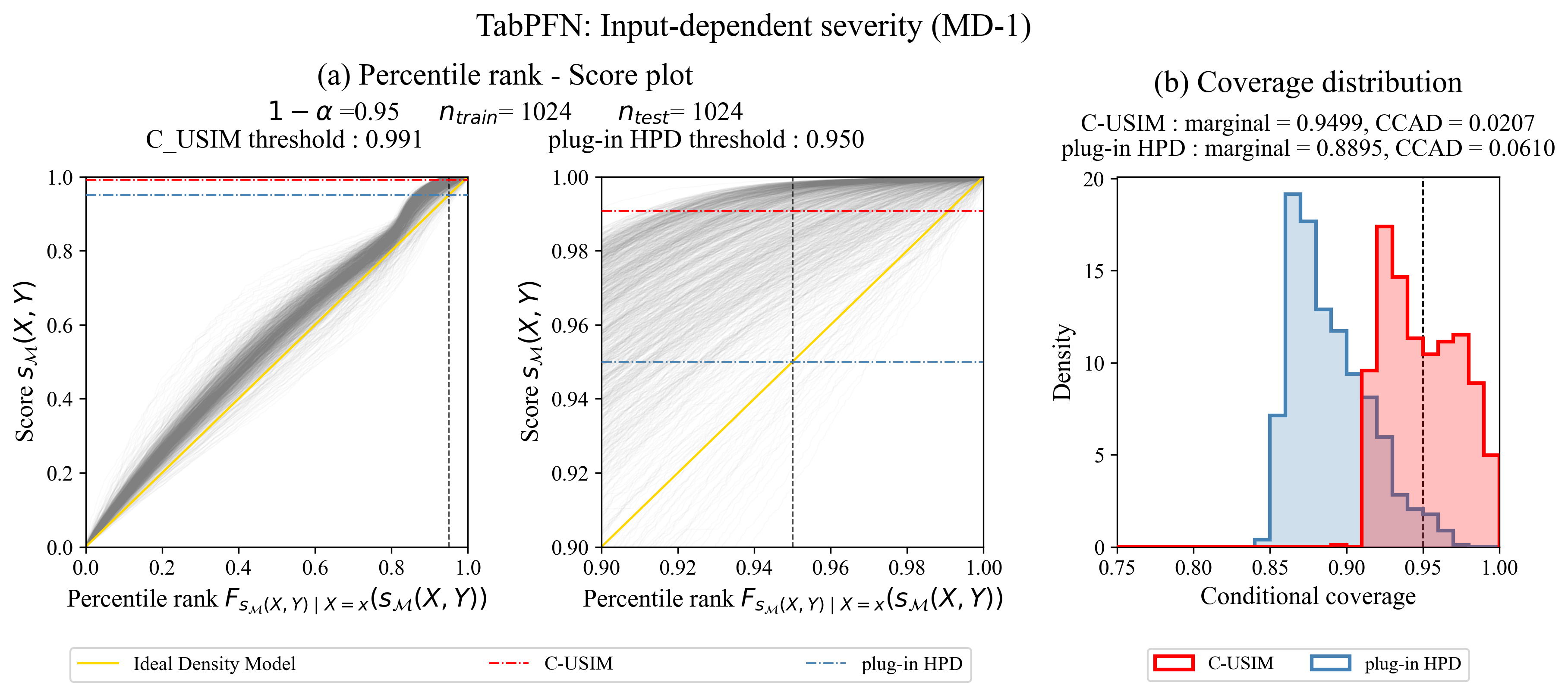}
\end{subfigure}
\hfill
\begin{subfigure}[b]{.86\textwidth}
    \centering
    \includegraphics[width=\textwidth]{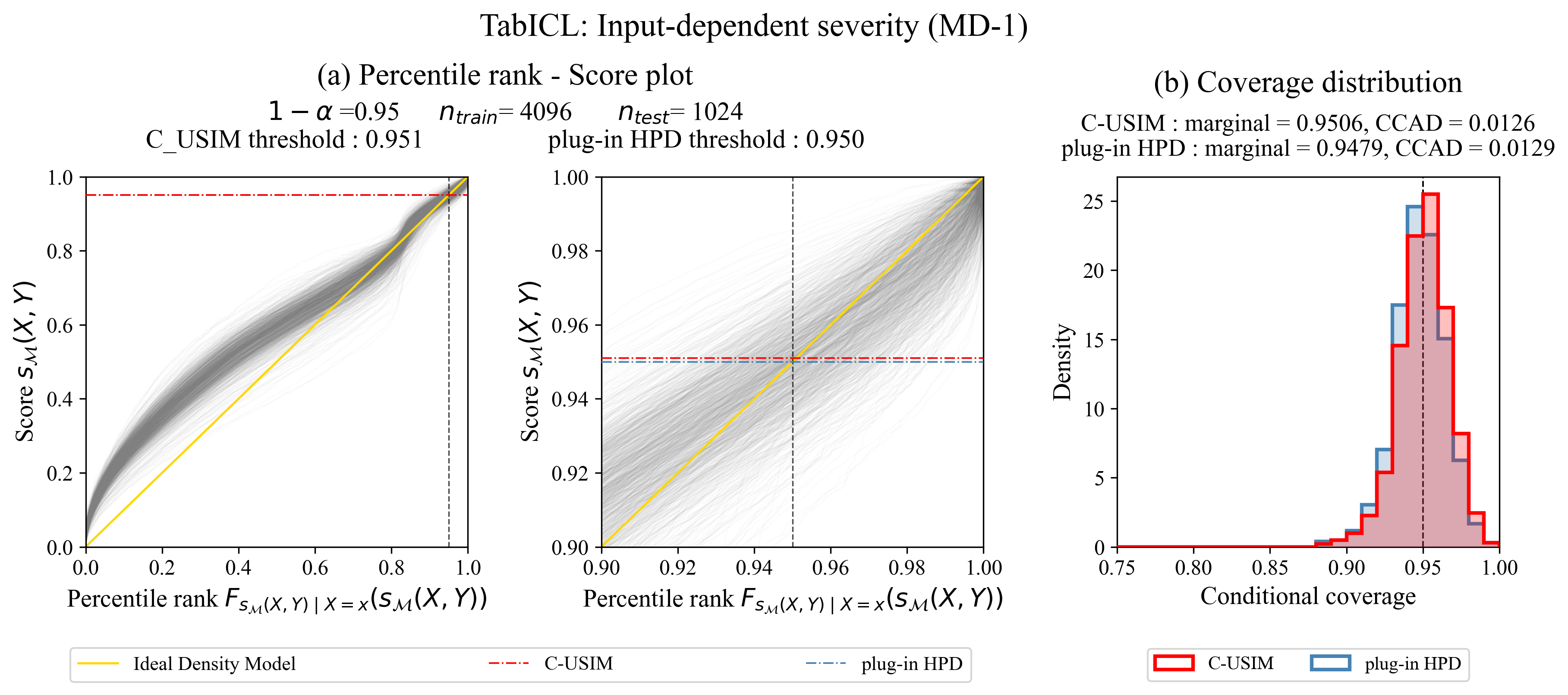}
\end{subfigure}
\hfill
\begin{subfigure}[b]{.86\textwidth}
    \centering
    \includegraphics[width=\textwidth]{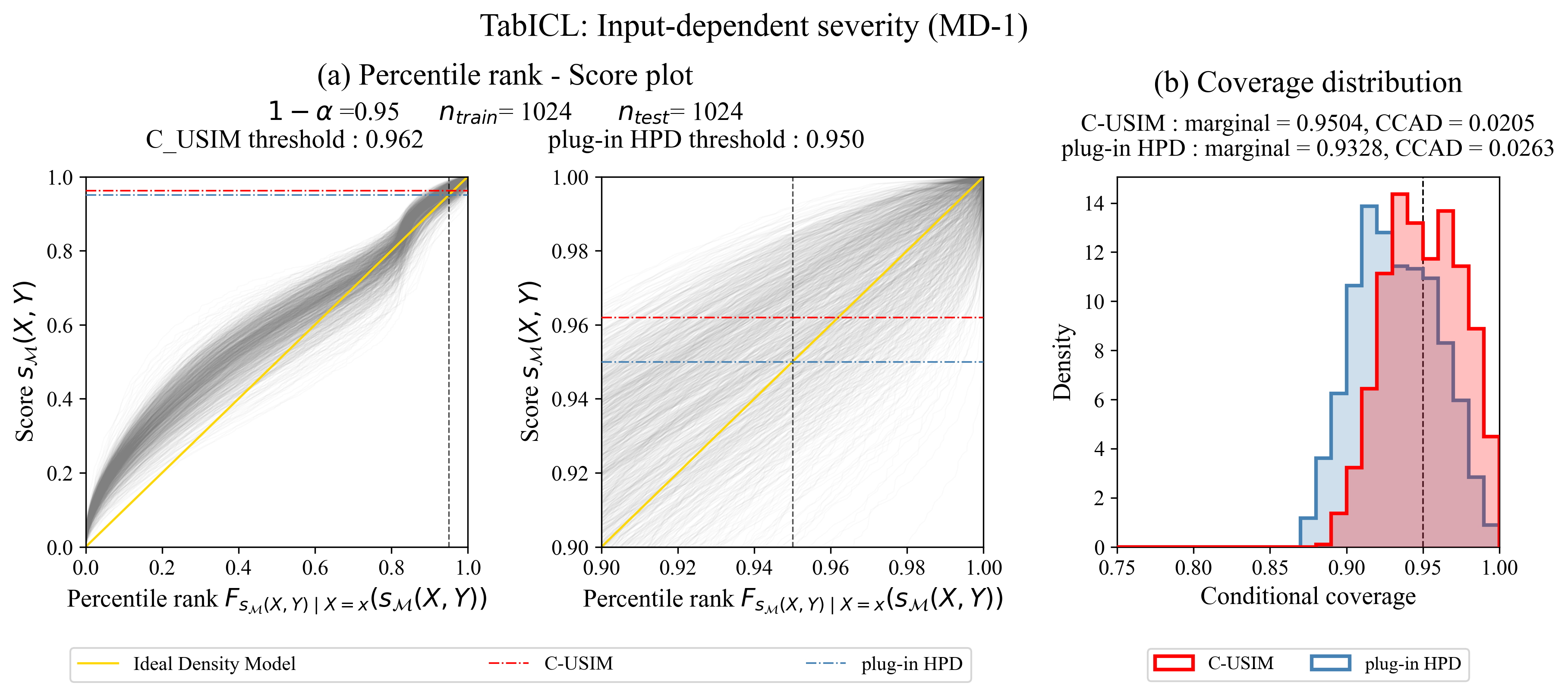}
\end{subfigure}

\caption{Percentile rank--score diagnostics for \ref{dgp:md-1}, seed 2026: C-USIM ($n_{\mathrm{cal}}\to\infty$) and plug-in HPD.}
\end{figure}

\begin{figure}[p]
\centering
\begin{subfigure}[b]{.86\textwidth}
    \centering
    \includegraphics[width=\textwidth]{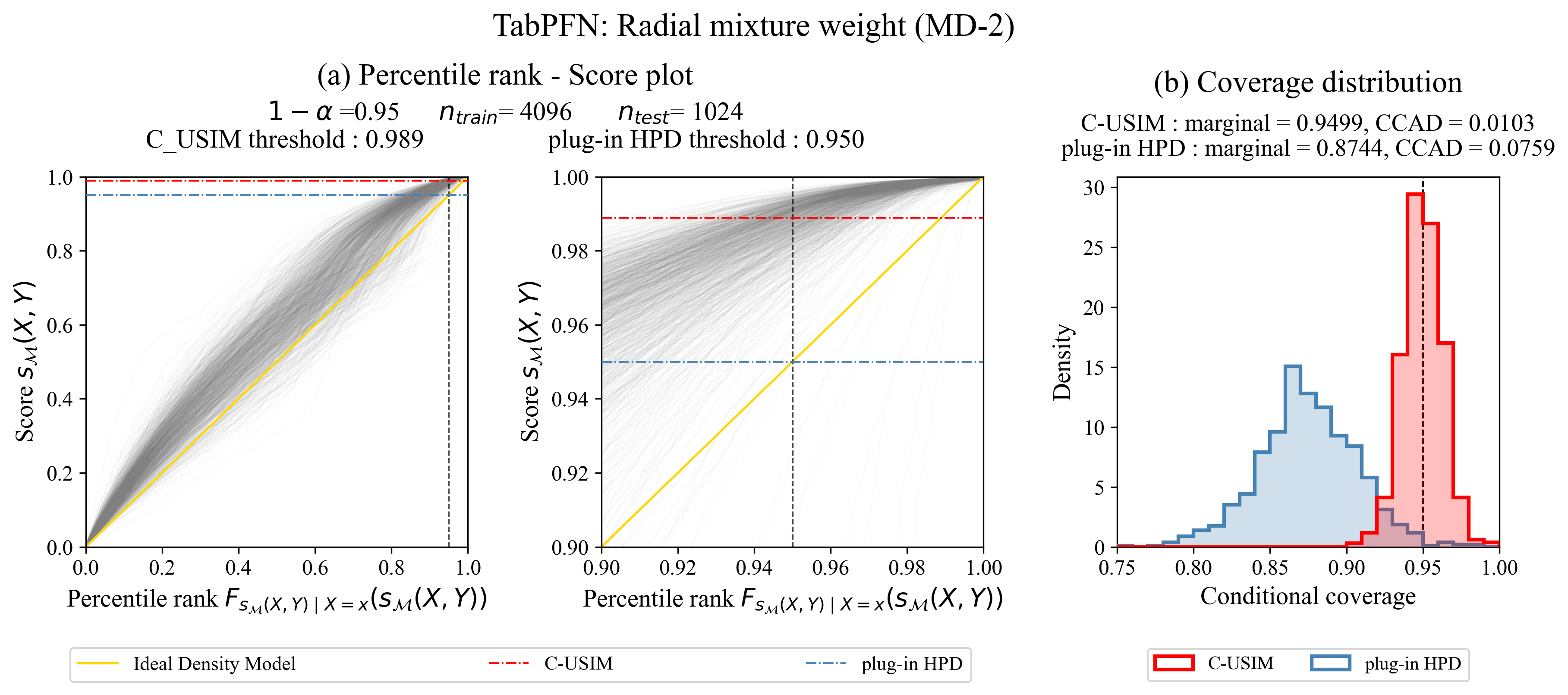}
\end{subfigure}
\hfill
\begin{subfigure}[b]{.86\textwidth}
    \centering
    \includegraphics[width=\textwidth]{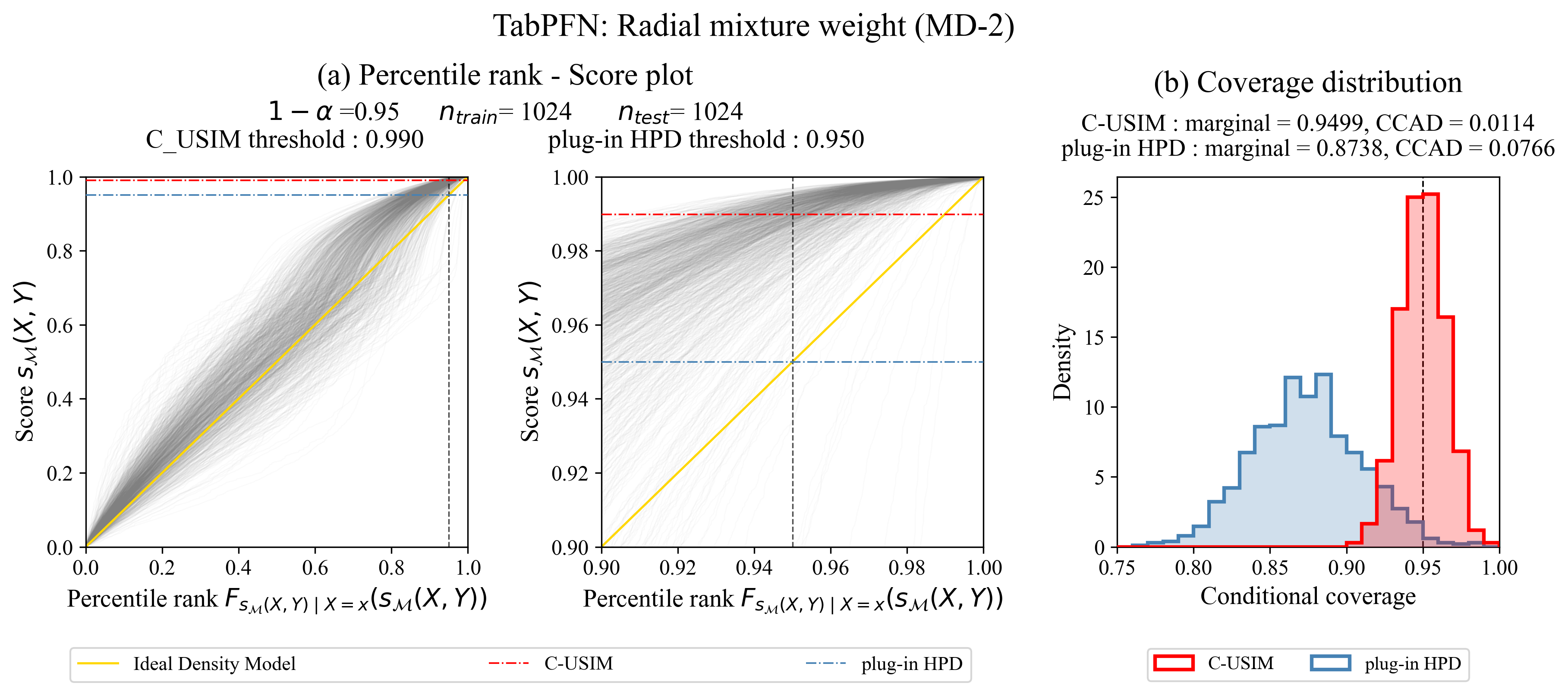}
\end{subfigure}
\hfill
\begin{subfigure}[b]{.86\textwidth}
    \centering
    \includegraphics[width=\textwidth]{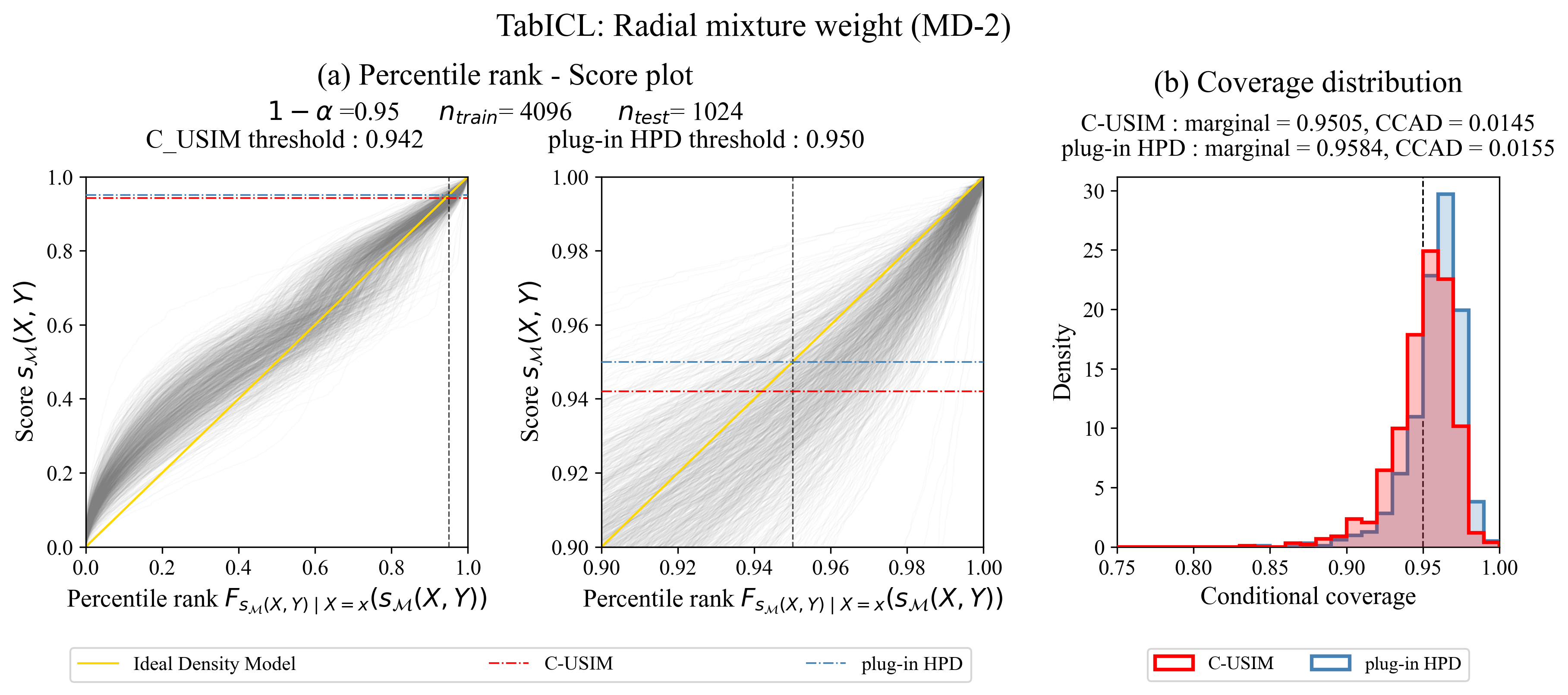}
\end{subfigure}
\hfill
\begin{subfigure}[b]{.86\textwidth}
    \centering
    \includegraphics[width=\textwidth]{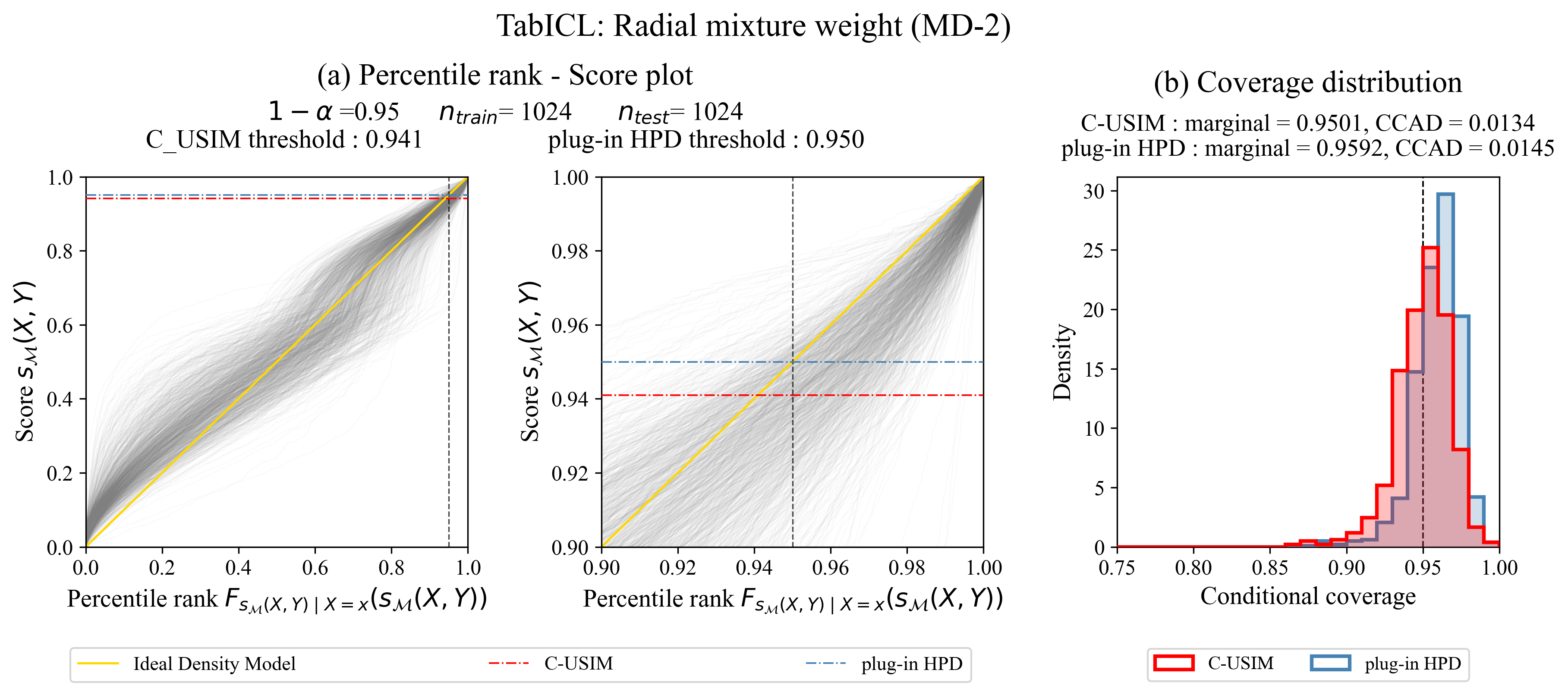}
\end{subfigure}

\caption{Percentile rank--score diagnostics for \ref{dgp:md-2}, seed 2026: C-USIM ($n_{\mathrm{cal}}\to\infty$) and plug-in HPD.}
\end{figure}

\begin{figure}[p]
\centering
\begin{subfigure}[b]{.86\textwidth}
    \centering
    \includegraphics[width=\textwidth]{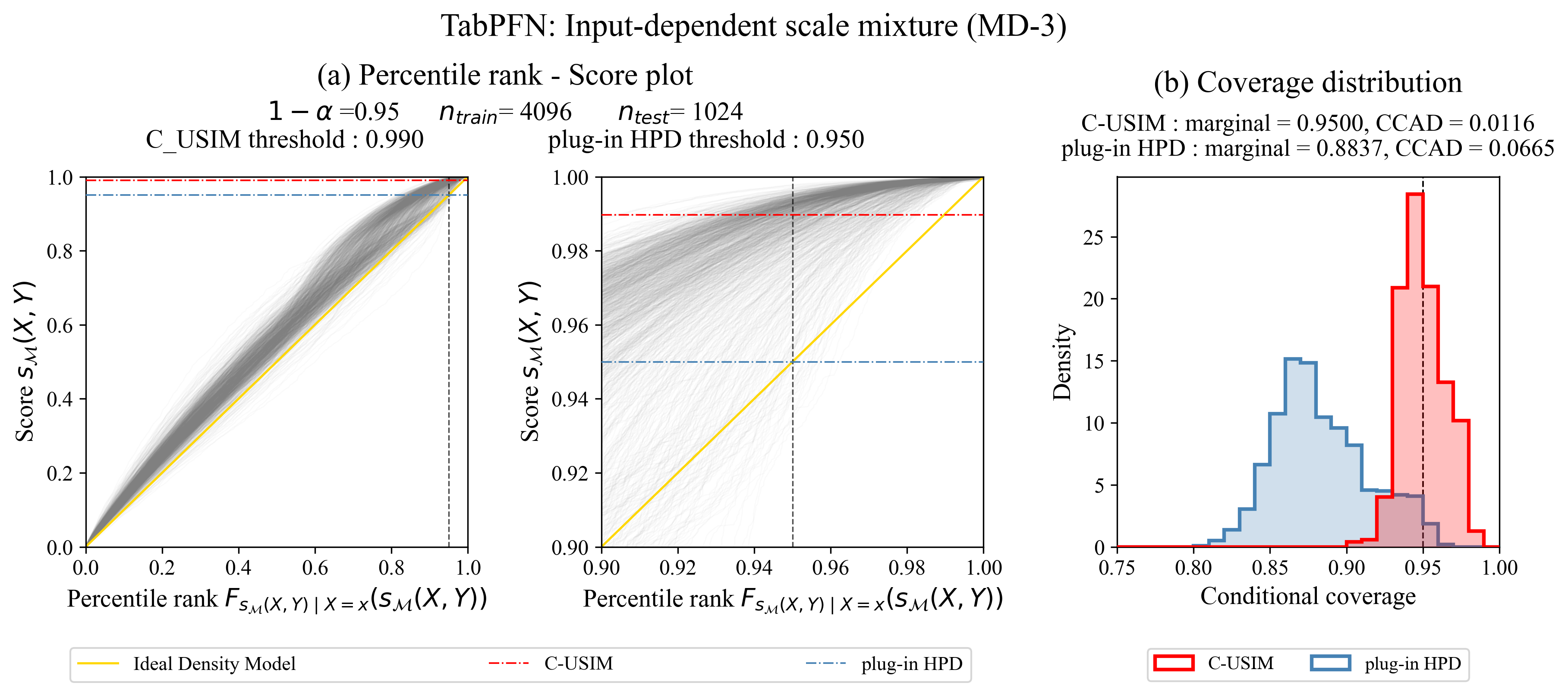}
\end{subfigure}
\hfill
\begin{subfigure}[b]{.86\textwidth}
    \centering
    \includegraphics[width=\textwidth]{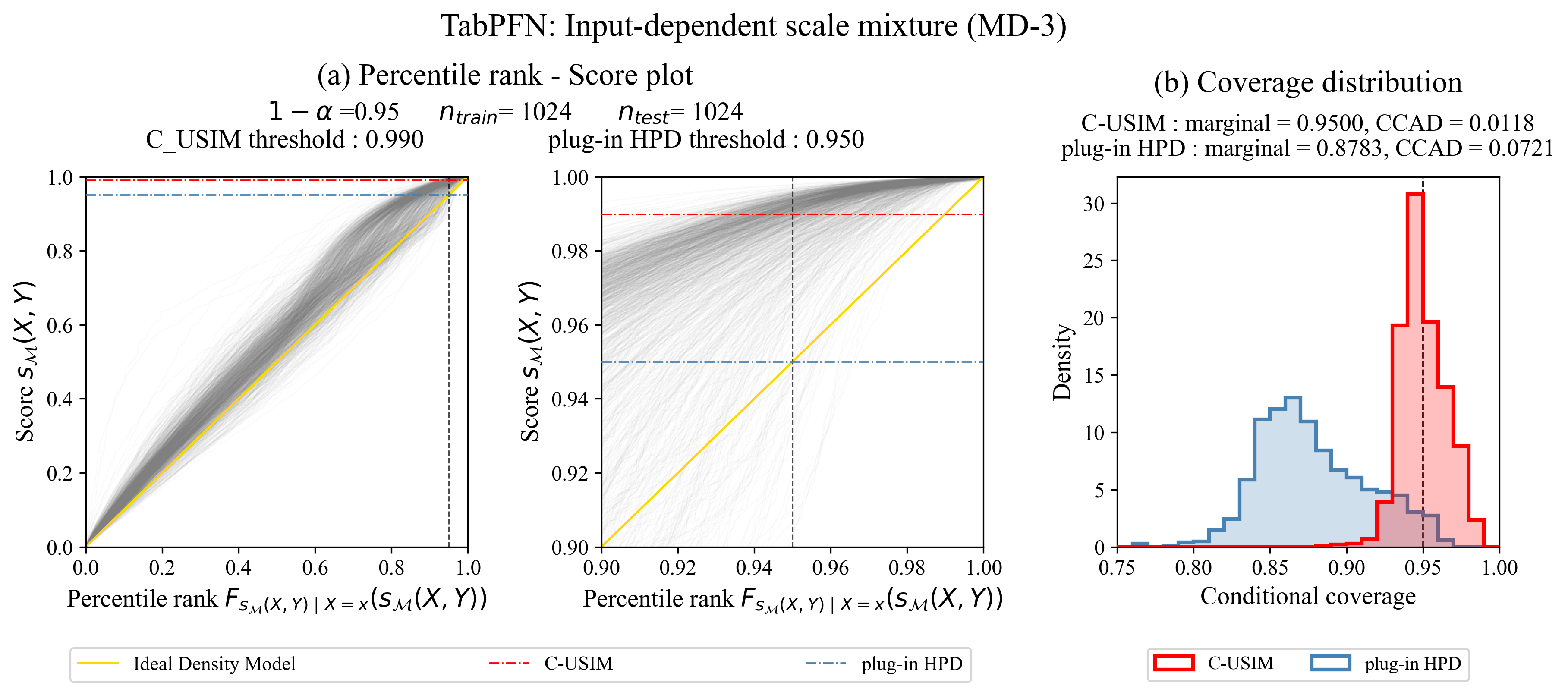}
\end{subfigure}
\hfill
\begin{subfigure}[b]{.86\textwidth}
    \centering
    \includegraphics[width=\textwidth]{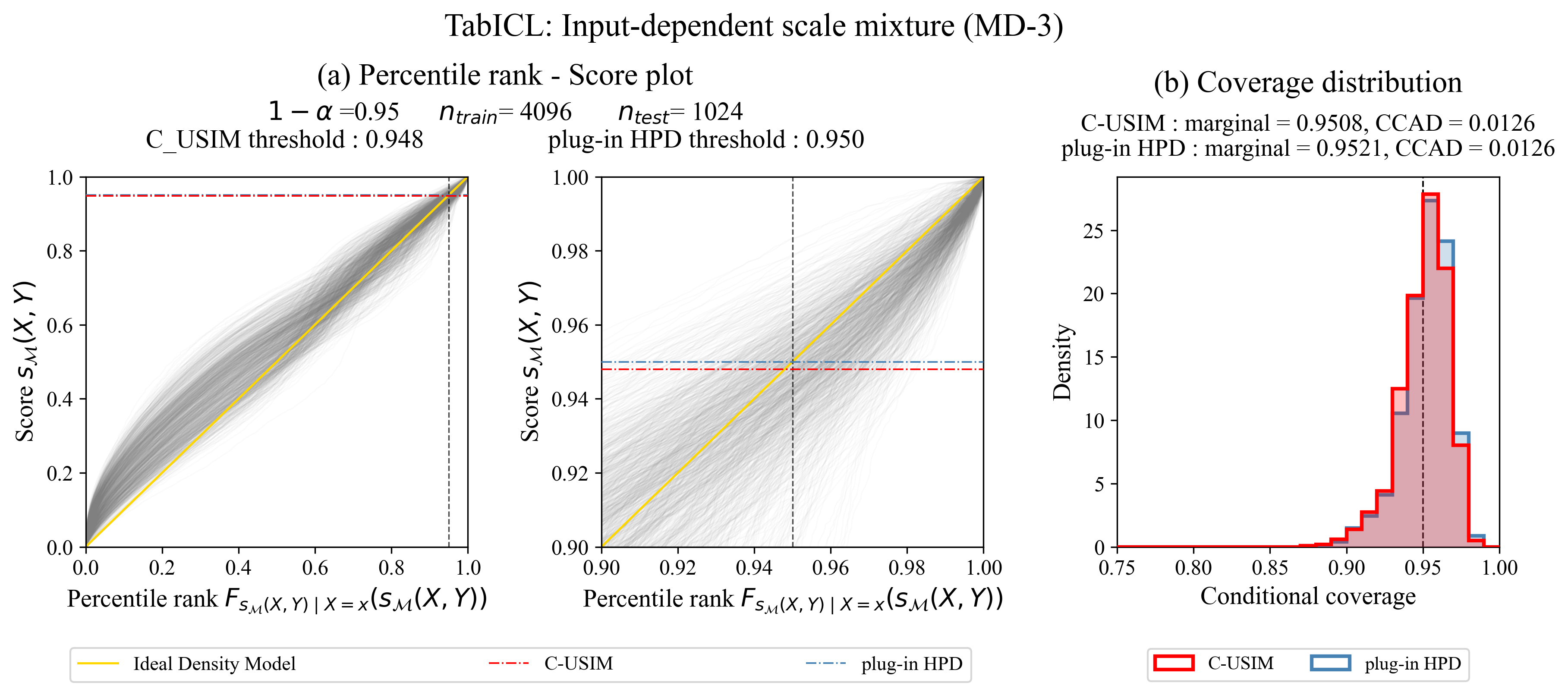}
\end{subfigure}
\hfill
\begin{subfigure}[b]{.86\textwidth}
    \centering
    \includegraphics[width=\textwidth]{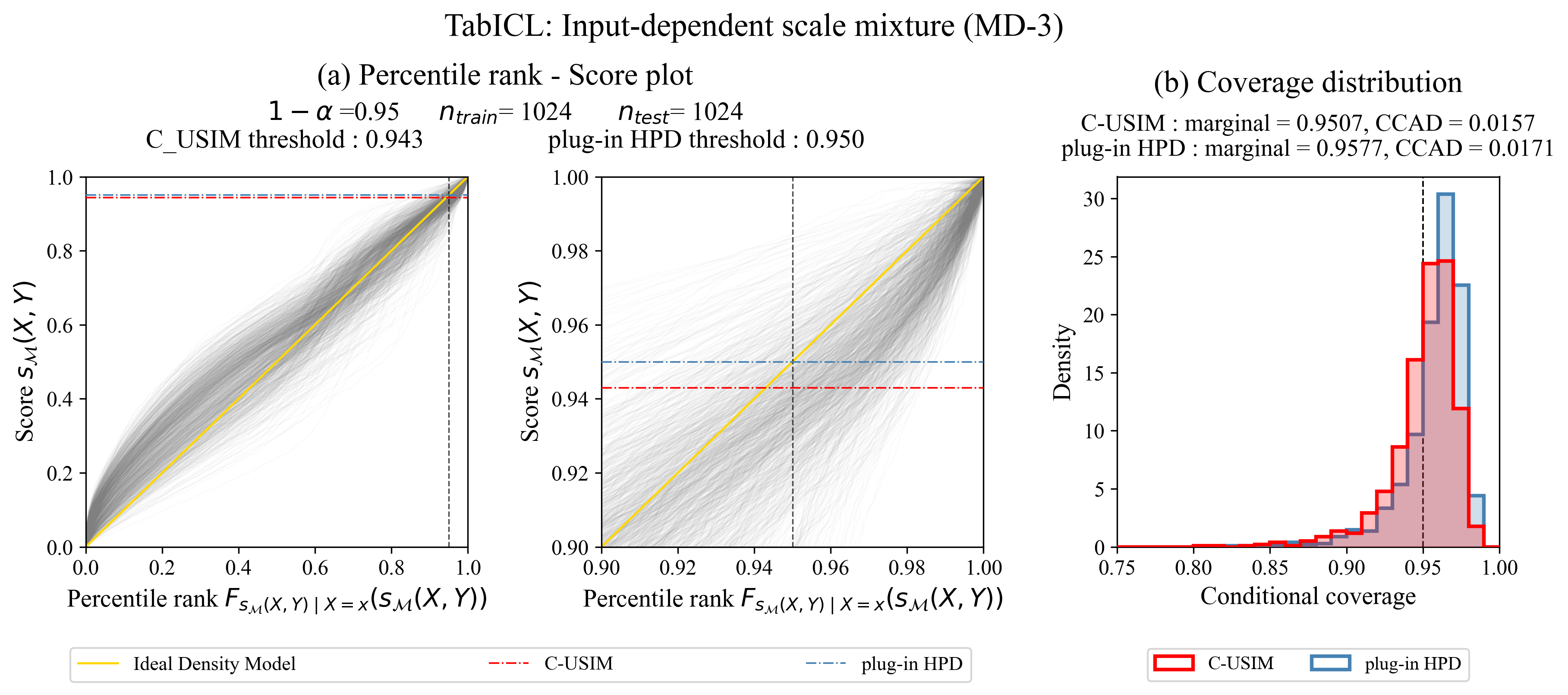}
\end{subfigure}

\caption{Percentile rank--score diagnostics for \ref{dgp:md-3}, seed 2026: C-USIM ($n_{\mathrm{cal}}\to\infty$) and plug-in HPD.}
\end{figure}

\clearpage

\subsection{Evaluation measure definitions}
\label{app:evaluation-measures}

For each run, we condition on the fitted predictor, calibration data, and algorithmic randomness, treating $\widehat{C}_{1-\alpha}$ as a fixed prediction-set function. Its conditional coverage, marginal coverage, and conditional coverage absolute deviation (CCAD) are
\begin{align*}
c(x) &:= \mathbb{P}\!\left(Y\in\widehat{C}_{1-\alpha}(x)\mid X=x\right),
\label{eq:experiment-conditional-coverage}\\
\mathrm{Cov} &:= \mathbb{E}_{X}[c(X)],
&\mathrm{CCAD} &:= \mathbb{E}_{X}\!\left[\left|c(X)-(1-\alpha)\right|\right].
\end{align*}
The expectation is over the test covariate distribution. Below, we describe how these quantities are evaluated on synthetic and real data, followed by prediction-set length and aggregation across runs.

For synthetic data, let $X_i$ be a test input and let $Y_{ij}\sim P(\cdot\mid X_i)$, $j=1,\ldots,M$, be independent conditional response draws, with $M=1000$. Both methods use the same inputs and response draws. With $\mathbf{1}\{\cdot\}$ denoting the indicator function, we compute
\begin{align*}
\widehat{c}_i &= \frac{1}{M}\sum_{j=1}^{M}
\mathbf{1}\!\left\{Y_{ij}\in\widehat{C}_{1-\alpha}(X_i)\right\},
\\
\widehat{\mathrm{Cov}}_{\mathrm{MC}}
&=\frac{1}{n_{\mathrm{test}}}\sum_{i=1}^{n_{\mathrm{test}}}\widehat{c}_i,
&\widehat{\mathrm{CCAD}}
&=\frac{1}{n_{\mathrm{test}}}\sum_{i=1}^{n_{\mathrm{test}}}
\left|\widehat{c}_i-(1-\alpha)\right|.
\end{align*}
The empirical CCAD retains Monte Carlo error from both the test inputs and conditional response draws.

For real data, let $A_1,\ldots,A_K$ be a disjoint covariate partition learned from validation inputs alone. For the test observations $(X_i,Y_i)$, define $I_g=\{i:X_i\in A_g\}$, $n_g=|I_g|$, and
\begin{equation*}
h_i=\mathbf{1}\!\left\{Y_i\in\widehat{C}_{1-\alpha}(X_i)\right\},
\qquad
\widehat{c}_g=\frac{1}{n_g}\sum_{i\in I_g}h_i\quad(n_g>0).
\label{eq:experiment-group-coverage}
\end{equation*}
Observed marginal coverage is $\widehat{\mathrm{Cov}}_{\mathrm{obs}}=n_{\mathrm{test}}^{-1}\sum_{i=1}^{n_{\mathrm{test}}}h_i$. We summarize absolute group coverage errors in two ways. Mean group error gives equal weight to each nonempty test group. CEC-X instead weights each group's absolute error by its share of the test observations:
\begin{equation*}
\widehat{\mathrm{CEC}}_{X,1}
=\sum_{g:\,n_g>0}\frac{n_g}{n_{\mathrm{test}}}
\left|\widehat{c}_g-(1-\alpha)\right|,
\label{eq:experiment-cecx}
\end{equation*}
where empty test groups contribute zero. This is an absolute, not squared, coverage error over a finite partition and is not a direct estimate of pointwise CCAD.

We report prediction-set length as the sum of the lengths of all component intervals, excluding gaps between disjoint intervals. Infinite lengths remain in the averages. For each real-world dataset, coverage uses the entire test set, whereas mean length uses the first 256 inputs in the fixed test order, shared across seeds, models, and methods. The test sets contain 9,731 observations for Journal SJR and Allstate, and 5,372 for JP Anime.

In seed-repeated analyses, absolute gaps, CCAD, mean group error, and CEC-X are computed within each seed before averaging across seeds. Taking the absolute gap of the mean coverage can give a different result.


\subsection{Experimental settings}
\label{app:settings}

Table~\ref{tab:experiment-settings} summarizes the main experimental settings. Settings specific to JP Anime and Allstate Claims Severity are given in Appendix~\ref{app:additional-realdata}. In the synthetic and real-world method comparisons, the two methods share evaluation samples and the model random seed within each run, with a total budget of 1,536 labels and distinct contexts of 1,536 and 512 labels. At target coverage $1-\alpha=0.95$, the calibrated cutoff in these comparisons is the 974th of 1,024 calibration scores. The split-ratio sweep instead varies the context and calibration sizes for C-USIM while keeping $n_{\mathrm{train}}+n_{\mathrm{cal}}=1536$.

\begin{table}[htbp]
\centering
\caption{Experimental settings. Synthetic and split-ratio sample counts are per mechanism and run; conditional response draws are additional Monte Carlo samples. Split-ratio coverage and CCAD use the known conditional CDF rather than the stored response draws. A dash denotes an unused setting.}
\label{tab:experiment-settings}
\small
\begin{tabular}{@{}p{0.32\linewidth}p{0.16\linewidth}p{0.23\linewidth}p{0.19\linewidth}@{}}
\hline
Setting & Synthetic & Journal SJR & Split ratio \\
\hline
Dataset size & 1,792 per run & 27,803 cleaned rows & 1,792 per run \\
Total label budget & 1,536 & 1,536 & 1,536 \\
Input dimensions & 1, 5, 10, 20 & 7 (categorical) & 1, 5, 10, 20 \\
Run seeds & 100--109 & 12100--12299 & 25600--25649 \\
Number of seeds & 10 & 200 & 50 \\
Plug-in context observations & 1,536 & 1,536 & -- \\
C-USIM context observations & 512 & 512 & 256--1,280 \\
Calibration observations & 1,024 & 1,024 & 256--1,280 \\
Validation observations & -- & 512 & -- \\
Test observations & 256 & 9,731 & 256 \\
Conditional draws per test input & 1,000 & -- & 1,000 (stored) \\
Target coverage & 95\% & 95\% & 95\% \\
Group counts ($K$) & -- & 5, 10, 15, 20, 30, 40 & -- \\
Clustering seeds & -- & 1717, 2717, 3717, 4717, 5717 & -- \\
Representative grouping & -- & $K=10$, seed 1717 & -- \\
\hline
\multicolumn{4}{@{}p{\linewidth}@{}}{Split-ratio training--calibration allocations:
$256{:}1280$, $512{:}1024$, $768{:}768$, $1024{:}512$, $1229{:}307$, and
$1280{:}256$. The fifth allocation is the closest integer split to $8{:}2$.
All six synthetic mechanisms are evaluated with both models.} \\
\hline
\end{tabular}
\end{table}

\paragraph{Synthetic sampling.} Conditional response draws are independent at each test input and shared across models and allocation arms. Table~\ref{tab:experiment-settings} reports the number of test inputs and response draws; generating equations are given in Appendix~\ref{app:synthetic-dgps}.

\paragraph{Journal SJR preprocessing and groups.} The categorical inputs are publication type, country, region, publisher, coverage years, subject categories, and subject areas. The response and prediction-set lengths use the $\log_{10}(H\text{-index}+1)$ scale. For TFM inference, missing categories receive a dedicated token, and categorical inputs are retained. Each seed uses the same context/calibration row allocation for both models. For each fitted context, all query covariates are processed in one prediction call: 9,731 test inputs for plug-in HPD, and 1,024 calibration plus 9,731 test inputs for C-USIM. Calibration scores and the cutoff are computed from this joint output, using the finite-density reconstruction described above. For Journal SJR, the validation and test sets remain fixed across run seeds and are shared by both models. For group construction, we impute missing categorical values with their validation-set modes and apply one-hot encoding before fitting $K$-means with 10 initializations per clustering seed. The imputer, encoder, and cluster centers are fitted on validation covariates alone, without using responses. Test covariates undergo the same transformation and are assigned to their nearest cluster centers by Euclidean distance in the encoded feature space; categories absent from validation are encoded as all zeros in the corresponding feature block. We evaluate every combination of the six group counts and five clustering seeds in Table~\ref{tab:experiment-settings}, giving 30 groupings in total. Each grouping uses its specified number of groups, with group assignments shared by both models and methods and reused across run seeds. The representative grouping was chosen before the expanded evaluation. For each seed, we compute $|\widehat c_g-0.95|$ for every nonempty test group before averaging.

\paragraph{Numerical implementation and checks.} Both models are configured with eight estimators and FP32 parameters and outputs. For one-dimensional inputs, the default TabICL ensemble generator produces two distinct preprocessing configurations; for the multidimensional inputs it produces eight. TabICL computes scaled dot-product attention in FP64 before returning FP32 activations; this numerical policy is shared by the synthetic and real-world studies. For every synthetic fitted context, numerical checks compare repeated predictions and predictions under singleton, short, extreme-mixed, and permuted queries. For every real-world seed and fitted context, we permute the entire query table, restore the original row order, and compare predictions at every query input. We also verify that each ensemble forward receives the full query table; internal batching over ensemble members preserves all query rows. For the additional datasets, FP64 attention may also be batched over independent leading batch items to limit memory use; the query and key dimensions of each attention problem are kept intact. All checks use the fixed tolerance $10^{-4}+10^{-4}|\mathrm{reference}|$. These checks support the reported numerical implementation but do not establish query independence for every possible input.

\paragraph{Computational cost.} The lightweight aspect of C-USIM is that, once the joint calibration and test outputs are available, it requires only post-processing, with no additional training or model inference. No separate density or score-correction model is fitted. For a reconstructed density with $B$ intervals, the implementation sorts density levels once and computes cumulative probability masses, combining all intervals with equal density into the same score level. This requires $O(B\log B)$ time and $O(B)$ storage per query; extracting the prediction set from these quantities takes $O(B)$ time. Calibration adds an empirical order-statistic calculation. These costs exclude TFM inference.

\subsection{Detailed experimental results}
\label{app:detailed-results}
We report the synthetic results first, followed by Journal SJR and the two additional real-world datasets. The tables summarize performance across seeds, and the figures show the corresponding distributions and illustrative prediction sets.

\subsubsection{Synthetic}
\label{app:synthetic-results}
Table~\ref{tab:synthetic-calibration} and Figure~\ref{fig:experiment-calibration} summarize the six selected mechanisms over ten seeds per model. Figures~\ref{fig:synthetic-notebook-1d-1}--\ref{fig:synthetic-notebook-md-3} show prediction sets and estimated conditional-coverage distributions for each mechanism, with TabPFN above TabICL. These illustrations use seed 100, fixed before evaluation. The comparison allocates the same 1,536 labels to either the plug-in context or the C-USIM context and calibration sets; the fitted predictors therefore differ between methods.

For TabICL, marginal coverage improves on \ref{dgp:1d-1} while CCAD increases. Undercoverage and overcoverage across inputs can offset each other in marginal coverage, whereas both contribute to the absolute errors measured by CCAD. With the training context and the resulting TFM predictive distributions held unchanged, raising only the shared calibration threshold in the rank--score plot can reduce undercoverage at some inputs while increasing overcoverage at others. If the added overcoverage outweighs the reduction in undercoverage on average, CCAD increases even as marginal coverage approaches the target. This balance depends on the model-induced score distributions across inputs. The fitted predictors also differ in the present comparison, so the observed CCAD difference does not isolate the effect of threshold adjustment.

\paragraph{Reading the prediction-set plots.} For each model and mechanism, plots (a) and (b) show Plug-in HPD(1536) and C-USIM, respectively. They display 80 of the 256 test inputs, selected at evenly spaced ranks of the first covariate using the same rows for both methods and models. Each prediction set is centered at its observed test response, so the black vertical line at zero marks that response; gray sets miss it. The two methods share an interval-axis range within each model and mechanism. All finite interval endpoints are retained, and arrows indicate unbounded components. In the multidimensional examples, ordering by the first covariate does not define a one-dimensional conditional slice.

\paragraph{Reading the coverage distributions.} Plot (c) uses all 256 test inputs with equal weight. Conditional coverage at each input is estimated from 1,000 shared draws from the known conditional response distribution. Both methods use the same 35 histogram bins, spanning their combined observed range with padding; no observations are removed. A dashed line marks the $0.95$ target.


\begin{figure}[p]
\centering
\includegraphics[width=\linewidth,height=0.21\textheight,keepaspectratio]{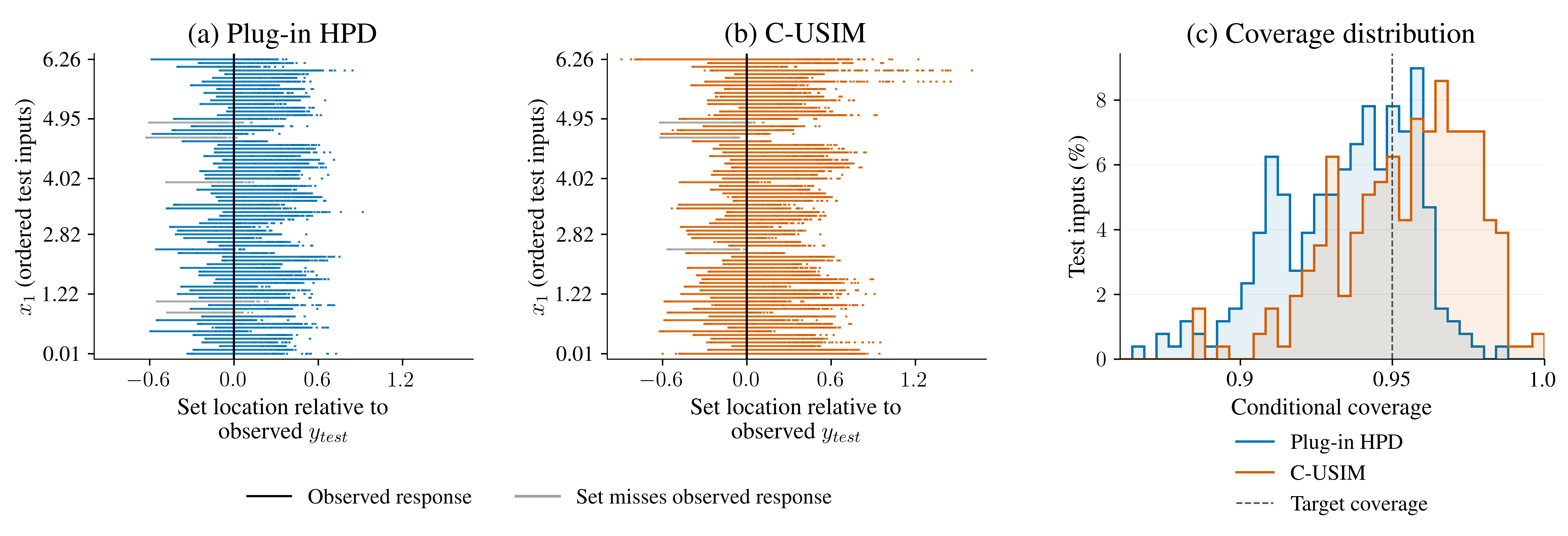}
\par
\includegraphics[width=\linewidth,height=0.21\textheight,keepaspectratio]{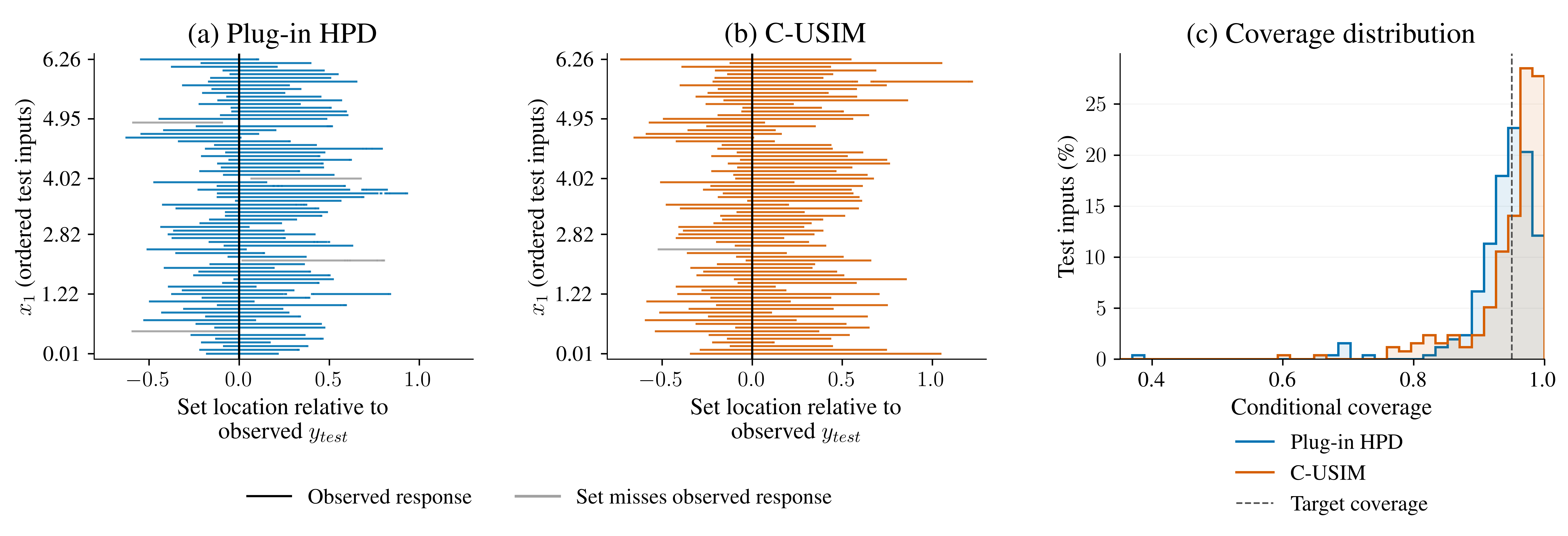}
\caption{\ref{dgp:1d-1}, seed 100. Plotting details: Appendix~\ref{app:synthetic-results}. }
\label{fig:synthetic-notebook-1d-1}
\end{figure}

\begin{figure}[p]
\centering
\includegraphics[width=\linewidth,height=0.21\textheight,keepaspectratio]{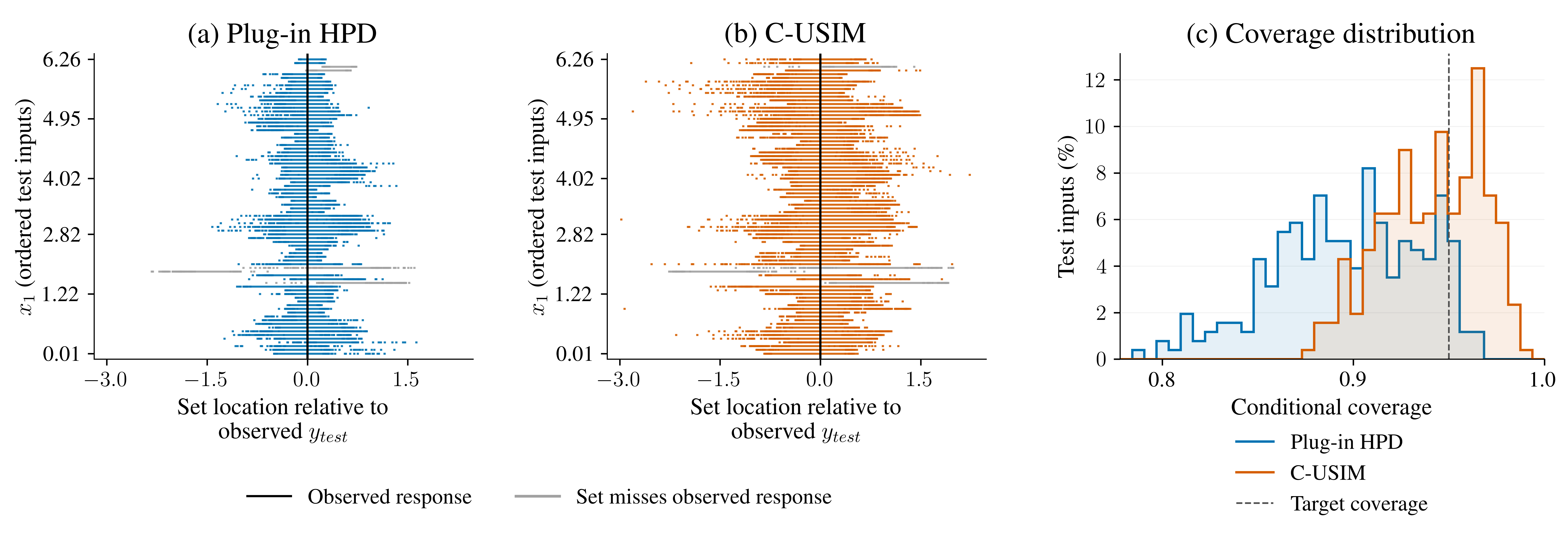}
\par
\includegraphics[width=\linewidth,height=0.21\textheight,keepaspectratio]{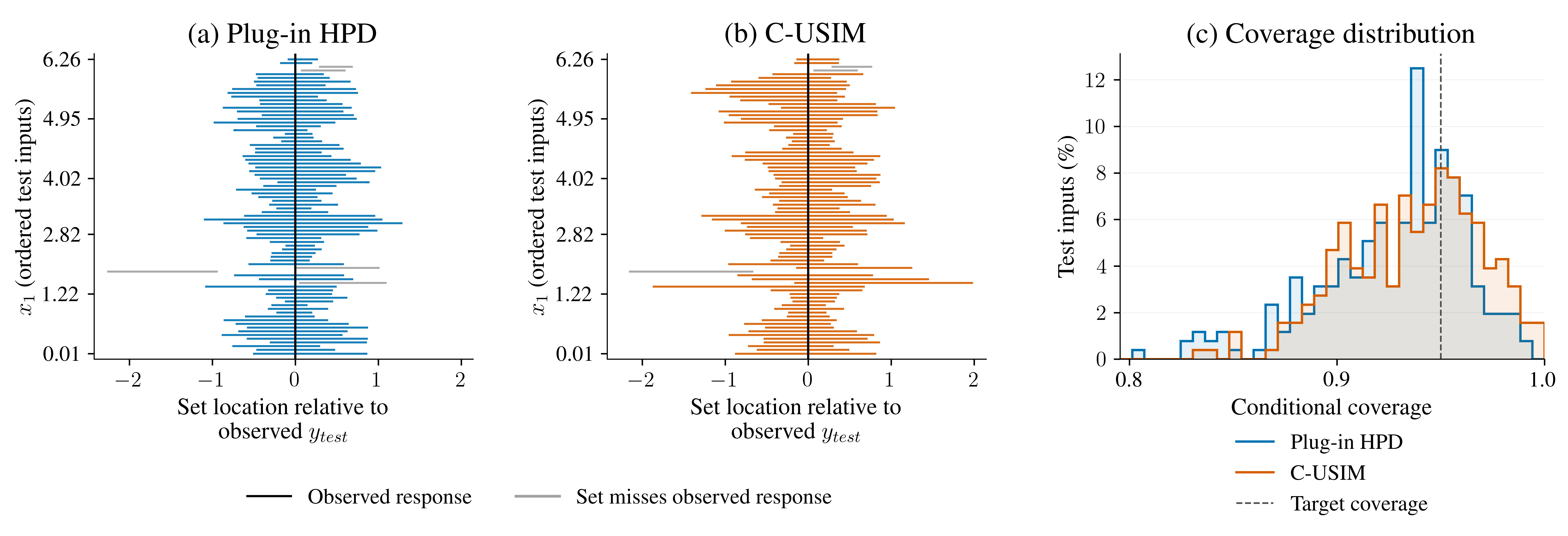}
\caption{\ref{dgp:1d-2}, seed 100. Plotting details: Appendix~\ref{app:synthetic-results}. }
\label{fig:synthetic-notebook-1d-2}
\end{figure}

\clearpage

\begin{figure}[p]
\centering
\includegraphics[width=\linewidth,height=0.21\textheight,keepaspectratio]{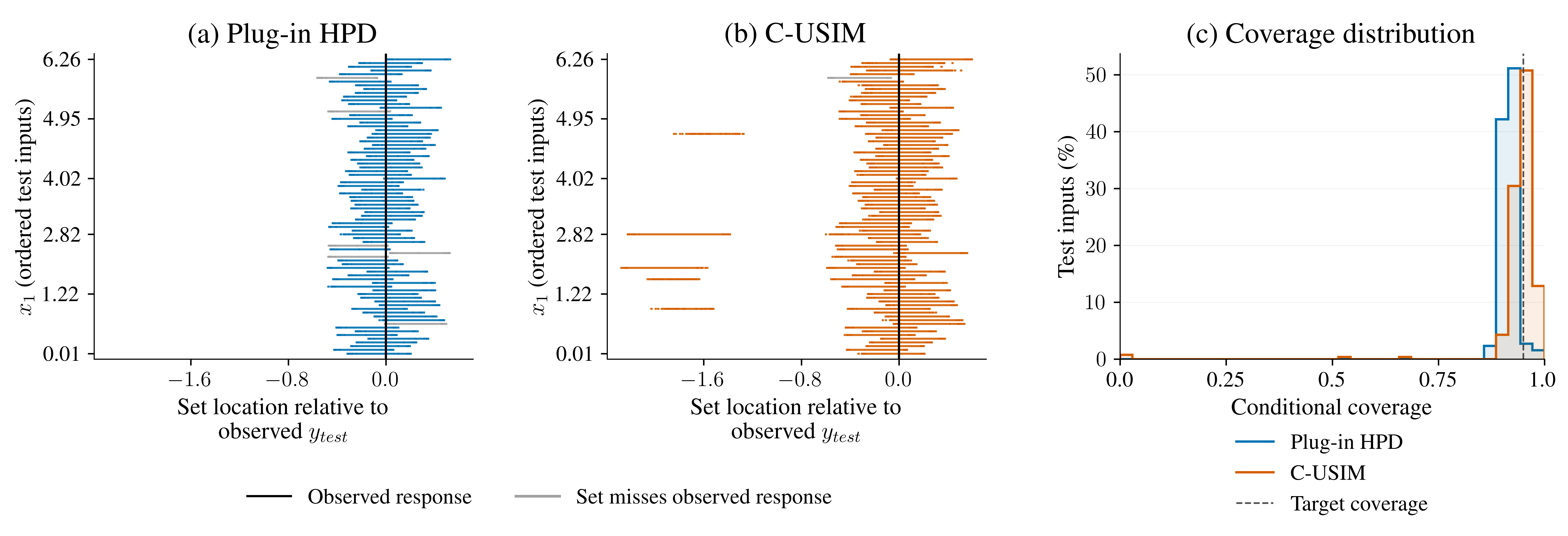}
\par
\includegraphics[width=\linewidth,height=0.21\textheight,keepaspectratio]{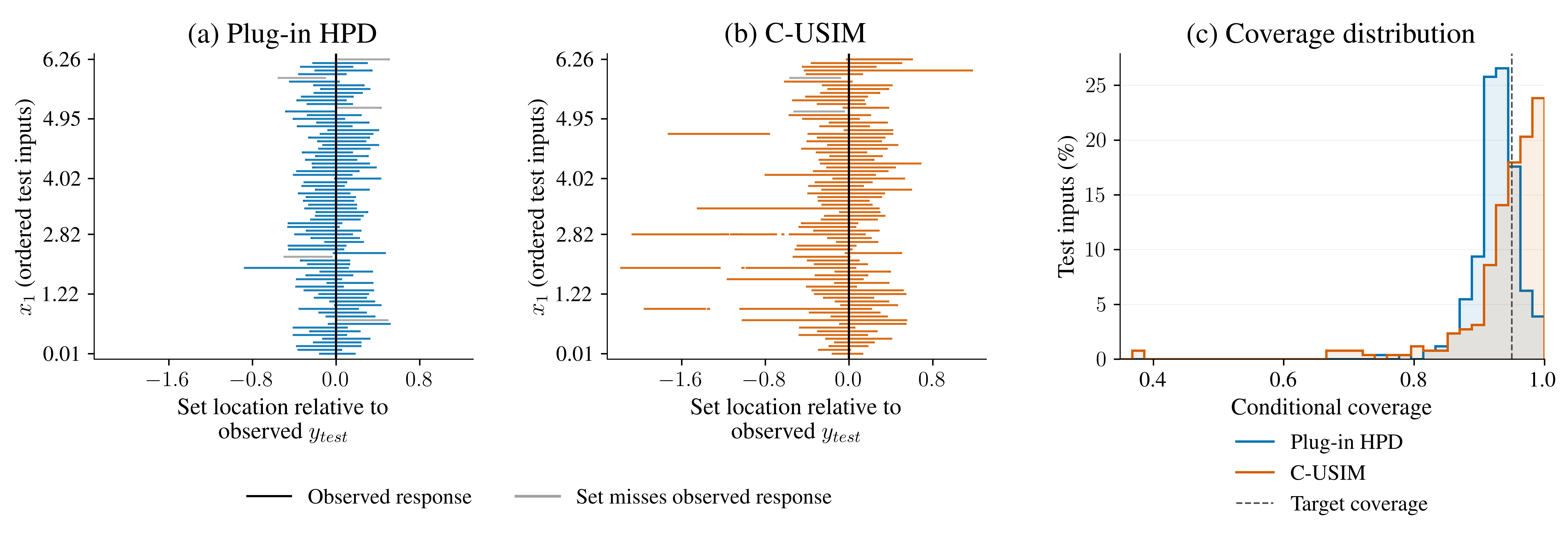}
\caption{\ref{dgp:1d-3}, seed 100. Plotting details: Appendix~\ref{app:synthetic-results}. }
\label{fig:synthetic-notebook-1d-3}
\end{figure}

\begin{figure}[p]
\centering
\includegraphics[width=\linewidth,height=0.21\textheight,keepaspectratio]{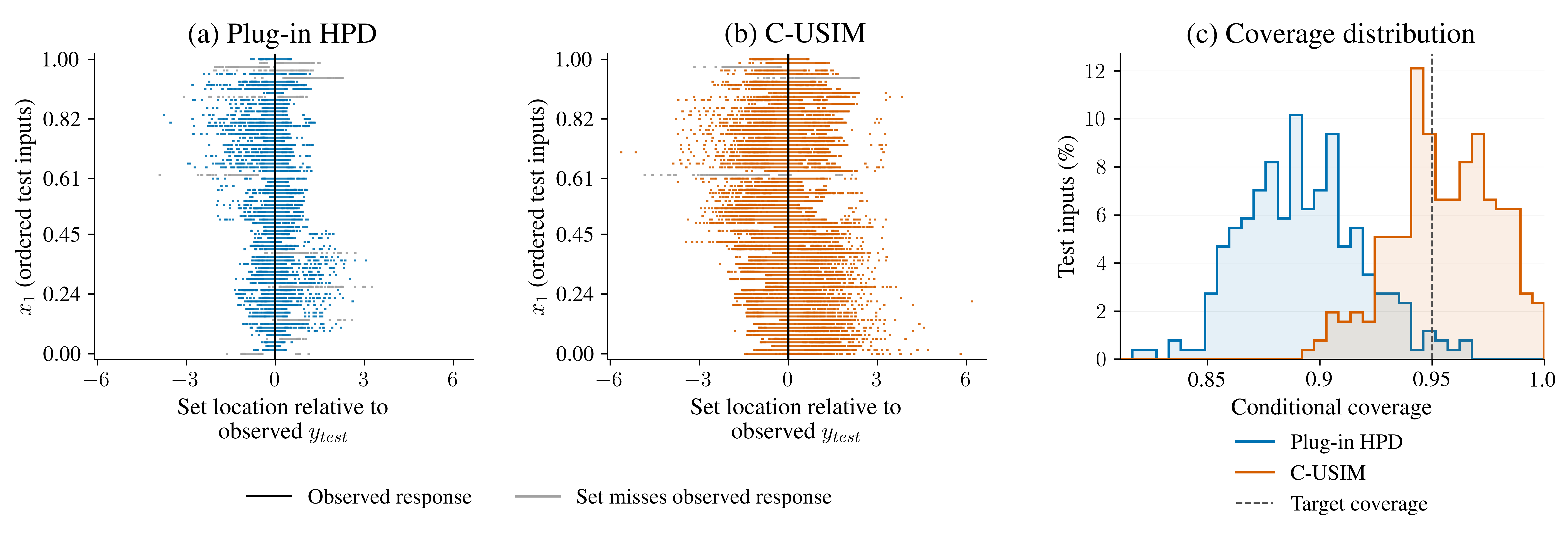}
\par
\includegraphics[width=\linewidth,height=0.21\textheight,keepaspectratio]{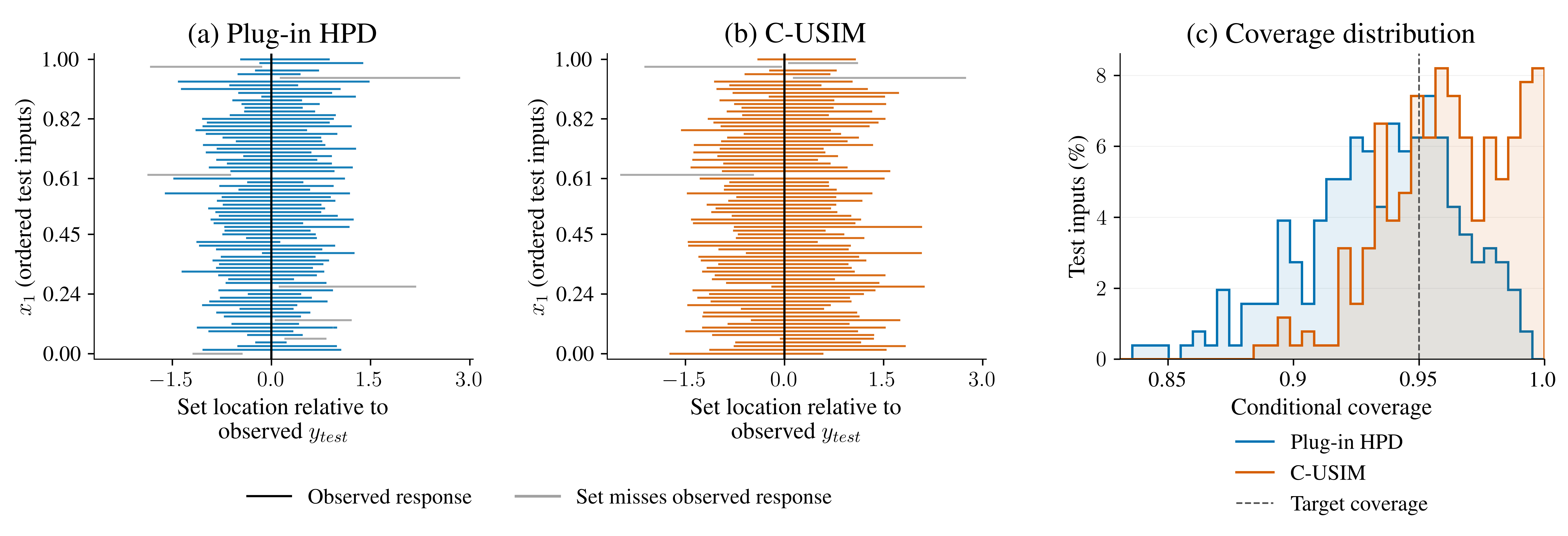}
\caption{\ref{dgp:md-1}, seed 100. Plotting details: Appendix~\ref{app:synthetic-results}. }
\label{fig:synthetic-notebook-md-1}
\end{figure}

\clearpage

\begin{figure}[p]
\centering
\includegraphics[width=\linewidth,height=0.21\textheight,keepaspectratio]{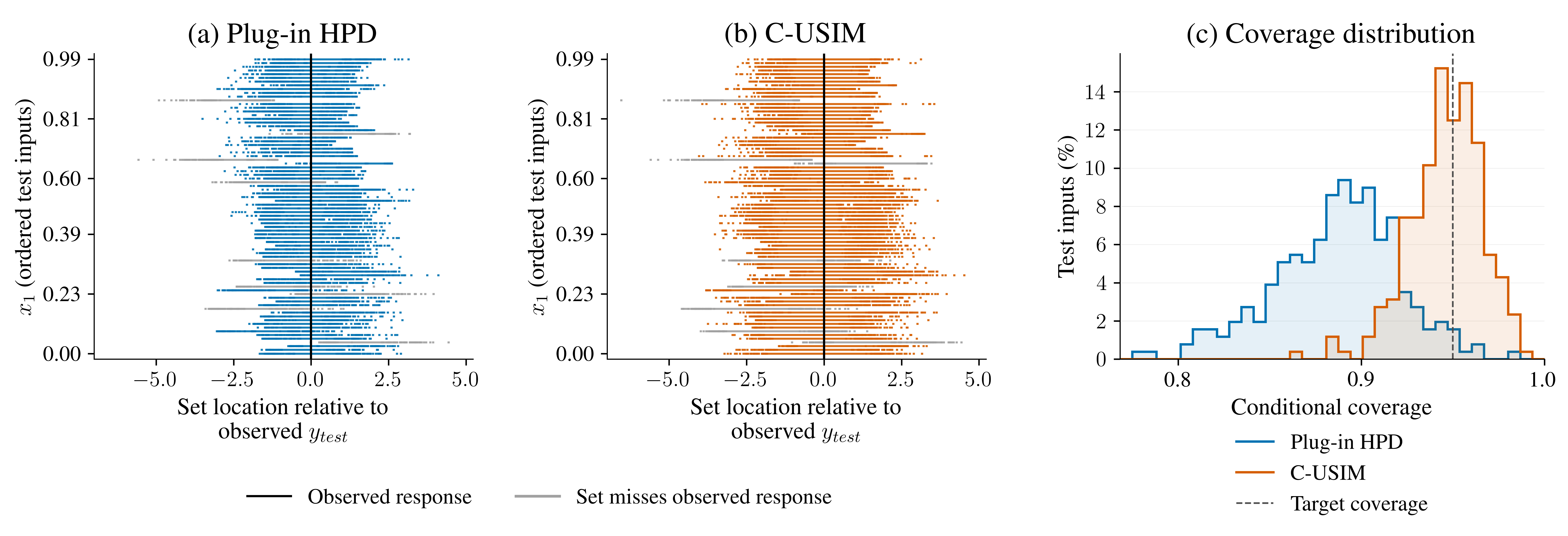}
\par
\includegraphics[width=\linewidth,height=0.21\textheight,keepaspectratio]{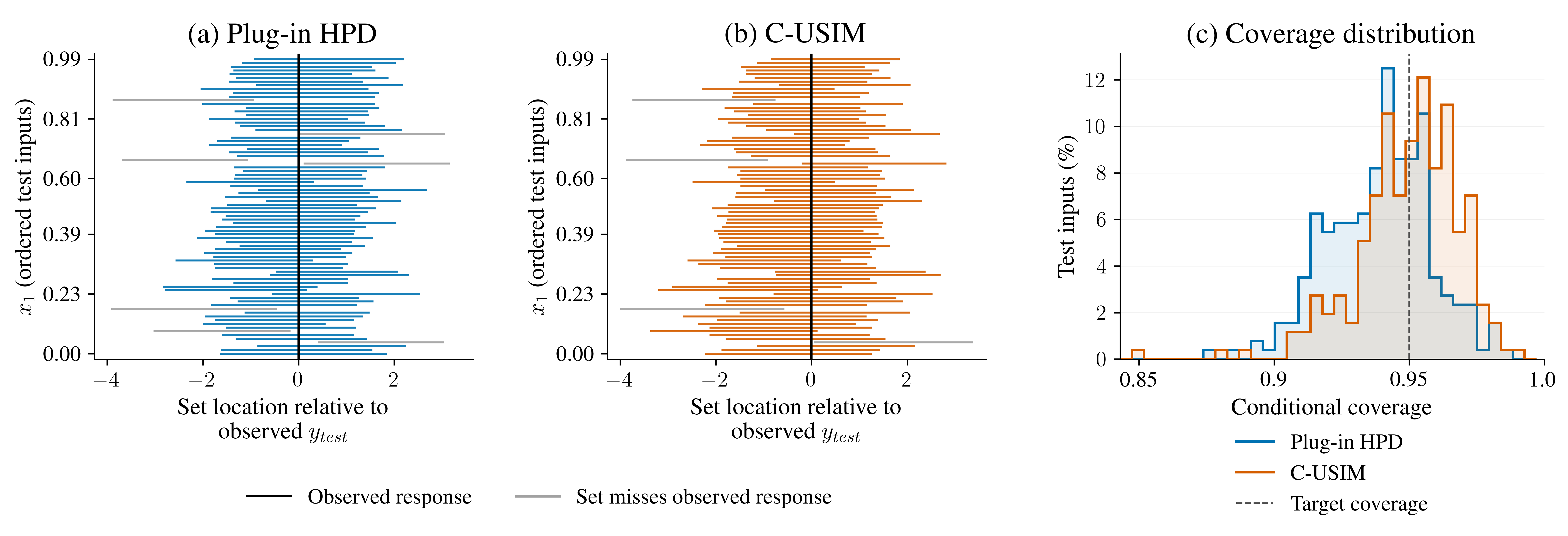}
\caption{\ref{dgp:md-2}, seed 100. Plotting details: Appendix~\ref{app:synthetic-results}. }
\label{fig:synthetic-notebook-md-2}
\end{figure}

\begin{figure}[p]
\centering
\includegraphics[width=\linewidth,height=0.21\textheight,keepaspectratio]{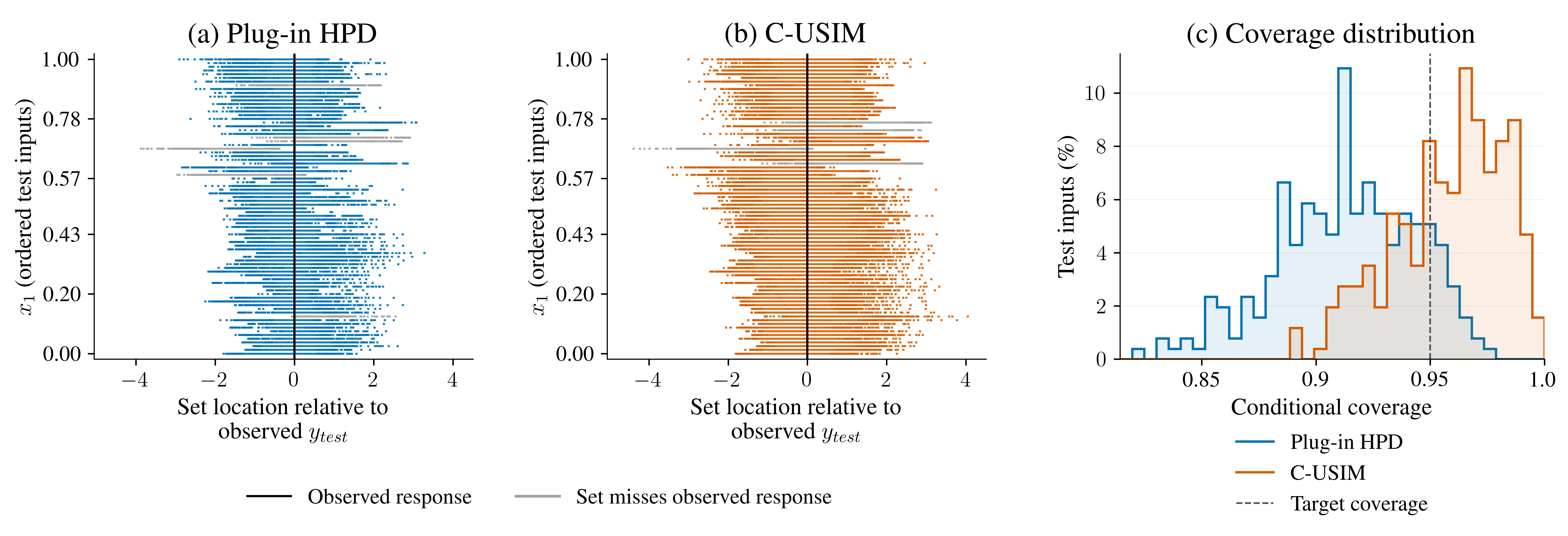}
\par
\includegraphics[width=\linewidth,height=0.21\textheight,keepaspectratio]{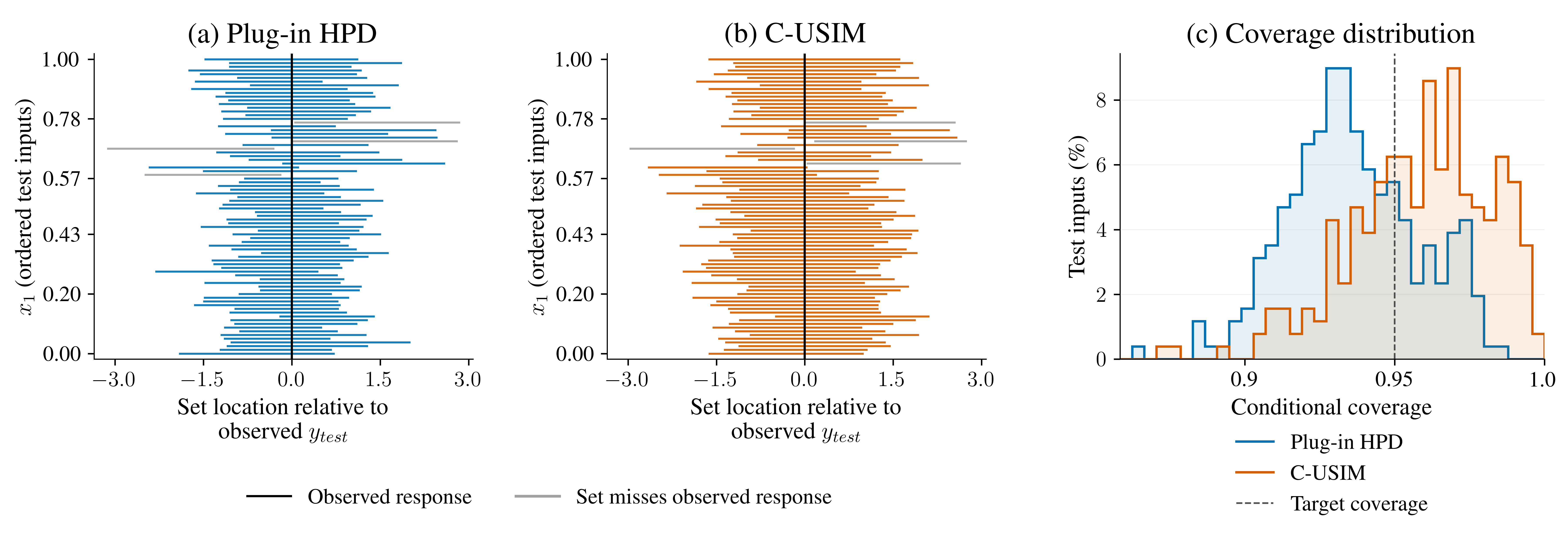}
\caption{\ref{dgp:md-3}, seed 100. Plotting details: Appendix~\ref{app:synthetic-results}. }
\label{fig:synthetic-notebook-md-3}
\end{figure}

\clearpage

\begin{table}[htbp]
\centering
\caption{Synthetic comparison under a fixed budget of 1,536 labels, averaged over ten seeds per model and case. Each arrow denotes Plug-in HPD(1536) $\rightarrow$ C-USIM(512+1024). Coverage is in percent and CCAD in percentage points (pp). Cases are defined in Appendix~\ref{app:synthetic-dgps}.}

\label{tab:synthetic-calibration}
\small
\begin{tabular}{llccc}
\hline
Model & Example & Coverage (\%) & CCAD (pp) & Mean length \\
\cline{3-5}
& & \multicolumn{3}{c}{Plug-in HPD(1536) $\rightarrow$ C-USIM} \\
\hline
TabPFN & \ref{dgp:1d-1} & $91.92 \rightarrow 95.28$ & $3.572 \rightarrow 2.159$ & $0.571 \rightarrow 0.717$ \\
 & \ref{dgp:1d-2} & $89.77 \rightarrow 94.74$ & $5.384 \rightarrow 2.186$ & $0.754 \rightarrow 1.290$ \\
 & \ref{dgp:1d-3} & $91.63 \rightarrow 94.91$ & $3.513 \rightarrow 1.708$ & $0.483 \rightarrow 0.580$ \\
 & \ref{dgp:md-1} & $89.05 \rightarrow 95.33$ & $5.986 \rightarrow 2.201$ & $0.915 \rightarrow 2.100$ \\
 & \ref{dgp:md-2} & $88.24 \rightarrow 94.81$ & $6.854 \rightarrow 1.642$ & $2.274 \rightarrow 3.313$ \\
 & \ref{dgp:md-3} & $89.64 \rightarrow 95.07$ & $5.533 \rightarrow 2.054$ & $1.706 \rightarrow 2.706$ \\
\hline
TabICL & \ref{dgp:1d-1} & $94.19 \rightarrow 95.31$ & $2.961 \rightarrow 3.385$ & $0.617 \rightarrow 0.747$ \\
 & \ref{dgp:1d-2} & $93.48 \rightarrow 94.93$ & $2.741 \rightarrow 2.632$ & $1.044 \rightarrow 1.287$ \\
 & \ref{dgp:1d-3} & $92.93 \rightarrow 95.13$ & $3.186 \rightarrow 3.168$ & $0.535 \rightarrow 0.725$ \\
 & \ref{dgp:md-1} & $93.08 \rightarrow 95.43$ & $2.930 \rightarrow 2.644$ & $1.402 \rightarrow 1.901$ \\
 & \ref{dgp:md-2} & $93.42 \rightarrow 95.01$ & $2.220 \rightarrow 1.572$ & $2.759 \rightarrow 3.109$ \\
 & \ref{dgp:md-3} & $93.11 \rightarrow 95.17$ & $2.679 \rightarrow 2.266$ & $2.087 \rightarrow 2.609$ \\
\hline
\end{tabular}
\end{table}

\begin{figure}[htbp]
\centering
\includegraphics[width=\linewidth]{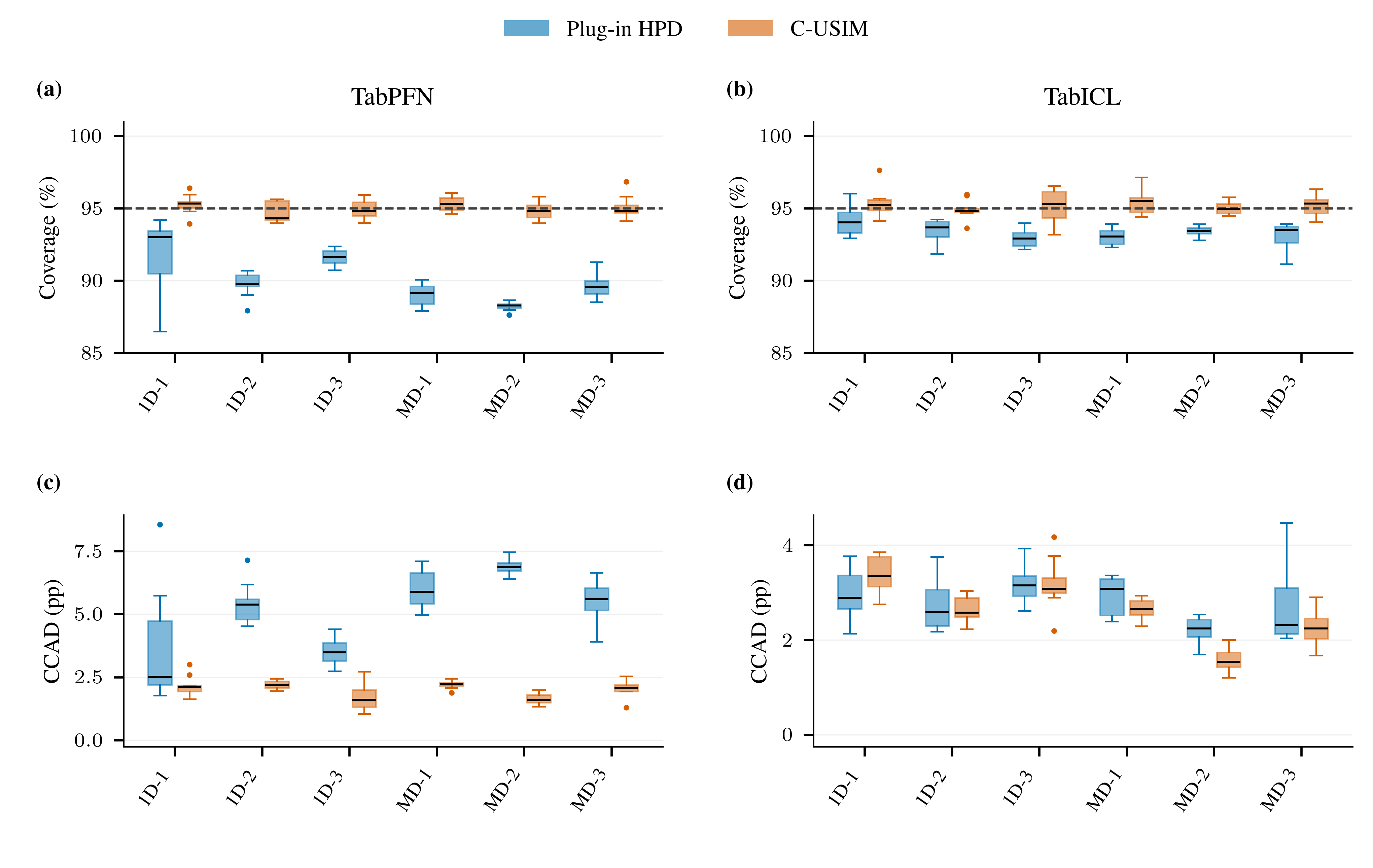}
\caption{Synthetic coverage (a,b) and CCAD (c,d) across ten seeds for Plug-in HPD(1536) and C-USIM(512+1024); methods and label budgets follow Table~\ref{tab:synthetic-calibration}. Boxes show Q1--Q3 and medians, with 1.5-IQR whiskers and all outliers. Dashed lines mark 95\% coverage. CCAD is in percentage points on model-specific scales. These seed distributions are not confidence intervals.}
\label{fig:experiment-calibration}
\end{figure}

\clearpage

\subsubsection{Journal SJR}
\label{app:sjr-results}
Table~\ref{tab:tabpfn-real-coverage} and Figure~\ref{fig:sjr-calibration} report the fixed-budget comparison between Plug-in HPD(1536) and C-USIM for Journal SJR.

\begin{figure}[htb]
\centering
\includegraphics[width=\linewidth,height=0.47\textheight,keepaspectratio]{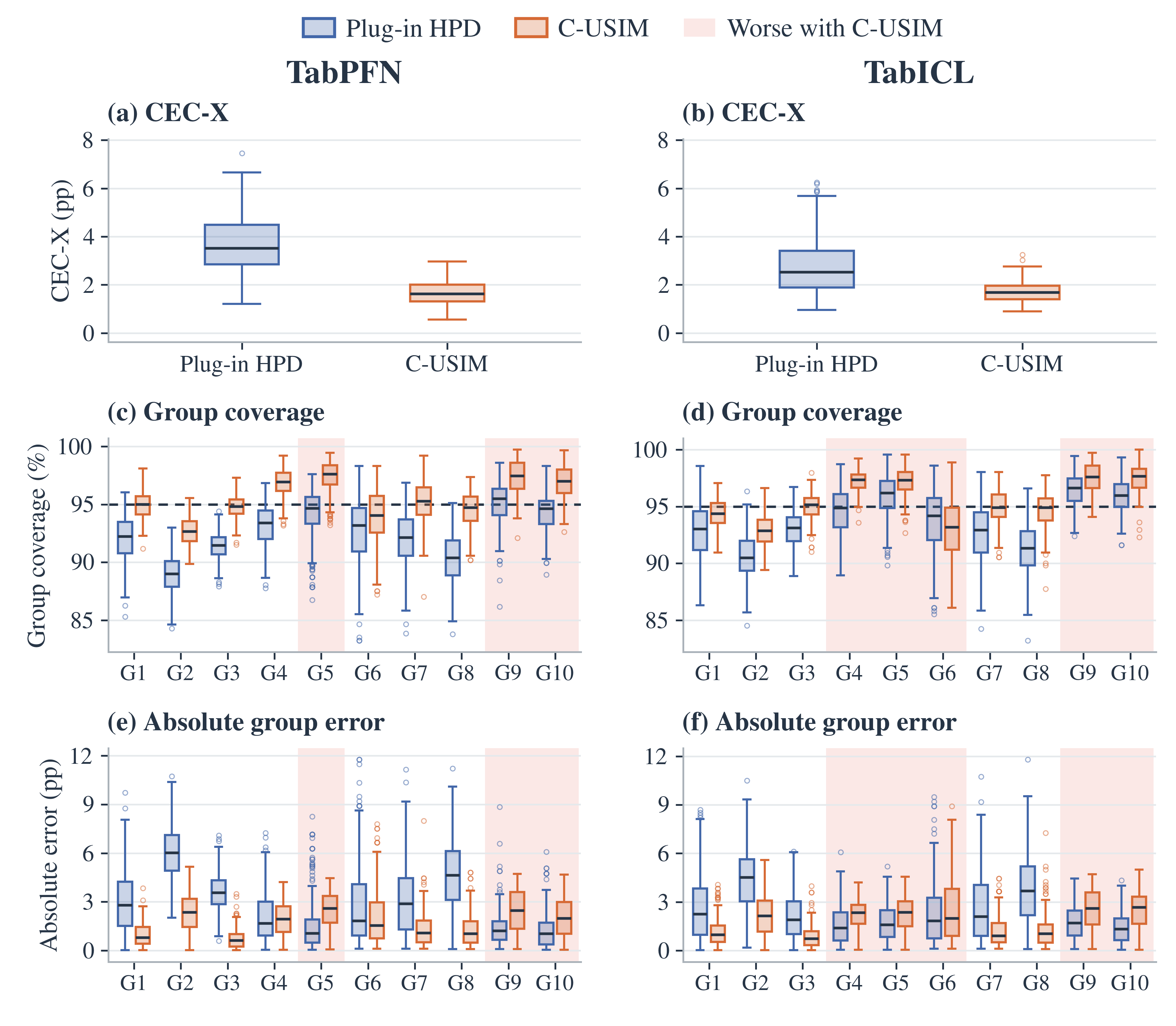}
\caption{Journal SJR across 200 seeds for Plug-in HPD(1536) and C-USIM(512+1024); methods and label budgets follow Table~\ref{tab:tabpfn-real-coverage}. (a,b) CEC-X, weighted by group size; (c,d) group coverage; (e,f) absolute group error. Errors are computed within each seed. Boxes show Q1--Q3 and medians, with 1.5-IQR whiskers and all outliers, not confidence intervals. Red bands mark higher mean absolute group error with C-USIM; dashed lines mark 95\% coverage. Errors are in percentage points. CEC-X shares a scale across models; group plots use model-specific scales with visual padding below zero.}
\label{fig:sjr-calibration}
\end{figure}

\paragraph{Variation across groups and groupings under a fixed label budget.} In the representative grouping, mean absolute error decreases in 7 of the ten groups for TabPFN and 5 for TabICL; 62.20\% and 55.35\% of seed--group pairs, respectively, move closer to 95\% coverage. Across the 30 groupings, C-USIM reduces absolute coverage error in 47.56--70.60\% of seed--group pairs for TabPFN and 40.64--61.00\% for TabICL. All nonempty groups remain in the reported averages. These aggregate gains coexist with higher errors in some covariate groups and longer mean prediction sets.

\begin{table}[htbp]
\centering
\caption{Journal SJR under a fixed budget of 1,536 labels at 95\% target coverage, averaged over 200 seeds. Each arrow denotes Plug-in HPD(1536) $\rightarrow$ C-USIM(512+1024). Absolute marginal and group gaps and CEC-X are computed within each seed before averaging, and are reported in percentage points (pp).}
\label{tab:tabpfn-real-coverage}
\label{tab:tabicl-real-cecx}
\small
\begin{tabular}{lcc}
\hline
Measure & TabPFN & TabICL \\
\cline{2-3}
& \multicolumn{2}{c}{Plug-in HPD(1536) $\rightarrow$ C-USIM} \\
\hline
Marginal coverage (\%) & $91.50 \rightarrow 94.83$ & $92.93 \rightarrow 94.87$ \\
Mean marginal gap (pp) & $3.499 \rightarrow 0.553$ & $2.241 \rightarrow 0.611$ \\
Mean group gap (pp) & $2.979 \rightarrow 1.760$ & $2.458 \rightarrow 1.856$ \\
CEC-X (pp) & $3.698 \rightarrow 1.662$ & $2.782 \rightarrow 1.707$ \\
Mean set length & $1.149 \rightarrow 1.453$ & $1.255 \rightarrow 1.540$ \\
\hline
\end{tabular}
\end{table}

\subsubsection{Additional real-world datasets}
\label{app:additional-realdata}

\paragraph{Datasets and experimental settings.} We add JP Anime from CARTE \citep{kim2024carte} and Allstate Claims Severity from OpenML (dataset 42571, version 1).\footnote{Allstate source: \url{https://www.openml.org/d/42571}.} JP Anime predicts the supplied natural logarithm of the anime score from ten features: genres, type, episodes, producers, studios, source, duration, content rating, start date, and end date. Eight features are categorical; episodes and duration are numeric. The prepared cohort retains the first row for each distinct feature profile. We use the source response without applying another logarithm, and report prediction-set lengths on this log-score scale. Allstate predicts the supplied claim loss from 116 categorical and 14 continuous anonymized features. Claim identifiers are excluded from the predictors, and repeated feature profiles with distinct claim identifiers remain separate observations. No response transformation is applied to Allstate.

\begin{figure}[htb]
\centering
\includegraphics[width=\linewidth]{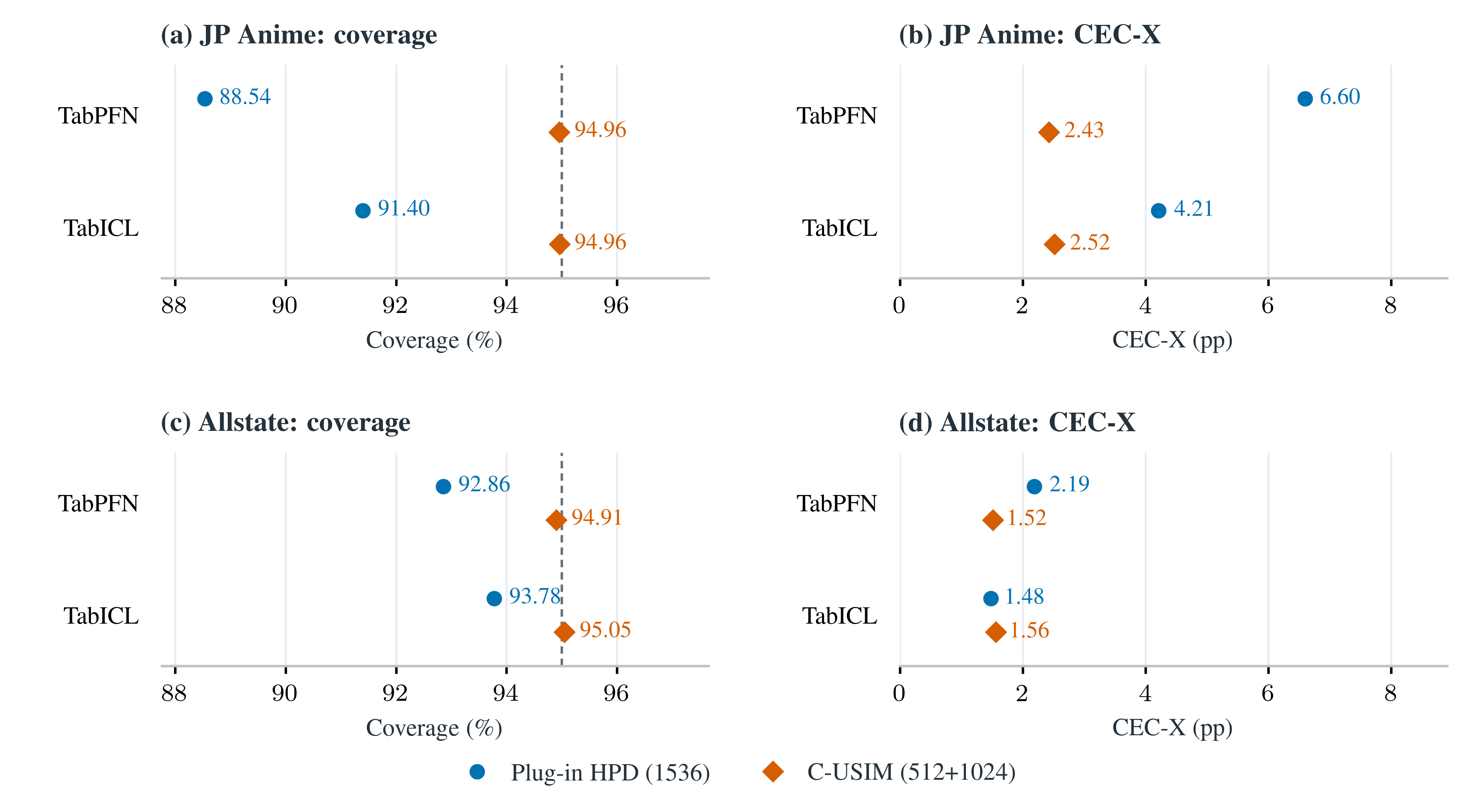}
\caption{Additional real-world examples under a total budget of 1,536 labels. Points are means over 50 seeds, not confidence intervals. (a,b) JP Anime; (c,d) Allstate Claims Severity. Dashed lines mark 95\% coverage. CEC-X uses the representative covariate grouping and is computed within each seed before averaging. Settings and detailed results are in Table~\ref{tab:additional-realdata-settings} and Appendix~\ref{app:additional-realdata}.}
\label{fig:additional-realdata}
\end{figure}

Table~\ref{tab:additional-realdata-settings} gives the additional sample counts. Both datasets use 50 run seeds, 12100--12149, fixed before the new evaluation, with the same model and label-budget settings as Journal SJR. For each fitted context, plug-in HPD processes the full test set in one prediction call, and C-USIM processes all 1,024 calibration covariates together with that test set. Calibration scores are computed anew for each fitted context and joint query table. For model inference, missing numeric values are imputed using context medians and missing categorical values receive a dedicated token. Groups use the same 30 combinations of group counts and clustering seeds as Journal SJR. Numeric variables are median-imputed and standardized, while categorical variables are mode-imputed and one-hot encoded; all preprocessing and cluster centers are fitted on the 512 validation covariates only. The representative grouping remains $K=10$ with clustering seed 1717.

\begin{table}[htbp]
\centering
\caption{Additional real-world datasets.}
\label{tab:additional-realdata-settings}
\small
\begin{tabular}{lrrrrl}
\hline
Dataset & Rows & Features & Categorical & Test rows & Response scale \\
\hline
JP Anime & 15,351 & 10 & 8 & 5,372 & $\ln(\mathrm{Score})$ \\
Allstate & 188,318 & 130 & 116 & 9,731 & Claim loss \\
\hline
\end{tabular}
\end{table}

Figure~\ref{fig:additional-realdata} summarizes the results, and Table~\ref{tab:additional-realdata-budget} reports the comparison between Plug-in HPD(1536) and C-USIM(512+1024) over 50 seeds per model. For both models on both datasets, C-USIM reduces mean absolute marginal coverage error to approximately 0.54--0.60 percentage points. On JP Anime, CEC-X decreases from 6.598 to 2.429 percentage points for TabPFN and from 4.211 to 2.521 for TabICL. On Allstate, it decreases from 2.186 to 1.517 for TabPFN. Mean prediction-set length increases in all four comparisons; lengths are measured on the dataset-specific response scales in Table~\ref{tab:additional-realdata-settings}.

\paragraph{Variation across groups and groupings.} In the representative grouping, mean absolute group error decreases in 7 of ten groups for TabPFN and 5 for TabICL on JP Anime, and in 8 and 5 groups, respectively, on Allstate. Across all 30 groupings, CEC-X decreases for both models on JP Anime and for TabPFN on Allstate. The Allstate TabICL comparison is more sensitive to grouping: CEC-X decreases in 17 of the 30 groupings. In the representative grouping, its CEC-X changes from 1.481 to 1.561 percentage points, while mean group error changes from 1.540 to 1.523. The group plots in Figures~\ref{fig:jp-anime-plugin512} and~\ref{fig:allstate-plugin512} retain all ten groups and place both plug-in baselines alongside C-USIM.

\begin{table}[htbp]
\centering
\caption{Additional real-world datasets under a fixed total budget of 1,536 labels, averaged over 50 seeds. Each arrow denotes Plug-in HPD(1536) $\rightarrow$ C-USIM(512+1024). Marginal and group errors are computed within each seed before averaging; group measures use the representative grouping. Length uses the first 256 fixed test inputs.}
\label{tab:additional-realdata-budget}
\small
\begin{tabular}{llcc}
\hline
Dataset & Measure & TabPFN & TabICL \\
\cline{3-4}
& & \multicolumn{2}{c}{Plug-in HPD (1536) $\rightarrow$ C-USIM} \\
\hline
JP Anime & Marginal coverage (\%) & $88.54 \rightarrow 94.96$ & $91.40 \rightarrow 94.96$ \\
 & Mean marginal gap (pp) & $6.461 \rightarrow 0.590$ & $3.598 \rightarrow 0.540$ \\
 & Mean group gap (pp) & $6.224 \rightarrow 2.770$ & $4.235 \rightarrow 2.837$ \\
 & CEC-X (pp) & $6.598 \rightarrow 2.429$ & $4.211 \rightarrow 2.521$ \\
 & Mean set length & $0.447 \rightarrow 0.565$ & $0.494 \rightarrow 0.539$ \\
\hline
Allstate & Marginal coverage (\%) & $92.86 \rightarrow 94.91$ & $93.78 \rightarrow 95.05$ \\
 & Mean marginal gap (pp) & $2.142 \rightarrow 0.601$ & $1.226 \rightarrow 0.540$ \\
 & Mean group gap (pp) & $2.240 \rightarrow 1.515$ & $1.540 \rightarrow 1.523$ \\
 & CEC-X (pp) & $2.186 \rightarrow 1.517$ & $1.481 \rightarrow 1.561$ \\
 & Mean set length & $5862.19 \rightarrow 6831.03$ & $6195.59 \rightarrow 6883.54$ \\
\hline
\end{tabular}

\end{table}

\subsection{Split-ratio sensitivity}
\label{app:split-ratio}

The sweep follows the synthetic protocol in Appendix~\ref{app:settings}, with allocations and seeds listed in Table~\ref{tab:experiment-settings}. Within each mechanism and seed, all allocations and both models share a labeled pool and independent test inputs: the first $n_{\mathrm{train}}$ pool observations form the context and the remainder form the calibration set.

\begin{table}[htbp]
\centering
\caption{Split-ratio sensitivity across all six selected mechanisms with 1,536 total labels. Entries are means over the same 50 seeds. Gap is the per-seed absolute deviation of marginal coverage from 95\%; gap and CCAD use percentage points (pp).}
\label{tab:split-ratio-all-cases}
\small
\begin{tabular}{lcccc}
\hline
& \multicolumn{2}{c}{TabPFN} & \multicolumn{2}{c}{TabICL} \\
Case & Gap (pp) & CCAD (pp) & Gap (pp) & CCAD (pp) \\
\cline{2-5}
& \multicolumn{4}{c}{1229:307 ($\simeq8{:}2$) $\rightarrow$ 512:1024 ($1{:}2$)} \\
\hline
\ref{dgp:1d-1} & $1.033 \rightarrow 0.554$ & $2.110 \rightarrow 2.152$ & $0.911 \rightarrow 0.440$ & $2.830 \rightarrow 3.381$ \\
\ref{dgp:1d-2} & $0.964 \rightarrow 0.497$ & $1.998 \rightarrow 2.093$ & $1.013 \rightarrow 0.561$ & $2.370 \rightarrow 2.541$ \\
\ref{dgp:1d-3} & $0.857 \rightarrow 0.546$ & $1.231 \rightarrow 1.379$ & $1.098 \rightarrow 0.552$ & $2.486 \rightarrow 3.104$ \\
\ref{dgp:md-1} & $1.026 \rightarrow 0.583$ & $2.271 \rightarrow 2.218$ & $1.119 \rightarrow 0.650$ & $2.569 \rightarrow 2.634$ \\
\ref{dgp:md-2} & $0.924 \rightarrow 0.500$ & $1.626 \rightarrow 1.543$ & $0.969 \rightarrow 0.640$ & $1.676 \rightarrow 1.606$ \\
\ref{dgp:md-3} & $1.223 \rightarrow 0.624$ & $1.784 \rightarrow 1.931$ & $1.223 \rightarrow 0.562$ & $2.045 \rightarrow 2.155$ \\
\hline
\end{tabular}
\end{table}

Coverage and CCAD follow Appendix~\ref{app:evaluation-measures}, with conditional coverage computed using the known CDF rather than response sampling. Mean-function RMSE is $\{256^{-1}\sum_i[\hat m(X_i)-m(X_i)]^2\}^{1/2}$, where $\hat m$ is the mean of the reconstructed predictive density and $m$ is the true conditional mean. Metrics are computed within each seed before averaging. Table~\ref{tab:split-ratio-all-cases} reports all six mechanisms, including cases with worse CCAD.

\subsection{Supplementary comparison with identical predictive densities}
\label{app:plugin512}

The main experiments compare Plug-in HPD(1536) with C-USIM(512+1024) under the same total label budget. Here, we additionally compare Plug-in HPD(512) with C-USIM to examine the effect of calibrating the cutoff while holding the reconstructed predictive densities fixed. This supplementary comparison uses different numbers of response labels: Plug-in HPD(512) uses only the 512 context labels, whereas C-USIM additionally uses 1,024 held-out calibration responses.

\paragraph{Settings and shared predictions.} Within each run, Plug-in HPD(512) and C-USIM share the context observations, model randomness, query covariates, and reconstructed densities. The plug-in regions use the boundary density level needed to attain at least 95\% model mass; C-USIM uses the 974th of the 1,024 calibration scores. Both use the finite-density reconstruction in Appendix~\ref{app:density-estimation} and the evaluation measures in Appendix~\ref{app:evaluation-measures}. The data splits and seeds follow Table~\ref{tab:experiment-settings}, with the additional real-world datasets specified in Table~\ref{tab:additional-realdata-settings}. Synthetic comparisons use the same 1,000 conditional response draws at each test input across all three configurations.

For Journal SJR, both 512-context configurations use the same joint table of 1,024 calibration and 9,731 test covariates in one prediction call. Plug-in HPD(512) uses the test densities from this shared output without using the calibration responses. Coverage uses all 9,731 test rows, and mean prediction-set length uses the first 256 inputs in the fixed test order for all three configurations. The inference and density reconstruction steps are shared; calibration changes the cutoff used to extract the set without fitting an additional model.

\subsubsection{Synthetic}

Table~\ref{tab:synthetic-plugin512} compares Plug-in HPD(512) and C-USIM, using the same reconstructed densities and conditional response draws in each run. Mean CCAD decreases from 5.053 to 1.992 percentage points for TabPFN and from 3.540 to 2.611 for TabICL when averaged equally across the six mechanisms. Mean CCAD decreases in all twelve model--mechanism combinations, and mean prediction-set length increases in each combination. Figure~\ref{fig:synthetic-plugin512} places this comparison alongside Plug-in HPD(1536), showing how the two context sizes affect the uncalibrated regions and how calibration changes coverage and length at the 512 context.

This comparison also clarifies the TabICL result discussed in Appendix~\ref{app:synthetic-results} on \ref{dgp:1d-1}: mean CCAD decreases from 4.313 to 3.385 percentage points after calibration, whereas Plug-in HPD(1536) has a lower CCAD of 2.961. Thus, the increase relative to the larger-context baseline cannot be attributed to threshold calibration alone.

\begin{table}[htbp]
\centering
\caption{Synthetic comparison using identical predictive densities from a 512-observation context. Each arrow denotes Plug-in HPD(512) $\rightarrow$ C-USIM(512+1024), averaged over ten seeds. C-USIM additionally uses 1,024 calibration responses.}
\label{tab:synthetic-plugin512}
\small
\begin{tabular}{llccc}
\hline
Model & Example & Coverage (\%) & CCAD (pp) & Mean length \\
\cline{3-5}
& & \multicolumn{3}{c}{Plug-in HPD(512) $\rightarrow$ C-USIM} \\
\hline
TabPFN & \ref{dgp:1d-1} & $92.95 \rightarrow 95.28$ & $2.947 \rightarrow 2.159$ & $0.618 \rightarrow 0.717$ \\
 & \ref{dgp:1d-2} & $90.24 \rightarrow 94.74$ & $5.074 \rightarrow 2.186$ & $0.805 \rightarrow 1.290$ \\
 & \ref{dgp:1d-3} & $92.11 \rightarrow 94.91$ & $3.268 \rightarrow 1.708$ & $0.516 \rightarrow 0.580$ \\
 & \ref{dgp:md-1} & $88.49 \rightarrow 95.33$ & $6.569 \rightarrow 2.201$ & $0.841 \rightarrow 2.100$ \\
 & \ref{dgp:md-2} & $88.42 \rightarrow 94.81$ & $6.714 \rightarrow 1.642$ & $2.281 \rightarrow 3.313$ \\
 & \ref{dgp:md-3} & $89.77 \rightarrow 95.07$ & $5.747 \rightarrow 2.054$ & $1.783 \rightarrow 2.706$ \\
\hline
TabICL & \ref{dgp:1d-1} & $92.77 \rightarrow 95.31$ & $4.313 \rightarrow 3.385$ & $0.643 \rightarrow 0.747$ \\
 & \ref{dgp:1d-2} & $91.98 \rightarrow 94.93$ & $3.915 \rightarrow 2.632$ & $0.970 \rightarrow 1.287$ \\
 & \ref{dgp:1d-3} & $92.74 \rightarrow 95.13$ & $4.058 \rightarrow 3.168$ & $0.617 \rightarrow 0.725$ \\
 & \ref{dgp:md-1} & $92.58 \rightarrow 95.43$ & $3.611 \rightarrow 2.644$ & $1.343 \rightarrow 1.901$ \\
 & \ref{dgp:md-2} & $93.57 \rightarrow 95.01$ & $2.222 \rightarrow 1.572$ & $2.833 \rightarrow 3.109$ \\
 & \ref{dgp:md-3} & $93.41 \rightarrow 95.17$ & $3.123 \rightarrow 2.266$ & $2.278 \rightarrow 2.609$ \\
\hline
\end{tabular}

\end{table}

\begin{figure}[htbp]
\centering
\includegraphics[width=\linewidth]{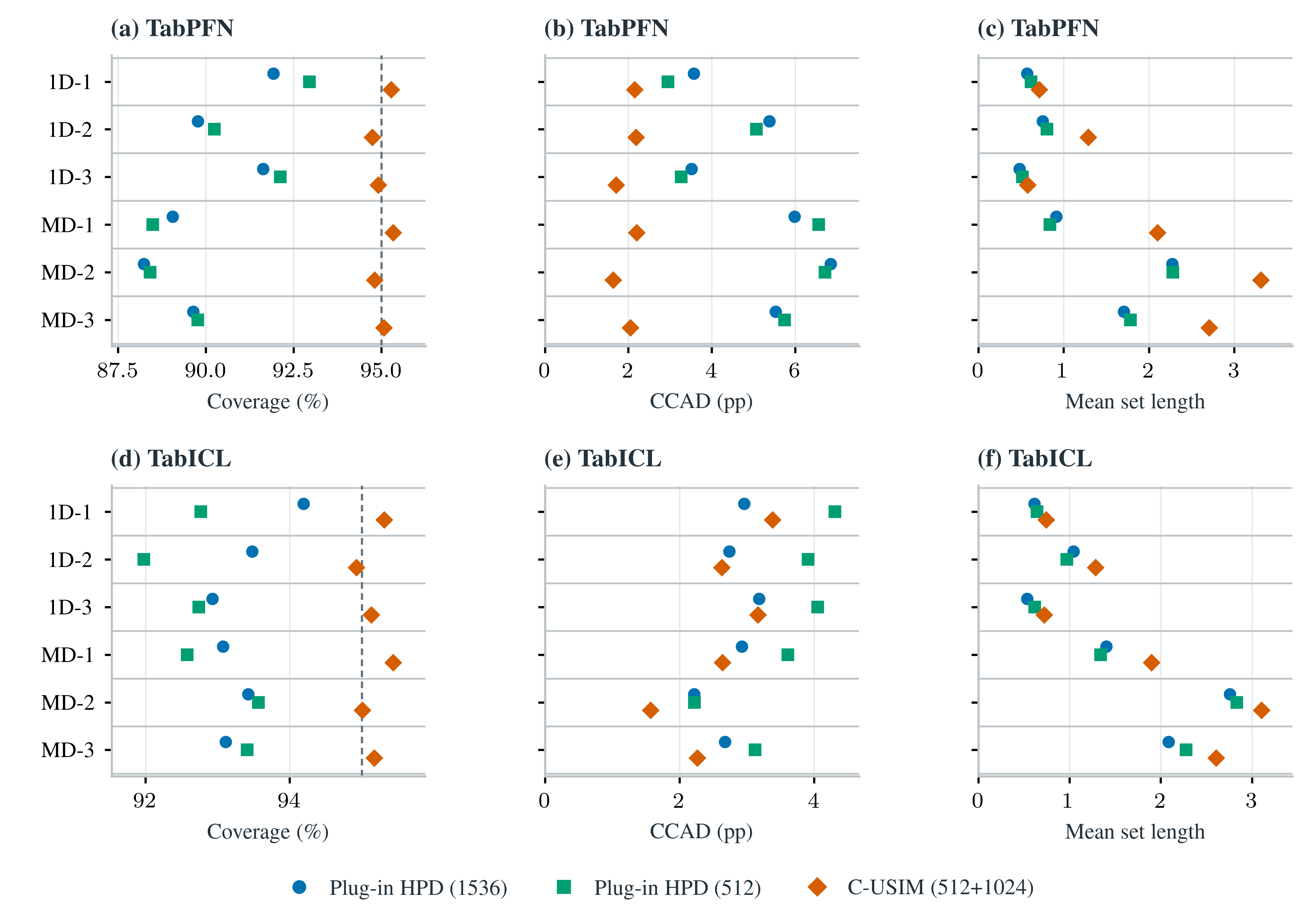}
\caption{Synthetic coverage, CCAD, and prediction-set length for all three configurations. Points are means over ten seeds, not confidence intervals. Horizontal lines separate mechanisms. The two 512-context methods share predictive densities. Dashed lines mark 95\% coverage. All interval components contribute to length, excluding gaps.}
\label{fig:synthetic-plugin512}
\end{figure}

\clearpage
\subsubsection{Journal SJR}

\begin{figure}[t]
\centering
\includegraphics[width=\linewidth]{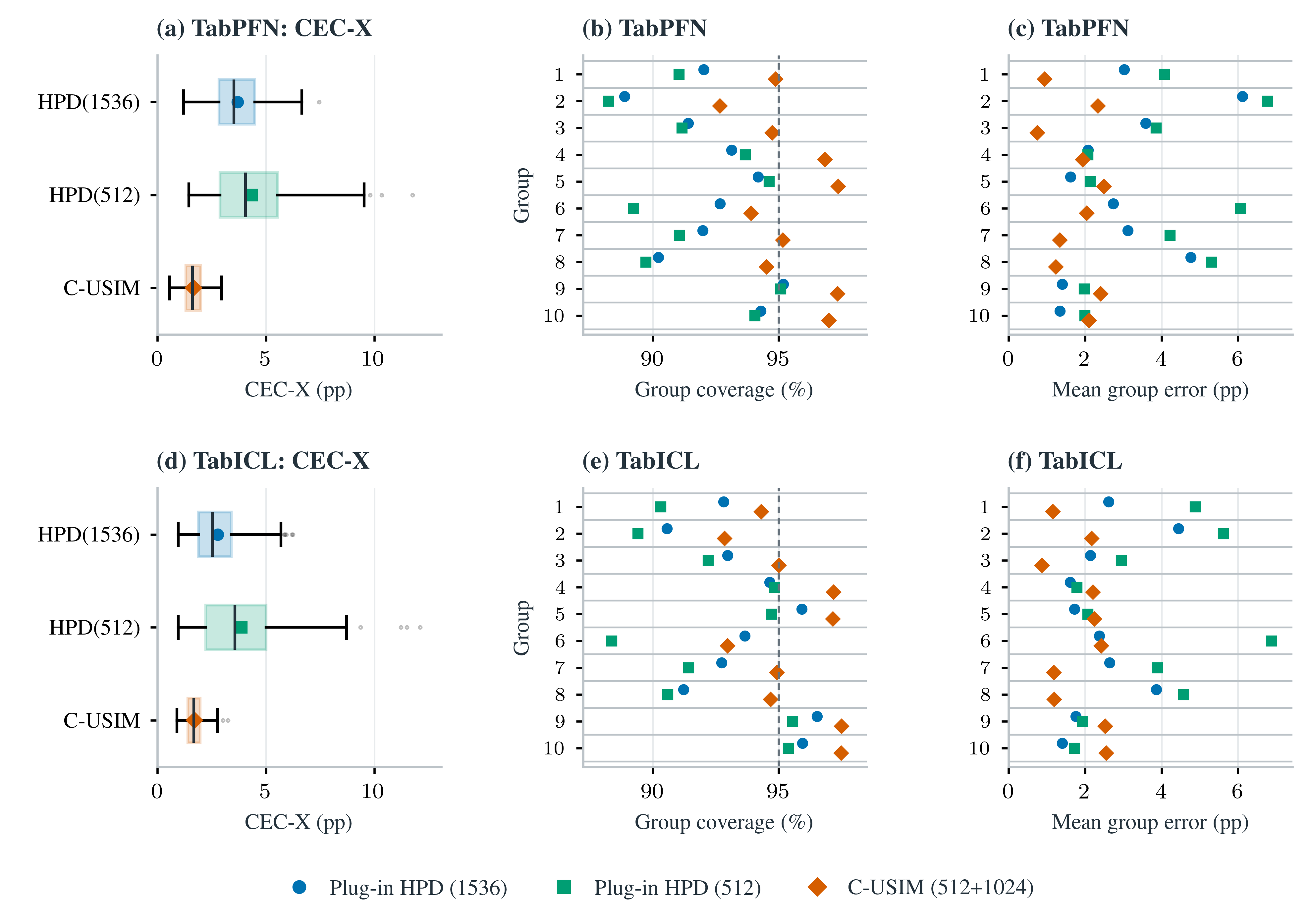}
\caption{Journal SJR with both plug-in baselines and C-USIM, using the representative grouping. (a,d) CEC-X over 200 seeds: boxes show quartiles and medians, whiskers extend to 1.5 IQR, all outliers are retained, and colored markers denote means. (b,e) Mean coverage in each group. (c,f) Mean within-seed absolute group error. Horizontal lines in the group plots separate groups. These summaries are not confidence intervals. Dashed lines mark 95\% coverage; all ten groups are retained.}
\label{fig:sjr-plugin512}
\end{figure}

\begin{table}[t]
\centering
\caption{Journal SJR using identical predictive densities from a 512-observation context, averaged over 200 seeds. Each arrow denotes Plug-in HPD(512) $\rightarrow$ C-USIM(512+1024). Both configurations use the same joint calibration/test query table; only C-USIM uses the 1,024 calibration responses. Group errors use the representative grouping. Coverage uses all 9,731 test rows; length uses the first 256 inputs in the fixed test order.}
\label{tab:sjr-plugin512}
\small
\begin{tabular}{lcc}
\hline
Measure & TabPFN & TabICL \\
\cline{2-3}
& \multicolumn{2}{c}{Plug-in HPD(512) $\rightarrow$ C-USIM} \\
\hline
Marginal coverage (\%) & $91.06 \rightarrow 94.83$ & $91.72 \rightarrow 94.87$ \\
Mean marginal gap (pp) & $3.964 \rightarrow 0.553$ & $3.431 \rightarrow 0.611$ \\
Mean group gap (pp) & $3.846 \rightarrow 1.760$ & $3.630 \rightarrow 1.856$ \\
CEC-X (pp) & $4.333 \rightarrow 1.662$ & $3.863 \rightarrow 1.707$ \\
Mean set length & $1.261 \rightarrow 1.453$ & $1.359 \rightarrow 1.540$ \\
\hline
\end{tabular}

\end{table}

Table~\ref{tab:sjr-plugin512} reports the results over 200 seeds per model. In the representative grouping, mean CEC-X decreases from 4.333 to 1.662 percentage points for TabPFN and from 3.863 to 1.707 for TabICL. Relative to Plug-in HPD(512), C-USIM lowers mean CEC-X in all 30 groupings for each model. In the representative grouping, mean absolute group error decreases from 3.846 to 1.760 percentage points for TabPFN and from 3.630 to 1.856 for TabICL. Mean error decreases in 7 of the ten groups for TabPFN and 6 for TabICL; 66.05\% and 61.50\% of seed--group pairs, respectively, move closer to the target. Across all groupings, these fractions range from 46.38\% to 70.00\% for TabPFN and from 43.26\% to 64.00\% for TabICL. All nonempty groups remain in the averages and Figure~\ref{fig:sjr-plugin512} shows all ten representative groups.

The discussion in Appendix~\ref{app:sjr-results} also applies here. Mean prediction-set length increases from 1.261 to 1.453 for TabPFN and from 1.359 to 1.540 for TabICL, as reported in Table~\ref{tab:sjr-plugin512}.

\subsubsection{Additional real-world datasets}

Table~\ref{tab:additional-realdata-plugin512} compares Plug-in HPD(512) with C-USIM using identical predictive densities. Both methods use the same joint calibration/test table for each seed; only C-USIM uses the 1,024 calibration responses. On JP Anime, mean CEC-X decreases from 4.681 to 2.429 percentage points for TabPFN and from 3.103 to 2.521 for TabICL. On Allstate, the corresponding changes are 2.667 to 1.517 and 2.277 to 1.561. CEC-X decreases in all 30 groupings for each of the four dataset--model pairs, and mean prediction-set length increases in each pair.

The Allstate TabICL result illustrates the distinction between the two comparisons: calibrating the same 512-context densities lowers CEC-X, while comparison with Plug-in HPD(1536) also changes the predictor and gives the smaller, grouping-sensitive differences described in Appendix~\ref{app:additional-realdata}.

\begin{table}[htbp]
\centering
\caption{Additional real-world datasets using identical predictive densities from a 512-observation context, averaged over 50 seeds. Each arrow denotes Plug-in HPD(512) $\rightarrow$ C-USIM(512+1024). Only C-USIM uses the additional 1,024 calibration responses. Group measures use the representative grouping; response scales and test sizes follow Table~\ref{tab:additional-realdata-settings}.}
\label{tab:additional-realdata-plugin512}
\small
\begin{tabular}{llcc}
\hline
Dataset & Measure & TabPFN & TabICL \\
\cline{3-4}
& & \multicolumn{2}{c}{Plug-in HPD (512) $\rightarrow$ C-USIM} \\
\hline
JP Anime & Marginal coverage (\%) & $91.05 \rightarrow 94.96$ & $94.01 \rightarrow 94.96$ \\
 & Mean marginal gap (pp) & $3.974 \rightarrow 0.590$ & $1.853 \rightarrow 0.540$ \\
 & Mean group gap (pp) & $4.697 \rightarrow 2.770$ & $3.330 \rightarrow 2.837$ \\
 & CEC-X (pp) & $4.681 \rightarrow 2.429$ & $3.103 \rightarrow 2.521$ \\
 & Mean set length & $0.486 \rightarrow 0.565$ & $0.522 \rightarrow 0.539$ \\
\hline
Allstate & Marginal coverage (\%) & $92.43 \rightarrow 94.91$ & $93.03 \rightarrow 95.05$ \\
 & Mean marginal gap (pp) & $2.568 \rightarrow 0.601$ & $2.001 \rightarrow 0.540$ \\
 & Mean group gap (pp) & $2.646 \rightarrow 1.515$ & $2.279 \rightarrow 1.523$ \\
 & CEC-X (pp) & $2.667 \rightarrow 1.517$ & $2.277 \rightarrow 1.561$ \\
 & Mean set length & $5935.33 \rightarrow 6831.03$ & $6145.43 \rightarrow 6883.54$ \\
\hline
\end{tabular}

\end{table}

\clearpage
\begin{figure}[p]
\centering
\includegraphics[width=\linewidth,height=0.39\textheight,keepaspectratio]{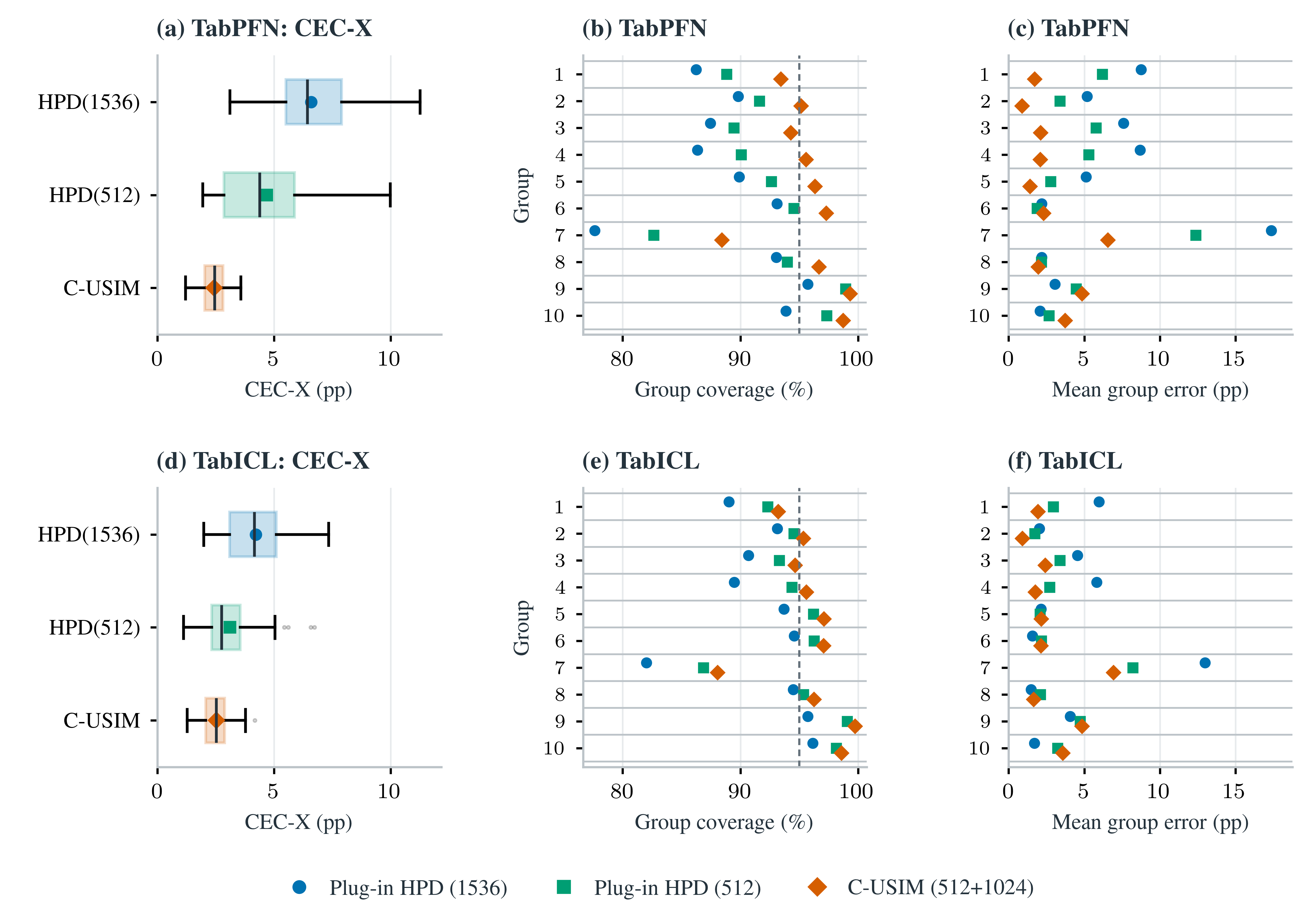}
\caption{JP Anime with both plug-in baselines and C-USIM, using the representative grouping. (a,d) CEC-X over 50 seeds: boxes show quartiles and medians, whiskers extend to 1.5 IQR, all outliers are retained, and colored markers denote means. (b,e) Mean group coverage. (c,f) Mean within-seed absolute group error. Horizontal lines separate groups; group 9 has 15 test observations. Dashed lines mark 95\% coverage. These summaries are not confidence intervals.}
\label{fig:jp-anime-plugin512}
\end{figure}

\begin{figure}[p]
\centering
\includegraphics[width=\linewidth,height=0.39\textheight,keepaspectratio]{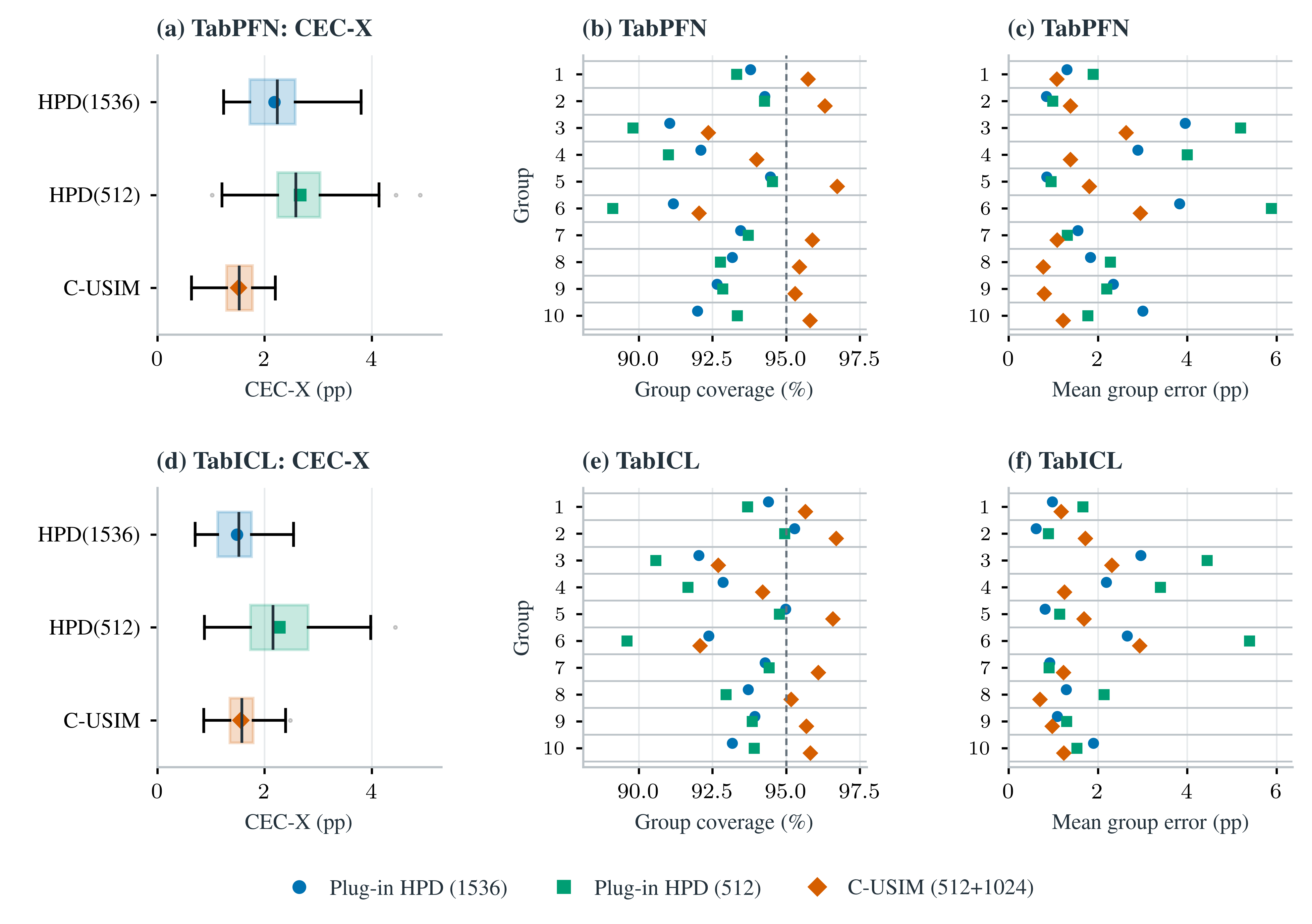}
\caption{Allstate Claims Severity with both plug-in baselines and C-USIM, using the representative grouping. (a,d) CEC-X over 50 seeds: boxes show quartiles and medians, whiskers extend to 1.5 IQR, all outliers are retained, and colored markers denote means. (b,e) Mean group coverage. (c,f) Mean within-seed absolute group error. Horizontal lines separate all ten groups; dashed lines mark 95\% coverage. These summaries are not confidence intervals.}
\label{fig:allstate-plugin512}
\end{figure}

\end{document}